\newtheorem{theorem}{Theorem}[section]
\newtheorem{definition}[theorem]{Definition}
\newtheorem{corollary}[theorem]{Corollary}
\newtheorem{lemma}[theorem]{Lemma}
\newtheorem{example}[theorem]{Example}
\newtheorem{fact}{Fact}[section]
\newtheorem{remark}[theorem]{Remark}
\renewcommand{\Pr}[2]{\mathbb{P}_{#1}\left[  #2 \right]}
\newcommand{\E}[2]{\mathbb{E}_{#1}\left[  #2 \right]}
\title{Algorithm Design and Stronger Guarantees for \\ the Improving Multi-Armed Bandits Problem}
\author{Avrim Blum\footnote{Toyota Technological Institute at Chicago}, Marten Garicano\footnote{University of Chicago}, Kavya Ravichandran\footnote{Toyota Technological Institute at Chicago}, Dravyansh Sharma\footnote{IDEAL Institute, Toyota Technological Institute at Chicago}}
\date{}
\begin{document}
\maketitle

\begin{abstract}
    The improving multi-armed bandits problem is a formal model for allocating effort under uncertainty, motivated by scenarios such as investing research effort into new technologies, performing clinical trials, and hyperparameter selection from learning curves.
 Each pull of an arm provides reward that increases monotonically with diminishing returns. A growing line of work has designed algorithms for improving bandits, albeit with somewhat pessimistic worst-case guarantees. Indeed, strong lower bounds of $\Omega(k)$ and $\Omega(\sqrt{k})$ multiplicative approximation factors are known for both deterministic and randomized algorithms (respectively) relative to the optimal arm, where $k$ is the number of bandit arms. In this work, we propose two new parameterized families of bandit algorithms and bound the sample complexity of learning the near-optimal algorithm from each family using offline data. We also perform empirical evaluations on standard hyperparameter tuning benchmarks. The first family we define includes the optimal randomized algorithm from prior work. We show that an appropriately chosen algorithm from this family can achieve stronger guarantees, with optimal dependence on $k$, when the arm reward curves satisfy additional properties related to the strength of concavity. Our second family contains algorithms that both guarantee best-arm identification on well-behaved instances and revert to worst-case guarantees on poorly-behaved instances.\looseness-1 
\end{abstract}

\section{Introduction}

The multi-armed bandits problem has numerous practical applications, and a large body of literature is devoted to studying its various aspects. A natural use case is for modeling situations in which investment in an option increases its payoff. Consider the development of a new technology, where investing in research increases its efficacy, albeit with diminishing returns. Similarly, for machine learning models, more training time (e.g.\ more epochs in stochastic gradient descent) leads to increased training accuracy. The {\em improving multi-armed bandits} (IMAB) framework, originally formalized by \cite{heidari_tight_nodate, patil_mitigating_2023}, captures this setting.\looseness-1 \par

Recent work \cite{pmlr-v272-blum25a} has developed nearly-tight approximation guarantees for  this problem. Formally, the problem consists of $k$ bandit arms, each of which has associated with it a reward function that increases the more the arm is pulled. Work on the problem has traditionally assumed that the reward functions are concave (i.e., satisfy diminishing returns), and lower bounds  involve at least some arms that have minimally concave (i.e., linear) reward functions. However, in many practical settings, we would expect the reward functions to not only be concave but also satisfy some stronger condition on the growth rate. In this work, we study the problem of designing algorithms for the improving bandits problem that achieve stronger performance guarantees on more benign problem instances. In particular, we design families of algorithms parameterized by some parameter(s) $\alpha\,,$ and we wish to learn the ``best'' algorithms from this family.\looseness-1 \par

Consider the example of tuning hyperparameters such as learning rates for neural networks. Here each bandit arm corresponds to a value of the hyperparameter, an arm pull corresponds to running an additional training epoch for the corresponding value of the hyperparameter, and the reward function corresponds to the learning curves that capture the training accuracy as a function of the epochs (for the different hyperparameter values). Often the number of arms is very large (e.g.\ a grid of multi-dimensional hyperparameters), so existing approximation factors may be too pessimistic here. Furthermore, in many practical settings, we have access to historical data consisting of learning curves for training similar models on related tasks or datasets (similarly, we may have access to past clinical records, or data regarding click-through-rates for online advertising and recommendation). We can use these related previously-seen tasks to design our algorithm for the current instance. Formally, we assume we have access to multiple IMAB instances drawn from an unknown distribution. We seek to perform as well as the best parameter $\alpha$ in the defined algorithm family, on average over the distribution.\looseness-1

In this work, we first develop stronger approximation guarantees that depend on the strength of concavity of the reward functions. Our guarantees are achieved by a family of algorithms, parameterized by a parameter $\alpha$ that corresponds to the strength of concavity. We show that by setting $\alpha$ appropriately, we achieve the optimal approximation guarantees for every strength of concavity. In other words, if we know how ``nice'' our improving bandits instance is, we can use the appropriate algorithm from our family. On the other hand, if we do not have this information, we resort to a data-driven approach to learn the best algorithm parameter from the data. We obtain bounds on the sample complexity, that is, the number of IMAB instances sampled from the distribution that suffice to learn the best algorithm. \par

Next, we turn our attention to the best-arm identification (BAI) task. Approaches from prior work either guarantee that the best arm is exactly recovered on a sufficiently ``nice'' instance (using a notion of niceness that is different from the strength of concavity~\cite{mussi2024best}), or select an arm such that the reward of the selected arm is a good approximation to the reward of the best arm on any (worst-case) instance~\cite{pmlr-v272-blum25a}. However, the former may suffer from sub-optimal approximation ratios on more challenging instances, while the latter may fail to recover the exact best arm on the nicer instances. We propose a hybrid algorithm which resolves this gap in the literature and obtains a best-of-both-worlds guarantee. That is, on a nice instance, our algorithm will recover the best arm, while still guaranteeing the optimal approximation factor (up to constants) on the worst-case instance. We further show how to tune the parameters of our hybrid algorithm to obtain the near-optimal parameters on typical instances for a fixed problem domain, by giving bounds on the sample complexity of data-driven algorithm design.

\subsection{Our Contributions}
\begin{enumerate}
[leftmargin=*,topsep=0pt,partopsep=1ex,parsep=1ex]\itemsep=-4pt
    \item We introduce a parameter $\beta$ to measure the ``strength'' of concavity of reward functions and develop algorithms with optimal approximation ratios for each $\beta$ (Sections~\ref{subsec:algfam-ptrr} and \ref{sec:sharper CR}). For parameter $\beta\in(0,1]$, we show that the optimal approximation ratio is $O(k^{\beta/(1+\beta)})$, which gives an improvement over the worst-case optimal bounds of $O(\sqrt{k})$ from prior work whenever $\beta<1$.\looseness-1
    \item Our algorithms that achieve the optimal guarantees constitute a parameterized family. In Section~\ref{sec:sample-complexity}, we show how to tune the algorithm parameter, given access to similar instances drawn from a fixed but unknown distribution over IMAB instances. We employ techniques from data-driven algorithm design to bound the sample complexity of configuring our algorithm given reward curves of ``training'' instances. 
    As in typical data-driven algorithm design settings, our results are optimal on average over the distribution. However, we can additionally reason about instances on which we can get strong per-instance guarantees due to the relationship between our algorithm family and sufficient conditions for stronger guarantees.\looseness-1 
    
    Our approach of designing a parameterized family of assumptions (that includes worst-case instances for a fixed value of the parameter) and a corresponding tunable family of algorithms that give optimal algorithms for each value of the assumption parameter is novel in the context of data-driven algorithm design, and may be useful to design algorithms with powerful expected-case as well as per-instance worst-case guarantees in the context of other algorithm design problems.
    \item In Section~\ref{sec:bothworlds}, we study best-arm identification (BAI) for improving bandits. Previous literature has a gap for this problem. Namely, there are algorithms that achieve exact BAI on certain ``nice'' instances but with sub-optimal worst-case performance~\cite{mussi2024best}, and the algorithms that achieve the near-optimal worst-case approximation~\cite{pmlr-v272-blum25a} fail to identify the best arm on these easier instances. In Sections~\ref{sec:hybriddef} and \ref{sec:hybridguarantees}, we propose a hybrid approach that switches between an algorithm that explores many arms and our parameterized algorithm family above and obtains best-of-both-worlds guarantees. We also  bound the sample complexity of simultaneously tuning the switching time of our hybrid approach and the concavity-strength  parameter.
\end{enumerate}

\textbf{Related Work.} We improve the tight worst-case bounds on $\tilde{\Theta}(\sqrt{k})$ on the competitive ratio of IMAB by exploiting the strength of concavity of instances. Prior work \cite{metelli_stochastic_2022,mussi2024best} also gives regret bounds for more benign IMAB instances. We develop algorithms that achieve a best-of-both-worlds guarantee by simultaneously achieving low regret on benign instances and asymptotically optimal competitive ratios on worst-case instances. We also extend techniques from the data-driven algorithm design framework~\cite{balcan2020data,sharma2025offline} to the IMAB setting to bound the sample complexity to tuning the parameters in our algorithms. See Appendix \ref{appendix:additional-related} for a more detailed discussion on the related prior literature.

\section{Preliminaries}
\label{sec:preliminaries}
We formally define the improving multi-armed bandits (IMAB) framework introduced in \cite{heidari_tight_nodate}. 

\begin{definition} \label{defn:imab}
    An instance $I \in \mathcal{I}$ of the improving multi-armed bandits problem consists of $k$ arms, each of which has associated with it a reward function $f_i$ that is nondecreasing 
    as a function of $t_i\,,$ which is the number of times that arm has been pulled so far. Following \cite{heidari_tight_nodate, patil_mitigating_2023, pmlr-v272-blum25a}, we further assume the arm rewards follow {\em diminishing-returns} (also referred to as concavity of reward functions), i.e., $f_i(t+1)-f_i(t) \le f_i(t)- f_i(t-1)$ for each reward function $f_i\,.$\looseness-1
\end{definition}

\noindent 
 Let $f^*$ denote the best arm at horizon $T$ (known) in terms of cumulative reward.
Let
 $\text{OPT}_t := \sum_{n=1}^t f^*(n).$ Note that the optimal strategy for an instance of this problem is to play the arm with highest cumulative reward for all time.

Typically, in bandits problems, we are interested in minimizing regret compared to an optimal policy, i.e., the difference between the reward due to the used policy and the optimal policy. 
For this problem there are simple instances that show that getting non-trivial regret is impossible, 
and therefore we instead study the competitive ratio of the reward. 


\begin{definition}
    Suppose on a fixed instance an algorithm accrues reward $ALG$ and the reward of the optimal policy is $OPT.$ Then, an algorithm is said to achieve competitive ratio $c$ if $OPT/ALG \le c\,$ for every IMAB instance.\looseness-1
\end{definition}

\noindent Prior work provides upper bounds (algorithms) and lower bounds (hard instances) for this problem when optimizing for competitive ratio assuming the reward functions are concave. In particular, \cite{patil_mitigating_2023} consider deterministic algorithms and show tight upper and lower bounds at $\Theta(k)\,.$ In a recent work, \cite{pmlr-v272-blum25a} show that randomization helps achieve a sharper $O(\sqrt{k}\log k)$ competitive ratio ($O(\sqrt{k})$  when $f^*(T)$ is known to the algorithm), which they complement with an $\Omega(\sqrt{k})$ lower bound.

\paragraph{Data-driven Algorithm Design Perspective and Problem Statement.}
In Sections~\ref{sec:sample-complexity} and \ref{sec:hybridcomplexity}, we take a {\em data-driven algorithm design} perspective, i.e., we show that we can use samples from a distribution over instances to {\em adapt} to the setting at hand and get better guarantees. First, we describe the context and motivation for this perspective. 

\begin{figure}
    \centering
    \includegraphics[width=0.9\linewidth]{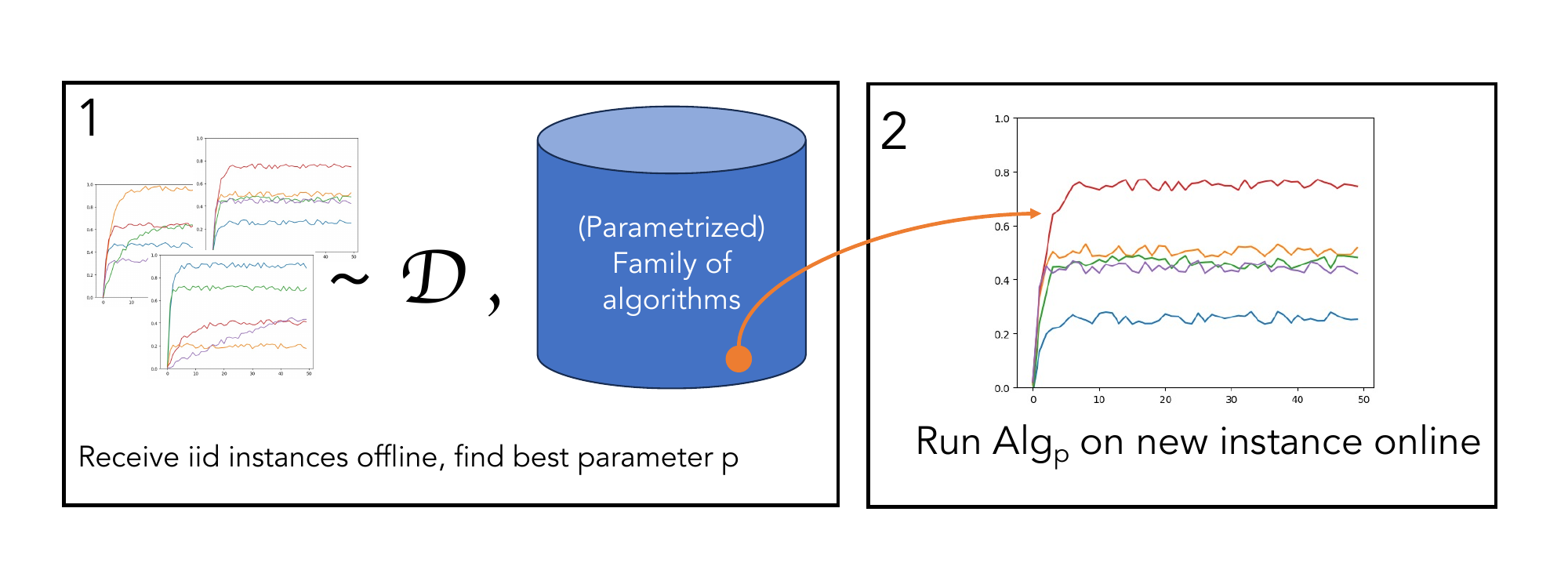}
    \caption{This figure summarizes the framework studied in this work. In phase 1 (left), the learning algorithm receives instances sampled iid from some distribution. It uses its {\em offline} access to these instances to select the best algorithm from a parameterized family of algorithms. This algorithm is the version with parameter $p\,.$ In the phase 2 (right), the algorithm with the value of the parameter set to $p$ is run {\em online} on a new instance. Thus, we can see why we call this framework ``offline-to-online transfer.''}
    \label{fig:placeholder}
\end{figure}

\paragraph{Framework.}
Suppose that we are in a setting where we must train and deploy a new machine learning model for a prediction problem each month. Each month, we must select hyperparameters and then train a deployable model with those hyperparameters. How can we leverage historical data to solve the hyperparameter selection problem? We neither want to assume that the best model on a given day is the best model on another day, nor even that the best hyperparameters for one day are the best hyperparameters for another day. Instead, let us simply assume that the algorithm we use for hyperparameter selection should be similar across days, and we wish to learn the best such algorithm. Now suppose we have historical data from January to June, and we wish to use that {\em offline} data to learn which hyperparameter selection algorithm is most appropriate to deploy in July. Further, assume we can inspect the available data {\em completely}, i.e., for each model and hyperparameter setting trained in January through June, we have access to the {\em full} learning curves. Then, our goal will be to learn a good algorithm from a family of algorithms. In this work, we identify relevant families of algorithms to solve the problem of {\em identifying the best bandit algorithm for future hyperparameter tuning} and study what happens when we choose an algorithm from this family by using empirical risk minimization (ERM) with respect to the offline instances. Finally, with this ``best'' algorithm identified, we deploy it on future hyperparameter tuning problems. We conclude that if future instances are drawn from the same distribution as the training instances, then in expectation over the distribution, we will do well on the future instances.\looseness-1

Now, we formally define the preliminaries for our study of the improving multi-armed bandits problem in the data-driven algorithm design framework following prior work on stochastic bandits~\cite{sharma2025offline}.
We consider loss functions of an algorithm in an algorithm family on an instance, defining $l(I, \pmb{\alpha})$ as the loss of the algorithm with the parameters fixed to $\pmb{\alpha} \in \mathcal{P}$ on instance $I\,.$ We also define the dual of a loss function.\looseness-1


\begin{definition}
    A loss function $\ell: \mathcal{I} \times \mathcal{P} \rightarrow \mathbb{R}$ is {\em piecewise-$H$-bounded} if it is piecewise-constant and has a  bounded range $[0, H]$.
    We fix the instance and consider the loss on a fixed instance as a function of the algorithm. Since the algorithm arises from a parameterized family, the loss is  a function of the parameter vector. In particular, the {\em dual of the loss function} is given by: $l_T^{I}(\pmb{\alpha}) = \ell_T(I, \pmb{\alpha})\,.$ 
\end{definition}

\noindent Finally, we formally define the problem we study. 

\begin{definition}[Hyperparameter Transfer Setting] \label{defn:hypertransf}
    Suppose we have a distribution $\mathcal{D}$ over $\mathcal{I}\,,$ the space of instances $I$ of the improving multi-armed bandits problem as defined in Definition~\ref{defn:imab}. Consider a family of algorithms parameterized by a vector of parameters $\pmb{\alpha} \in \mathcal{P}$. Finally, consider a piecewise-$H$-bounded loss, $l.$ We achieve sample complexity $N(\epsilon,\delta)$ in the {\em Hyperparameter Transfer Setting} if, for any $\epsilon,\delta\in(0,1)$, given $N(\epsilon,\delta)$ instances sampled iid from $\mathcal{D}\,,$ we identify 
    $
    \hat{\pmb{\alpha}}$ such that with probability $1-\delta$,  
    \begin{equation} \label{eqn:goal} 
    \left |\E{I \sim \mathcal{D}}{l_T(I, \hat{\pmb{\alpha}})} - \min_{\pmb{\alpha} \in \mathcal{P}} \E{I \sim \mathcal{D}}{l_T(I, \pmb{\alpha})} \right | < \epsilon\,.
    \end{equation}
\end{definition}

\section{Using Strength of Concavity
}\label{sec:intro3}

In this section, we introduce a family of algorithms designed to address the question of providing sharper bounds for the improving multi-armed bandits problem. The design of this family is motivated by two factors: (1) the family must contain the algorithm that is known to provide the worst-case near-optimal guarantee of \cite{pmlr-v272-blum25a}; (2) there must exist an algorithm in the family that performs better than the general worst-case if instances satisfy a stronger regularity condition. In Section~\ref{subsec:algfam-ptrr}, we define the family. Then, in Section~\ref{sec:sharper CR}, we define a strengthening of concavity and show that a variant of \cite{pmlr-v272-blum25a} achieves optimal competitive ratio guarantees on instances that satisfy the strengthened property. In Section~\ref{sec:sample-complexity}, we show that we can learn the best algorithm from this family for instances arising from a distribution with polynomially-many samples. Finally, in Section ~\ref{sec:empirical}, we present empirical evidence on real learning-curve data that different instances prefer different values of $\alpha$, corroborating the intuition that adapting the choice of this parameter to data is valuable.\par

The data-driven perspective we take in this section differs from standard worst case guarantees in several ways. On the one hand, we do not have to make stronger assumptions to get the guarantees and can instead {\em adapt} to the distribution of data. On the other hand, we must be in a setting where such a distributional assumption holds and we have access to other instances from the distribution. \looseness-1

\subsection{Algorithm Family}
\label{subsec:algfam-ptrr}

\noindent 
 Each algorithm in the family, $\textit{PTRR}_\alpha$ for fixed $\alpha$, is a slightly modified version of Algorithm $1$ in \cite{pmlr-v272-blum25a}. While playing arm $i$, we keep pulling it as long as 
$
f_i(t_i)\geq m\Bigl(\frac{t_i}{\tau}\Bigr)^\alpha,
$
where $\tau$ is an internal horizon parameter 
(set to $T-k$ when $T$ is known), $m$ is an approximation of the maximum final pull, and $t_i$ denotes the number of pulls of $i$ so far. 
When the inequality first fails, we abandon $i$ and move to a uniformly random new arm, repeating until time $T$. In this section, we focus on the cumulative reward $R$ accumulated by the algorithm. We will study the estimated best arm $\hat{i}$ in Section~\ref{sec:bothworlds}.\looseness-1

\begin{algorithm}[H]
\caption{$\textit{PTRR}_\alpha$} \label{alg:ptrr}
\begin{algorithmic}[1]
            \REQUIRE estimated max final pull $m$,  internal horizon  $\tau$
\STATE $t \gets 0$, $R \gets 0$, $S \gets \{1, \ldots, k\}$
\WHILE{$t < T$}
  \STATE \textbf{sample} $i$ uniformly from $S$
  \STATE $S \gets S \setminus \{i\}$,  $t_i \gets 0$
  \WHILE{$f_i(t_i) \geq m\Bigl(\frac{t_i}{\tau}\Bigr)^\alpha$}
    \STATE \textbf{pull} arm $i$
    \STATE $t_i \gets t_i + 1$, $t \gets t + 1$, $R \gets R + f_i(t_i)$
  \ENDWHILE
\ENDWHILE
\RETURN $R, \hat i\gets \text{argmax}_i f_i(t_i)$ 
\end{algorithmic}
\end{algorithm}

\begin{figure}
    \centering
    \begin{subfigure}{0.45\textwidth}
    \includegraphics[width=0.9\textwidth]{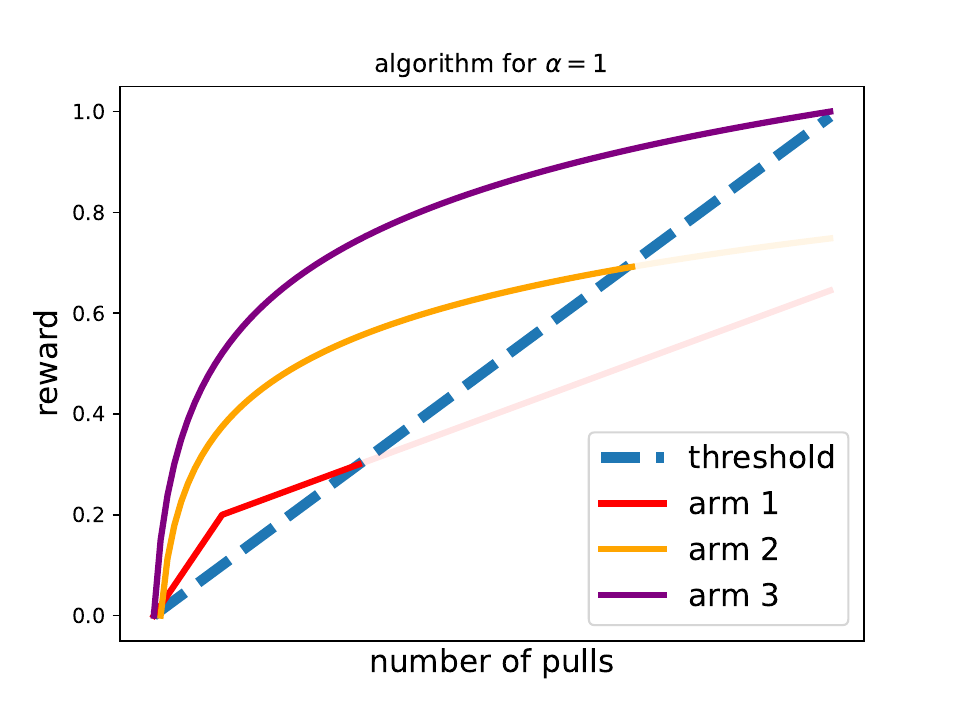} 
    \end{subfigure}
    \begin{subfigure}{0.45\textwidth}
        \includegraphics[width=0.9\textwidth]{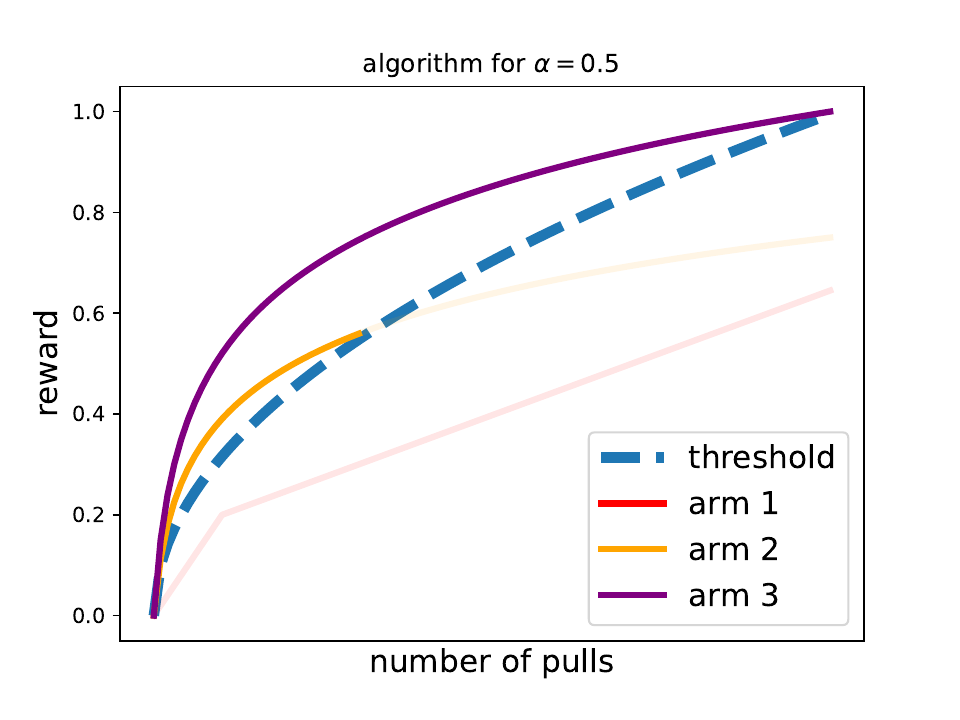} 
    \end{subfigure}
    \caption{This figure shows a snapshot of running PTRR with differing $\alpha$s on the same instance. In the case where $\alpha = 1\,,$ recovering the algorithm of \cite{pmlr-v272-blum25a}, we see that each arm is discarded once it crosses the linear lower bound. Note that the best arm (purple) is never discarded. On the right, we see the run of the algorithm when $\alpha=0.5$. Here, since the best arm satisfies the CEE with $\beta=0.5\,,$ we never discard the best arm. However, we discard worse arms faster. Thus, we can see the motivation for using $PTRR_\alpha$ for the largest possible $\alpha$ such that we still ensure we do not discard the best arm.}
    \label{fig:alg-snapshot}
\end{figure}

\begin{definition} \label{defn:alg-fam}
    Define the family of algorithms $\textit{PTRR}$ as $ \{ \textit{PTRR}_\alpha: \alpha \in (0,1] \},$ where $\textit{PTRR}_\alpha$ ($\alpha$-Power-Thresholded Round Robin) is Algorithm~\ref{alg:ptrr}.
\end{definition}

\subsection{Sharper Competitive Ratio
}\label{sec:sharper CR}

We will now argue that it is natural to consider the simple one–parameter family \textit{PTRR} from  Definition~\ref{defn:alg-fam}.
First, we observe that our family contains the algorithm from \cite{pmlr-v272-blum25a} that is known to be optimal in the unrestricted case. We then show that there are algorithms in \textit{PTRR} that improve the competitive ratio on every instance that satisfies a mild growth condition.
A matching lower bound shows that these algorithms are optimal on such instances.

For simplicity, we assume that both $T$ and $f^*(T)$ are known to the algorithm. In reality, our analysis---mirroring \cite{pmlr-v272-blum25a}---only requires that $m$ lies within a constant factor of $f^*(T-k)$, which we achieve by setting $m:=\frac{T-k}{T} \cdot f^*(T)$. We can avoid this requirement altogether by spending half our time learning $m$, thereby incurring only an extra $O(\log k)$ factor in our competitive ratio. This method is described in detail in section $5$ of \cite{pmlr-v272-blum25a}. When $T$ is also unknown, we embed the same half-explore/half-exploit method into a standard doubling schedule, preserving this $O(\log k)$ overhead. A full proof is included in Appendix \ref{appendix:unknownT}.\looseness-1

We first define a slightly stronger version of concavity.\looseness-1

\begin{definition}[Per-arm $\beta$-Lower Envelope, LE($\beta$)]\label{def:per-arm floor}
For any $\beta \in (0,1]$, we say that arm $i$ satisfies LE($\beta)$ if\looseness-1 $$f_i(t) \geq f_i(T) \left(\frac{t}{T}\right)^\beta \quad \text{for all } t \leq T.$$
\end{definition}

\begin{definition}[Concavity Envelope Exponent, $\beta_I$]\label{def:inst}
For every instance $I$,  define its Concavity Envelope Exponent $\beta_{I}$ as\looseness-1
$$
\beta_{I} \coloneqq \inf \left\{ \beta \in (0,1] : \text{ every arm in $I$ satisfies } \mathrm{LE}(\beta) \right\}.
$$
\end{definition}

\begin{remark}
    Note that smaller $\beta_I$ indicate larger early rewards, making learning easier. Moreover, note that every non-decreasing concave function with $f(0) \geq 0$ satisfies LE$(1)$, which implies that $1$ is an upper bound on the CEE. When $\beta_I$ approaches $1$ (an instance contains near‑linear arms), the problem reverts to the $\Theta(\sqrt{k})$ regime.
\end{remark}

\noindent We will now evaluate the performance of our family. 
Note that setting $\alpha=1$ recovers the Random Round Robin algorithm from \cite{pmlr-v272-blum25a}, which ensures that $\textit{PTRR}$ preserves the $O(\sqrt{k})$ guarantee in the unrestricted case.

Now suppose that an instance has $\beta_I < 1$, which occurs whenever it has no (arbitrarily close to) linear arms. Under this assumption, we will prove that choosing any $\alpha\in (\beta_I,1)$ yields the strictly smaller competitive ratio $O\!\bigl(k^{\alpha/(\alpha+1)}\bigr)=o(\sqrt{k})$. Letting $\alpha$ approach $\beta_I$ gives $O\!\bigl(k^{\beta_I/(\beta_I+1)}\bigr)$.

\begin{theorem} Given an IMAB instance $I$ with Concavity Envelope Exponent $\beta_I$. If $T \ge 2k$, then $PTRR_\alpha$ for $\alpha \in (\beta_I, 1)$  with $\tau =T-k$, $m = \frac{\tau}{T} f^*(T)$  achieves
$$
\mathbb{E}[\text{reward from PTRR}_\alpha] \ge \frac{1}{2^{\alpha+3}(\alpha + 1)} \cdot \frac{\text{OPT}_T}{(k + 1)^{\frac{\alpha}{\alpha + 1}}}, 
$$
Equivalently, the competitive ratio is $O(k^{\alpha/(\alpha+1)})$. Setting $\alpha = 1$ recovers the $O(\sqrt{k})$ bound.\label{thm:ptrr-alpha}
\end{theorem}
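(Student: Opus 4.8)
My plan is to reduce the theorem to a single expectation inequality about a random-order scheduling process and then bound that expectation by balancing two competing sources of reward. I would first record three preliminaries, writing $g(t):=m(t/\tau)^{\alpha}$ for the running threshold. (i) Since $\alpha>\beta_I$, every arm satisfies $\mathrm{LE}(\alpha)$; substituting $f_i(t)\ge f_i(T)(t/T)^{\alpha}$ into the test $f_i(t)\ge g(t)$ shows that any arm with $f_i(T)\ge f^*(T)(\tau/T)^{1-\alpha}$—in particular the best arm—passes the test for every $t\le T$, because the inequality collapses to $T^{1-\alpha}\ge\tau^{1-\alpha}$, which holds as $\tau=T-k\le T$. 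Call such arms \emph{persistent}: once $\textit{PTRR}_\alpha$ samples one it plays it to the horizon. (ii) Concavity with $f^*(0)\ge 0$ gives $\text{OPT}_T\le T f^*(T)=\tfrac{T^2}{\tau}m$. (iii) An arm played for $s$ consecutive steps, each passing the test, collects at least $\sum_{n=0}^{s-1}g(n)\ge \frac{m}{(\alpha+1)\tau^{\alpha}}s^{\alpha+1}$; combining with (ii) and $\tau\ge T/2$ (from $T\ge 2k$) yields the per-arm estimate that this arm contributes at least $\frac{c_\alpha}{\alpha+1}(s/T)^{\alpha+1}\,\text{OPT}_T$ to the reward, with $c_\alpha=2^{\alpha-1}$.

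Using (iii) I would pass to a combinatorial model. Give each arm a \emph{size} $s_i$ equal to its number of pulls before abandonment, with $s_i\ge T$ for persistent arms. Then $\textit{PTRR}_\alpha$ visits the arms in a uniformly random order and plays each for $\min(s_i,\text{remaining budget})$ steps until the $T$ pulls are used up, necessarily halting on the first persistent arm it meets. Writing $\tilde s_\ell$ for the realized play-lengths (so $\sum_\ell \tilde s_\ell=T$) and $Y:=\sum_{\ell}(\tilde s_\ell/T)^{\alpha+1}$, estimate (iii) gives expected reward at least $\frac{c_\alpha}{\alpha+1}\,\text{OPT}_T\cdot\mathbb{E}[Y]$. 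Hence it suffices to prove $\mathbb{E}[Y]\ge c'\,(k+1)^{-\alpha/(\alpha+1)}$ for an absolute constant $c'$; tracking $c_\alpha$, the factor from $\tau\ge T/2$, and a factor $2$ from restricting to half the horizon then reproduces the stated prefactor $\tfrac{1}{2^{\alpha+3}(\alpha+1)}$.

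The core is the bound on $\mathbb{E}[Y]$. I would decompose $Y\ge \big((T-t_0)_+/T\big)^{\alpha+1}+\sum_{\text{decoys before}}(s_i/T)^{\alpha+1}$, where $t_0$ is the total size of the non-persistent arms preceding the first persistent arm reached. These two terms encode the trade-off that produces the exponent $\alpha/(\alpha+1)$: if the preceding decoys are individually large then, because $\alpha+1>1$, a single blocking decoy already contributes $\Omega(\text{OPT}_T)$ through the second term; if they are individually small they consume little budget, so the persistent arm is reached with a constant fraction of the horizon left and the first term is $\Omega(1)$. The worst case interpolates at the critical scale $L:=T(k+1)^{-1/(\alpha+1)}$, where about $r:=(k+1)^{1/(\alpha+1)}$ decoys of size $L$ exactly fill the horizon: with probability $\approx 1$ the best arm is ``blocked'' and the $r$ played decoys contribute $r\cdot(L/T)^{\alpha+1}=r^{-\alpha}=(k+1)^{-\alpha/(\alpha+1)}$, while with the complementary probability $\approx r/(k+1)=(k+1)^{-\alpha/(\alpha+1)}$ the best arm is reached early and contributes $\Omega(\text{OPT}_T)$; both routes give the same rate. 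Rigorously I would integrate over the random position of the first persistent arm, lower-bound the decoy sum by convexity of $x\mapsto x^{\alpha+1}$, and minimize the resulting expression over all decoy-size profiles, checking that the minimum is $\Theta((k+1)^{-\alpha/(\alpha+1)})$ and attained at scale $L$.

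I expect the main obstacle to be exactly this expectation bound, because it cannot be obtained from any pointwise inequality. Convexity alone gives only $Y\ge M^{-\alpha}$ for $M$ played arms, and an adversary can force $M=\Theta(k)$ (many tiny decoys) or large $t_0$ (the best arm placed last), each of which drives a \emph{per-realization} bound down to $\Theta(k^{-\alpha})$—precisely the deterministic $\Theta(k)$ regime. The gain must come from the randomness of the order: the configuration that weakens the convexity term (many small decoys crowding in front of the persistent arm) is itself low-probability, so one has to trade the event ``few arms precede the persistent arm'' against the conditional value of $Y$ and optimize the cutoff, while in the complementary event one must not discard the reward harvested from the large blocking decoys. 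Getting these two contributions to add up to $(k+1)^{-\alpha/(\alpha+1)}$ for \emph{every} decoy profile—rather than the squared rate that a careless union of the two events would give—is the delicate step, together with the bookkeeping for the truncated final arm and the case where the decoys in front of the first persistent arm already exhaust the budget.
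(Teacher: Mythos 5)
Your preliminaries are sound and in fact coincide with the paper's two key lemmas: your persistence claim (i) is the paper's ``never drop $f^*$'' lemma (Lemma~\ref{lem:never-drop}), and your area estimate (iii) is its Lemma~\ref{lem:area}, with the correct constant $c_\alpha=2^{\alpha-1}$ after using $\tau\ge T/2$. Your reduction to the combinatorial model is also legitimate, for a reason worth making explicit: the keep-test threshold $m(t/\tau)^\alpha$ is oblivious (it does not depend on observed rewards), so each arm's abandonment size $s_i$ is a deterministic function of the instance, and $\textit{PTRR}_\alpha$ really is ``play a uniformly random permutation of fixed-size jobs until the budget runs out, halting on the first persistent arm.'' Everything in the theorem thus reduces, as you say, to the single inequality $\mathbb{E}[Y]\ge c'(k+1)^{-\alpha/(\alpha+1)}$ uniformly over all decoy-size profiles. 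This is a genuinely different organization from the paper, which never fixes a profile: it instead defines the value function $V(\tau',k')$, writes a recurrence in which the adversary picks the \emph{next} decoy size adaptively, and closes an induction using an exact one-variable minimization (Lemma~\ref{lem:exact-min}) and a balancing inequality (Lemma~\ref{lem:balance}).

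The gap is that you never prove the inequality your reduction hinges on. You verify it heuristically at the single critical profile ($r=(k+1)^{1/(\alpha+1)}$ equal decoys of size $T/r$) and then state that one should ``integrate over the position of the first persistent arm, lower-bound by convexity, and minimize over all decoy-size profiles, checking that the minimum is attained at scale $L$'' --- but that minimization over profiles \emph{is} the theorem; nothing in the proposal shows the equal-size profile is (near-)extremal, and your own last paragraph explains why the obvious routes fail (pointwise convexity gives only $k^{-\alpha}$, and a crude union over the two events gives the squared rate). So the argument, as written, establishes the bound only for one family of instances. To complete your route you would need a symmetrization or exchange argument reducing arbitrary profiles to equal-size ones, or a per-profile computation of the permutation expectation with truncation at the budget --- neither of which is sketched. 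The paper's recurrence-plus-induction is precisely the device that discharges this step (and is strictly stronger, since the recurrence's inner $\min_t$ even allows adaptively chosen sizes); if you want to stay with your global formulation, the cleanest fix is to prove your $\mathbb{E}[Y]$ bound by induction on the number of unplayed arms, at which point you will have rederived Lemmas~\ref{lem:rec}--\ref{lem:solve}.
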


\begin{proof}[Proof Sketch]
Two simple facts drive the analysis of our algorithm. First, the optimal arm is never abandoned when $\alpha > \beta_I$.  Second, if we ever abandon a non‑optimal arm at time $t$, the cumulative reward collected on that arm is at least the “area” under the lower envelope up to $t$.\looseness-1

These two facts yield a recurrence that trades off (i) the chance we picked the optimal arm now, versus (ii) the value we get after spending $t$ pulls and moving on. At any state $(\tau', k')$, we either draw the best arm now (probability $1/k'$) and then keep it forever, earning $\mathrm{OPT}_{\tau'+k'}$, or we draw a bad arm. If it is bad and we abandon it at a pessimistic time $t$, we have already earned at least the threshold-area up to $t$ and we continue from the smaller state $(\tau' - t, k' - 1)$. Induction on $(\tau', k')$ yields the desired result. A full proof is included in Appendix~\ref{appendix:upperbd-proof} (see Theorem \ref{thm:upper}).
\end{proof}

Our updated keep-test affects exactly three steps of the proof in \cite{pmlr-v272-blum25a}. (i) ``Never drop \(f^\star\)'' now needs the explicit safety range \(m \le f^\star(T)(\tau/T)^\alpha\). (ii) The area-before-abandonment bound scales with \((t - 1)^{\alpha+1}\) (not quadratic). (iii) The recurrence minimization shifts from a quadratic to minimizing \(u\,y^{\alpha+1} + v(1 - y)^{\alpha+1}\), which requires a new balancing inequality that yields the factor \((k+1)^{-\gamma}\) with \(\gamma = \alpha/(\alpha + 1)\).

We next provide a matching lower bound by showing that there is a distribution on instances with the same $\beta_I$ on which no algorithm beats $\Omega(k^{\beta_I/(\beta_I+1)})$. It follows that the exponent is optimal. \textit{PTRR} attains this when $\alpha = \beta_I$.

\begin{figure}
    \centering
    \begin{subfigure}{0.45\textwidth}
    \includegraphics[width=0.9\textwidth]{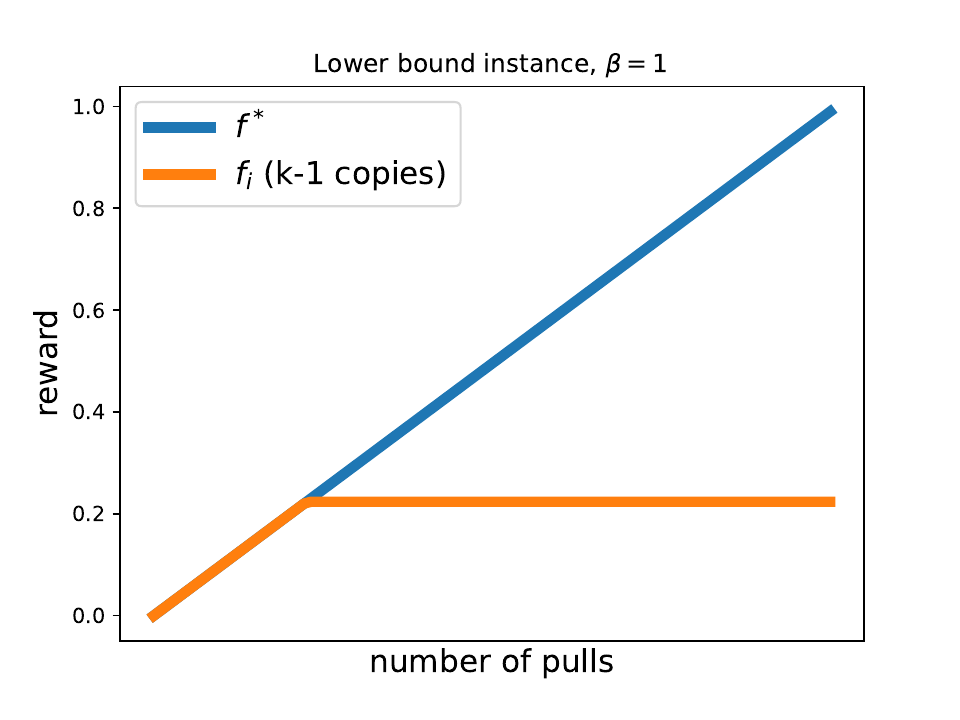} 
    \end{subfigure}
    \begin{subfigure}{0.45\textwidth}
        \includegraphics[width=0.9\textwidth]{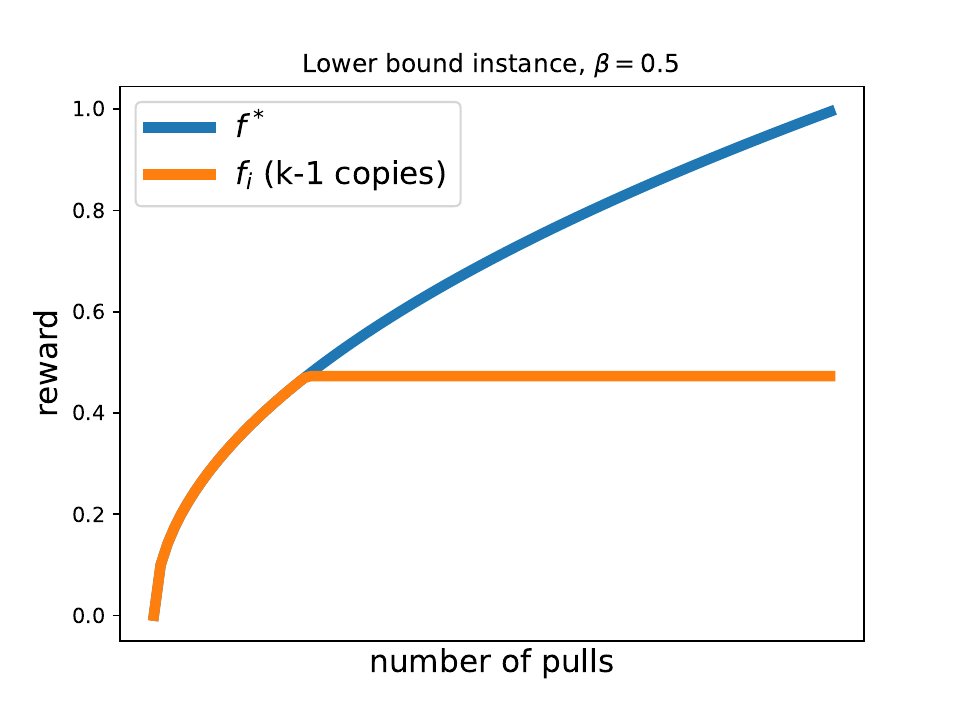} 
    \end{subfigure}
    \caption{\textbf{Lower bound instances:} The spirit of the lower bound instances is that most arms flatten after $T/\sqrt{k}$ pulls but one arm keeps increasing. However, since the algorithm needs to play any arm for a while before figuring out whether it is the good arm or one of the regular arms, the expected reward of the algorithm cannot exceed a certain amount. On the left, we reproduce the instance for $\beta = 1\,,$ recovering the lower bound of \cite{pmlr-v272-blum25a}, and on the right we show the instance for $\beta = 0.5\,.$}
    \label{fig:placeholder}
\end{figure}

\begin{theorem}[Lower Bound]
Fix $\beta \in (0, 1]$ and $k \ge 2$. 
For every (possibly randomized) algorithm, for $T$ sufficiently large, there exists an instance with Concavity Envelope Exponent $\beta_I = \beta$ such that
$
\frac{\mathbb{E}[\text{ALG}_T]}{\text{OPT}_T} \le C_\beta k^{-\beta/(\beta+1)},
$
with $C_\beta = \frac{3}{2}(\beta + 1)^2 \left[\beta(\beta + 1)\right]^{-\beta/(\beta+1)}$. Equivalently, any such algorithm has an $\Omega(k^{\beta/(\beta+1)})$ competitive ratio. 
\end{theorem}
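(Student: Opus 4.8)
The plan is to establish the lower bound through Yao's minimax principle. It suffices to construct a single distribution $\mathcal{D}$ over instances, each having Concavity Envelope Exponent exactly $\beta$ and all sharing the same optimal value $\mathrm{OPT}_T$, such that every \emph{deterministic} algorithm earns expected reward at most $C_\beta k^{-\beta/(\beta+1)}\,\mathrm{OPT}_T$ against $\mathcal{D}$. A randomized algorithm is a mixture of deterministic ones, so its expected reward against $\mathcal{D}$ obeys the same bound, and extracting the worst instance in the support then gives the per-instance guarantee claimed in the theorem.

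The hard family is built around a hidden ``good'' arm. Fixing a threshold $\theta$ (to be optimized at the end), each instance has $k-1$ identical \emph{bad} arms with reward $f_{\mathrm{bad}}(t)=\min(t,\theta)^\beta$ and one \emph{good} arm with reward $f_{\mathrm{good}}(t)=t^\beta$, the good arm being placed at a uniformly random position. Both curves are nondecreasing and concave; the good arm is tight against the $\mathrm{LE}(\beta)$ envelope and fails $\mathrm{LE}(\beta')$ for every $\beta'<\beta$, while the bad arms also satisfy $\mathrm{LE}(\beta)$, so $\beta_I=\beta$ for every instance. The optimal policy plays the good arm for all $T$ steps, giving $\mathrm{OPT}_T=\sum_{n=1}^T n^\beta=\Theta(T^{\beta+1}/(\beta+1))$, the same across the support.

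The key structural fact, which drives the whole argument, is that $f_{\mathrm{good}}$ and $f_{\mathrm{bad}}$ agree on the first $\theta$ pulls: an algorithm learns nothing distinguishing about an arm until it has pulled that arm more than $\theta$ times. For a deterministic algorithm this means the ordered sequence $a_1,a_2,\dots$ of arms first driven past $\theta$ pulls is determined and independent of the good arm's location up until the good arm itself is crossed. Each crossing consumes at least $\theta$ of the $T$ pulls, so at most $L\le\lfloor T/\theta\rfloor$ arms are ever crossed; by symmetry the good arm is the $j$-th arm crossed with probability $1/k$ for each $j\le L$, and is never crossed with probability $(k-L)/k$. If the good arm is discovered as the $j$-th crossing, at least $(j-1)\theta$ pulls were already spent on (bad) arms crossed earlier, so the good arm receives at most $T-(j-1)\theta$ pulls and contributes at most $\tfrac{1}{\beta+1}(T-(j-1)\theta)^{\beta+1}$; if it is never crossed it contributes at most $\sum_{n=1}^{\theta}n^\beta$. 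Every pull of a bad arm earns at most $\theta^\beta$, so all bad arms together contribute at most $T\theta^\beta$.

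Summing the discovery contributions weighted by $1/k$ and adding the bad-arm term yields a bound of the shape $\mathbb{E}[\mathrm{ALG}]\le \frac{1}{k}\cdot O(T^{\beta+2}/\theta)+O(T\theta^\beta)$, where a lower-order $O(\theta^{\beta+1})$ term from the never-crossed case is absorbed. Dividing by $\mathrm{OPT}_T$ and choosing the adversary's threshold $\theta=\Theta(T\,k^{-1/(\beta+1)})$ balances the two surviving terms, both of which then scale as $k^{-\beta/(\beta+1)}$; carrying the exact finite sums (rather than integral approximations) through this optimization produces the stated constant $C_\beta$. I expect the main obstacle to be the adaptivity step: a clever algorithm may interleave partial pulls across many arms instead of testing them sequentially, so I must argue carefully that the ``crossing sequence'' is well-defined and location-independent until the good arm appears, and that pushing the good arm to $g$ pulls truly forecloses $g$ pulls' worth of budget from the bad arms crossed before it. Once that is in place, verifying concavity and the envelope exponent, and extracting $C_\beta$ from the finite sums, are routine.
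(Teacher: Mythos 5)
Your proposal is correct and follows essentially the same route as the paper's proof: the identical hard family (a power-curve good arm hidden uniformly at random among copies that flatten after $\theta$ pulls), the same generous accounting that at most $\lfloor T/\theta\rfloor$ arms can be ``crossed'' and uncrossed pulls earn at most the flat value, the same balance $\tfrac{1}{kx}+(\beta+1)x^\beta$ optimized at $x^\star=[k\beta(\beta+1)]^{-1/(\beta+1)}$, and Yao's principle to pass to randomized algorithms. Your explicit coupling argument for adaptive algorithms (the crossing sequence is determined and location-independent until the good arm is crossed) is a slightly more careful rendering of what the paper compresses into its ``generous evaluation'' lemma, and the factor $\tfrac{3}{2}$ in $C_\beta$ indeed arises from the integer rounding $s=\lfloor x^\star T\rfloor$ exactly as you anticipate.
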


\begin{proof}[Proof Sketch] A full proof is in Appendix \ref{appendix:lowerbd-proof}.
We construct an instance similar  to \cite{pmlr-v272-blum25a}. Define a ``good'' arm as the power curve $g(t) = m(t/T)^\beta$, choose this arm at random, and let the other $k - 1$ arms match $g$ exactly for the first $t'$ pulls and then flatten at $g(t')$. Every arm satisfies $\text{LE}(\beta)$, and the good arm violates any stricter floor, so $\beta_I = \beta$. 
At a high-level, we carefully pick $t'$ such that any deterministic algorithm must suffer bad competitive ratio on the distribution over instances. By Yao's principle, this results in a lower bound for randomized algorithms.
\end{proof}

Again, we will briefly contextualize our proof. The hard instance in \cite{pmlr-v272-blum25a} uses a linear ``good'' arm, while ours uses a power curve \(g(t) = m(t/T)^\beta\). This change is propagated in two places. (i) The generous upper bound becomes \(h(x) = 1/(kx) + (\beta + 1)x^\beta\) (instead of \(1/(kx) + 2x\)). (ii) the calculus minimizer moves to \(x^\star = [k\beta(\beta + 1)]^{-1/(\beta+1)}\), giving the exponent \(k^{-\beta/(\beta+1)}\) instead of \(k^{-1/2}\). The rounding step and use of Yao's principle are unchanged.

\subsection{Sample Complexity of Learning $\alpha$}
\label{sec:sample-complexity}

In this section, we analyze the number of samples required to learn a near-optimal algorithm from the algorithm family \textit{PTRR} (Definition~\ref{defn:alg-fam}). Previously, we considered a fixed $\alpha$ and showed its optimality for a fixed $\beta.$ However, in practice, we may not know the true $\beta$ and therefore cannot pick the optimal $\alpha.$ Now, suppose we have access to historical examples of improving multi-armed bandits problems (e.g., learning curves for the hyperparameter tuning problem on previous time periods of data). 
Then, we could hope to {\em learn} the best possible $\alpha$ for this distribution, provided we have sufficiently many samples. To analyze the sample complexity of this task, we extend techniques developed by \cite{sharma2025offline} for stochastic multi-armed bandits.
\par

\begin{remark}
    A related approach for meta-learning bandits is the {\em corralling} framework of \cite{agarwal2017corralling, arora2021corralling, luo2022corralling}. However, existing corralling techniques can only handle a finite number of hyperparameters. We further show that meta-learning even with a finite band of algorithms may not be possible without further assumptions in the improving bandits setting (see Appendix~\ref{appendix:corralling}).
\end{remark}

\paragraph{Derandomized Dual Complexity and result of \cite{sharma2025offline}.}

We start by explaining how to use results from \cite{sharma2025offline}. In their work, they defined a relevant quantity, the {\em derandomized dual complexity}, $Q_\mathcal{D}$, which accounted for randomness in data sampling and any additional randomness after fixing the instance. Since our family of algorithms includes randomized algorithms, we must also derandomize this additional source of randomness. We do this by defining $\mathcal{D'}$ as an extension of the original distribution. For each element in the support of the original distribution, we define $k!$ copies, each with a different permutation $\pi_k$ of $[k]$ associated with it. Each new augmented instance $(I, \pi_k)$ has probability $\mathcal{D'}_I \coloneqq \mathcal{D}_{I}/k!$ associated with it, where $\mathcal{D}_I$ is the probability associated with that instance originally. 
Wherever the results of \cite{sharma2025offline} refer to the distribution $\mathcal{D}$ over instances, we instead consider the distribution over instances and permutations as defined above. By derandomizing in this way, we can immediately apply their results\footnote{For our purposes, it suffices to handle randomness in the algorithm in this way. It would be interesting and valuable to extend their results to general randomized algorithms in future work.\looseness-1}.

From the results of \cite{sharma2025offline} (reproduced in Appendix~\ref{appendix:ss25results}), we know that if we have sufficiently many samples, as a function of $Q_\mathcal{D}, H$ and the accuracy and success probability parameters, then with high probability the empirical loss will be close to the population loss. 
We know that if we have enough samples to achieve uniform convergence, we will find a hyperparameter value that achieves near-optimal performance on future instances from the distribution (see Appendix~\ref{appendix:ucpoploss}). Thus, if we pick the value of $\alpha$ with the smallest empirical loss (i.e., run empirical risk minimization (ERM)), we can guarantee that the objective in Eqn.~\ref{eqn:goal} is satisfied. It remains to (1) bound $Q_\mathcal{D}$ for our problem and (2) investigate various reasonable piecewise-$H$-bounded losses.\looseness-1 

\paragraph{Bounding derandomized dual complexity in our setting.}
Now, we compute a bound on the value of $Q_\mathcal{D}\,,$ by bounding the number of possible behaviors of algorithms from $\mathcal{A}\,.$ To do so, we identify a sufficient statistic for the loss of a fixed algorithm on an instance then count the number of possible values of the statistic.

\begin{lemma} \label{lemma:bdqd}
    For the family $\mathcal{A}$ defined in Defn.~\ref{defn:alg-fam}, the improving multi-armed bandits problem, and {\em any} piecewise-constant loss function, $Q_\mathcal{D} \le kT$\,.
\end{lemma}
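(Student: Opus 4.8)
The plan is to bound $Q_{\mathcal{D}}$, the derandomized dual complexity, by counting the number of distinct ``behaviors'' that algorithms in the family $\mathcal{A} = \textit{PTRR}$ can exhibit on a fixed instance as the parameter $\alpha$ ranges over $(0,1]$. Since the loss is assumed piecewise-constant, the key observation is that the loss $\ell(I,\alpha)$ can only change when the \emph{trajectory} of the algorithm on instance $I$ changes, and the trajectory is completely determined by how many times each arm is pulled before being abandoned. First I would identify a sufficient statistic: on a fixed augmented instance $(I,\pi_k)$ (with the random permutation fixed by the derandomization), the behavior of $\textit{PTRR}_\alpha$ is fully pinned down by, for each arm $i$, the stopping time $t_i(\alpha)$ at which the keep-test $f_i(t_i) \geq m(t_i/\tau)^\alpha$ first fails. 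Once all the $t_i(\alpha)$ are fixed, the sequence of pulls, the accrued reward $R$, and the returned arm $\hat{i}$ are all determined, and hence so is any piecewise-constant loss.

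The main step is therefore to count how many distinct values the vector $(t_1(\alpha),\dots,t_k(\alpha))$ can take as $\alpha$ varies. For a single arm $i$, the stopping time $t_i(\alpha) = \min\{t : f_i(t) < m(t/\tau)^\alpha\}$ is a monotone function of $\alpha$: as $\alpha$ increases, the threshold $m(t/\tau)^\alpha$ decreases for $t < \tau$ (since $t/\tau < 1$), which makes the keep-test \emph{easier} to pass, so the stopping time $t_i(\alpha)$ is nondecreasing in $\alpha$. Because $t_i(\alpha)$ takes integer values in $\{0,1,\dots,T\}$, a monotone integer-valued function of $\alpha$ can change at most $T$ times, yielding at most $T+1$ distinct values and hence at most $T$ breakpoints in $\alpha$ per arm. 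I would make this precise by noting that each breakpoint corresponds to some threshold-crossing equation $f_i(t) = m(t/\tau)^\alpha$ becoming active, which for fixed $i$ and $t$ has a unique solution $\alpha$.

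Summing over the $k$ arms, the breakpoints of all the individual stopping-time functions partition $(0,1]$ into at most $kT + 1$ intervals, on each of which the entire vector $(t_1(\alpha),\dots,t_k(\alpha))$ is constant, and therefore the loss is constant. Hence the loss function $\alpha \mapsto \ell(I,\alpha)$ is piecewise-constant with at most $kT$ pieces for each fixed augmented instance, giving $Q_{\mathcal{D}} \le kT$. The hard part is verifying the monotonicity claim cleanly and confirming that each arm contributes at most $T$ breakpoints rather than something larger; the subtlety is that the threshold $m(t/\tau)^\alpha$ depends on $\alpha$ simultaneously across all candidate stopping times $t$, so one must argue that the \emph{first} failure time moves monotonically rather than reasoning about each $t$ in isolation. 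I expect the monotonicity of $t_i(\alpha)$ to be the crux: once it is established, the integer-range-plus-monotonicity argument caps the breakpoints at $T$ per arm essentially for free, and the union bound over arms closes out the $kT$ bound.
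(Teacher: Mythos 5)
Your overall strategy matches the paper's: fix the augmented instance (instance plus permutation), observe that the tuple of per-arm first-failure times $(t_1(\alpha),\dots,t_k(\alpha))$ is a sufficient statistic for the trajectory and hence for any piecewise-constant loss, and then bound the number of distinct tuples arising as $\alpha$ ranges over $(0,1]$. Indeed, your counting step is more careful than the paper's, which simply asserts that ``$k$ coordinates, each taking at most $T$ values'' yields at most $kT$ tuples.

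However, the claim you identify as the crux --- monotonicity of $t_i(\alpha)$ in $\alpha$ --- is false in general. The threshold $m(t/\tau)^\alpha$ is decreasing in $\alpha$ only for $t < \tau$. Since $\tau = T-k$, an arm can be pulled more than $\tau$ times (up to $T$ times; e.g.\ the optimal arm is never abandoned, cf.\ Lemma~\ref{lem:never-drop}), and for $t > \tau$ we have $t/\tau > 1$, so the threshold is \emph{increasing} in $\alpha$ and the keep-test gets \emph{harder}. Concretely, an arm whose curve lies just above $m(t/\tau)^{\alpha_1}$ for $t>\tau$ may never be abandoned under $\alpha_1$ yet be abandoned at some $t^\ast > \tau$ under $\alpha_2 > \alpha_1$, so $t_i(\alpha)$ can decrease. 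Fortunately, monotonicity of the stopping times is not needed: your secondary observation already closes the argument. For each pair $(i,t)$ with $t \neq \tau$, the map $\alpha \mapsto m(t/\tau)^\alpha$ is strictly monotone, so the truth value of the test $f_i(t) \ge m(t/\tau)^\alpha$ changes at most once on $(0,1]$; for $t = \tau$ the test does not depend on $\alpha$ at all. The stopping-time vector can only change at one of these at most $k(T-1)$ crossing values of $\alpha$, so $(0,1]$ splits into at most $k(T-1)+1 \le kT$ intervals on which the dual loss is constant, giving $Q_\mathcal{D} \le kT$. You should drop the monotonicity claim entirely and run the proof on the crossing count alone; as written, the step ``a monotone integer-valued function of $\alpha$ can change at most $T$ times'' rests on a premise that fails precisely in the regime $t_i > \tau$ that the paper's own analysis (never dropping $f^\star$) shows is reachable.
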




\newcommand{\algalpha}{\mathcal{A}_\alpha}
\begin{proof}
    We show this lemma by defining a instance\footnote{again, augmented instance}- and algorithm-specific tuple $R_\alpha\,.$ We define $R_\alpha$ such that it contains sufficient information to evaluate the loss of the algorithm on the fixed (augmented) instance. That is, since one value of the tuple gives rise at most one value of loss, we can bound the number of possible values of loss by counting the number of such tuples. \par

    First, fix an algorithm $\algalpha\,.$ This algorithm proceeds by first generating a curve to which any chosen arm is compared, namely $c_\alpha(t) \coloneq\left(\frac tT \right)^\alpha f^\star(T).$ Next, call the first arm chosen by the algorithm $f_1(t).$ The algorithm plays $f_1$ until the first time $t^{(1)}_\text{stop} \in \{ 1, 2, \hdots \, T \}$ such that $f_1(t^{(1)}_\text{stop}) < \left( \frac{t^{(1)}_\text{stop}}{T} \right)^\alpha f^*(T).$
    Similarly, we can calculate a $t^{(2)}_\text{stop}$ for the second arm played by the algorithm and so on. This provides us a tuple $R_\alpha \coloneqq (t^{(1)}_\text{stop}, t^{(2)}_\text{stop}, \hdots, t^{(k)}_\text{stop})$ that provides enough information to compute the loss of the algorithm on the instance.

    Now, we observe that as we vary $\alpha\,,$ the number of possible tuples that can be generated is upper bounded by the total number of possible tuples. Thus, we simply need to count the number of possible such tuples $R_\alpha\,.$ Since each $t^{(i)}_\text{stop}$ takes on a discrete value in $[T]\,,$ there are at most $T$ values an element in the tuple could take on, and since there are $k$ elements in the tuple, the total number of such tuples is {\em at most} $kT\,.$
\end{proof}

\paragraph{Sample complexity.}
We present sample complexity results for generic piecewise-$H$-bounded loss functions by extending Theorem~\ref{thm:ss25main} and Lemma~\ref{lemma:bdqd}. In Appendix~\ref{appendix:avgregret}, we instantiate this for a concrete loss function.

\begin{theorem}
    The sample complexity of Hyperparameter Transfer in family $\mathcal{A}$ for the improving multi-armed bandits problem with generic piecewise-$H$-bounded loss is $N = O\left( \left(\frac{H}{\epsilon}\right)^2 (\log kT + \log \frac 1 \delta \right).$
\end{theorem}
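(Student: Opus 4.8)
The plan is to combine the general uniform-convergence guarantee of \cite{sharma2025offline} (Theorem~\ref{thm:ss25main}) with the combinatorial bound on the derandomized dual complexity from Lemma~\ref{lemma:bdqd}, and then close with the standard empirical-risk-minimization argument. First I would recall that Theorem~\ref{thm:ss25main} asserts a sample-complexity bound of the form $N = O\!\bigl((H/\epsilon)^2(\log Q_\mathcal{D} + \log(1/\delta))\bigr)$ sufficient to guarantee, with probability at least $1-\delta$, that the empirical average loss is within $\epsilon$ of the population loss \emph{uniformly} over all $\alpha \in \mathcal{P}$. The entire task then reduces to substituting the correct bound on $Q_\mathcal{D}$ into this template, so the technical content lives entirely in Lemma~\ref{lemma:bdqd} and the derandomization setup, both of which are already available.

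Second, because the family $\mathcal{A}$ contains randomized algorithms, I would invoke the derandomization construction described above: replace $\mathcal{D}$ by the augmented distribution $\mathcal{D}'$ over (instance, permutation) pairs $(I,\pi_k)$ with weights $\mathcal{D}_I/k!$. Under $\mathcal{D}'$ the only randomness internal to $\textit{PTRR}_\alpha$ (the order in which arms are drawn) is pinned down by $\pi_k$, so each algorithm acts deterministically on each augmented instance, and the expected loss over $\mathcal{D}'$ equals the expectation over both $\mathcal{D}$ and the algorithmic randomness. This is exactly the deterministic-per-instance form assumed by \cite{sharma2025offline}, which is what licenses the application of Theorem~\ref{thm:ss25main}.

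Third, I would plug in Lemma~\ref{lemma:bdqd}, which gives $Q_\mathcal{D} \le kT$ and hence $\log Q_\mathcal{D} \le \log(kT)$. Substituting yields $N = O\!\bigl((H/\epsilon)^2(\log kT + \log(1/\delta))\bigr)$ samples for uniform convergence, matching the stated bound. Finally I would append the ERM step (Appendix~\ref{appendix:ucpoploss}): uniform convergence within $\epsilon/2$ implies that the parameter $\hat{\pmb\alpha}$ minimizing the empirical loss has population loss within $\epsilon$ of the best parameter, which is precisely the goal in Eqn.~\ref{eqn:goal}; a harmless constant rescaling of $\epsilon$ inside the uniform-convergence call absorbs the factor of two.

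The main obstacle is conceptual rather than computational: I must verify that the derandomization genuinely reduces our randomized-algorithm setting to the deterministic regime of \cite{sharma2025offline}, and that the quantity counted in Lemma~\ref{lemma:bdqd} (the number of distinct loss behaviors achievable as $\alpha$ ranges over $\mathcal{P}$ on a fixed augmented instance) really is the $Q_\mathcal{D}$ that enters their bound. Once these two identifications line up, the remainder is a direct substitution with no further calculation required.
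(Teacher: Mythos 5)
Your proposal is correct and follows essentially the same route as the paper: derandomize via the augmented distribution $\mathcal{D}'$ over (instance, permutation) pairs, apply the uniform-convergence guarantee of Theorem~\ref{thm:ss25main} with the bound $Q_\mathcal{D} \le kT$ from Lemma~\ref{lemma:bdqd}, and close with the ERM argument of Appendix~\ref{appendix:ucpoploss}. The paper's proof is exactly this substitution, so there is nothing further to add.
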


While the above result guarantees sample efficiency for learning the best $\alpha,$ it is also interesting to consider the question of computational efficiency. In Appendix~\ref{appendix:erm-alg}, we provide a framework for implementing ERM 
that is efficient if the number of arms is a small constant. In Appendix~\ref{appendix:approx-learning-alpha}, we propose and analyze a potentially more practical algorithm to select a value of $\alpha$ that provides a good approximation for most instances in the distribution.\looseness-1

\subsection{Empirical Evaluation} 
\label{sec:empirical}

\begin{figure}[h]
    \centering
    \includegraphics[width=0.5\linewidth]{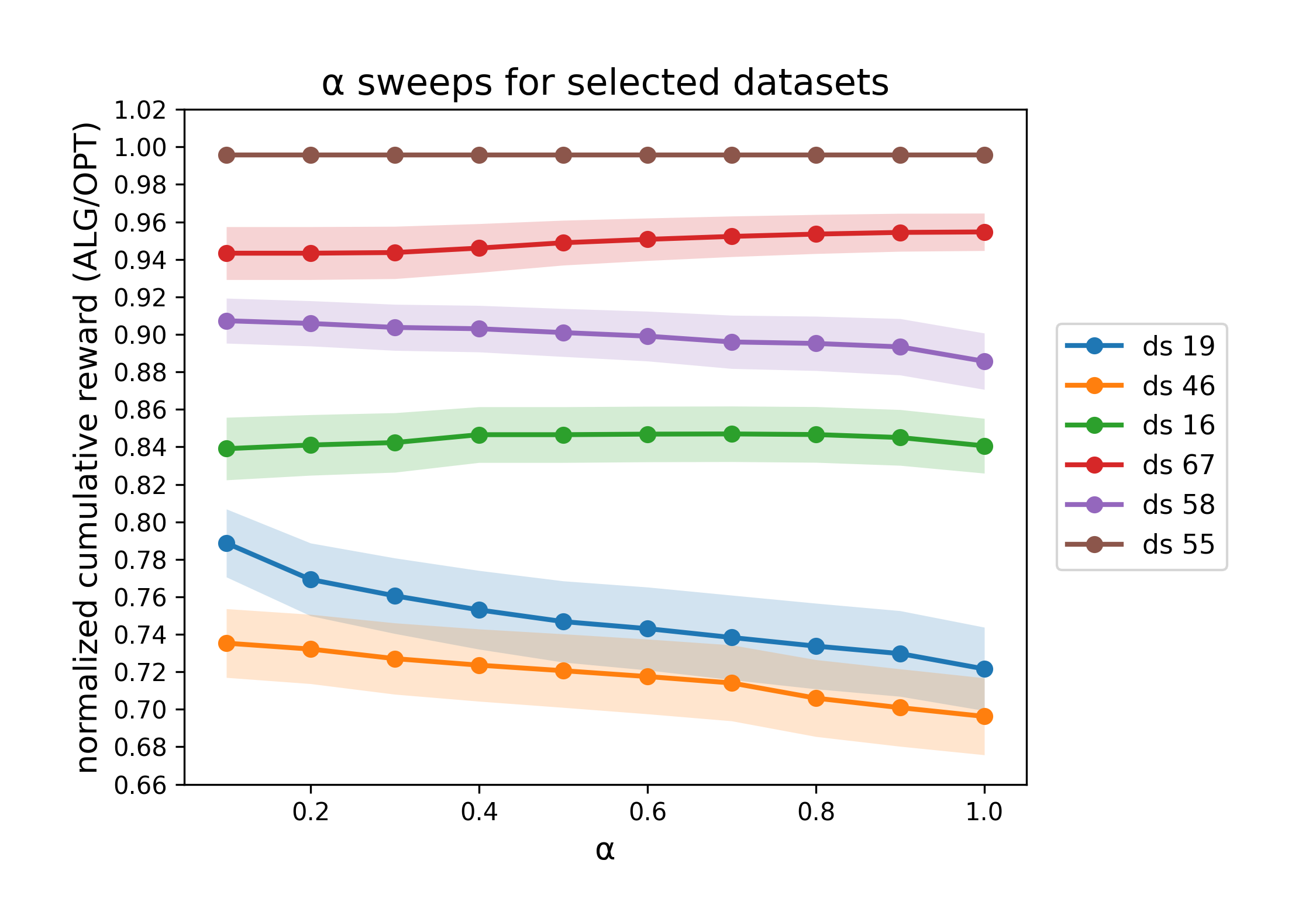}
    \caption{\textbf{Sensitivity of $\mathrm{PTRR}_\alpha$ to $\alpha$ on selected LCDB instances ($T=44$, $k=22$).}
Each curve corresponds to one CC-18 dataset $d$ from LCDB~1.1 and reports the normalized cumulative reward
$\mathbb{E}[\mathrm{PTRR}_\alpha(d)]/\mathrm{OPT}(d)$ as a function of $\alpha\in\{0.1,0.2,\dots,1.0\}$, where
$\mathrm{OPT}(d)=\max_i\sum_{t=1}^T r_{i,d}(t)$ is the cumulative reward of the best fixed-arm policy in hindsight under the same horizon (which is not necessarily the optimal policy due to absence of monotonicity, but it is a useful proxy.) 
and
$r_{i,d}(t)=1-\mathrm{err}_{i,d}(t)$ is the mean (over cross-validation) reward at anchor $t$ for arm $i$.
For each $(d,\alpha)$, $\mathbb{E}[\mathrm{PTRR}_\alpha(d)]$ is estimated by averaging over $200$ random arm orderings. The shaded regions are pointwise 95\% Student-$t$ confidence intervals across the 200 runs (mean $\pm\, t_{0.975,199}\cdot \mathrm{sd}/\sqrt{200}$).
The displayed datasets are selected to illustrate the range of sweep shapes observed across the full benchmark (near-flat curves, monotone trends, and interior maxima). For most datasets, performance differences across $\alpha$ are small relative to the confidence intervals, while a minority show a significant trend across $\alpha$ on this grid. Complete sweeps over all $27$ usable datasets are included in Appendix~\ref{appendix:empirical}.}
    \label{fig:6 sweeps}
\end{figure}

\begin{figure*}[h]
    \centering
    \includegraphics[width=1\linewidth]{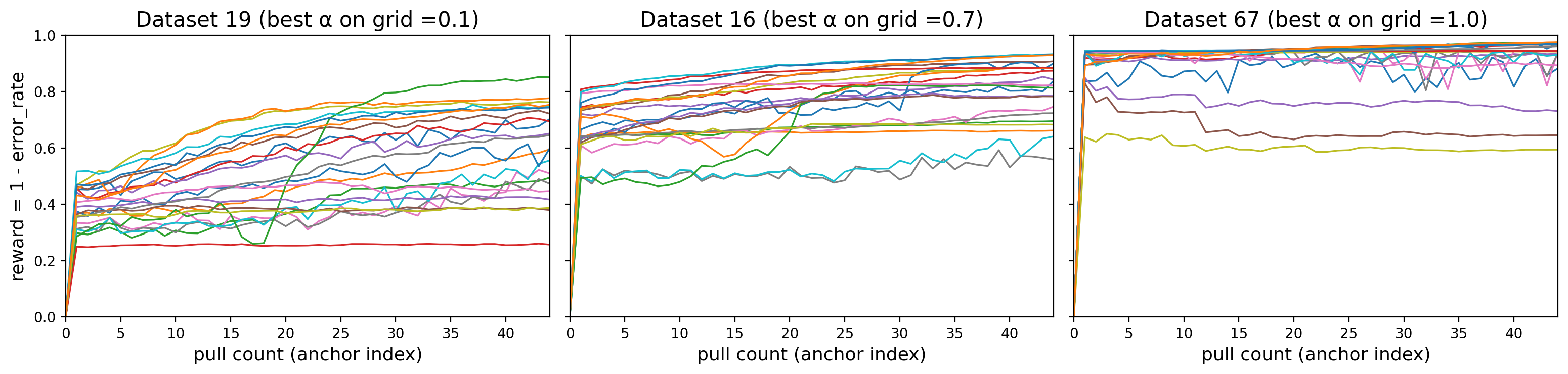}
    \caption{\textbf{Mean LCDB reward curves for three datasets with distinct best $\alpha$ values on the grid.} Each panel overlays the mean reward curves $r_{i,d}(t)=1-\mathrm{err}_{i,d}(t)$ across anchors $t$ for all $k=22$ arms on a single CC-18 dataset $d$. The title of each panel reports the value of $\alpha\in\{0.1,0.2,\dots,1.0\}$ that maximizes the estimated normalized cumulative reward $\mathbb{E}[\mathrm{PTRR}_\alpha(d)]/\mathrm{OPT}(d)$ at horizon $T=44$ on that dataset. 
    We selected these datasets to illustrate diverse best $\alpha$ values on the grid. Qualitatively, these plots suggest a mechanism consistent with the influence of $\alpha$ on $\mathrm{PTRR}_\alpha$, where datasets preferring smaller $\alpha$ tend to exhibit early separation between `good' and `bad' arms (making aggressive abandonment beneficial).\looseness-1
    \label{fig:learning curves}}
\end{figure*}

We provide empirical evidence that the value of $\alpha$ used in running $PTRR$ can affect the cumulative reward using data from a learning curve dataset, LCDB~1.1 (CC-18 benchmarks,~\cite{yanlcdb}). We map each dataset in the LCDB dataset to an IMAB instance as follows: the $k$ arms are the LCDB learners, the time index
corresponds to the number of samples seen, and the rewards arise from 
inverting the error rates through $r_{i,d}(t)=1-\mathrm{err}_{i,d}(t)$.
We average over cross-validation and retain only datasets for which the curve is defined for all $t \in \{1, \dots, T\}\,.$ This yields 27 datasets with $k = 22$ arms each and a time horizon of $T=44.$ We run $PTRR_\alpha$ for each $\alpha \in \{0.1, 0.2, \ldots, 1.0\}$ and report normalized cumulative reward $\mathbb{E}[\mathrm{PTRR}_{\alpha}(d)] / \mathrm{OPT}(d)$ 
(the reciprocal of the competitive ratio in Definition~2.2) averaged over 200 seeds. Figure \ref{fig:6 sweeps} shows that the $\alpha$ value maximizing this quantity varies across datasets.
Figure \ref{fig:learning curves} plots mean reward curves for three datasets to help visualize the underlying reward dynamics. Full details of the  setup and results are provided in Appendix~\ref{appendix:empirical}.\looseness-1

$
$

\section{Achieving Best-of-both-worlds Best Arm Identification}\label{sec:bothworlds}

\begin{figure}[t]
    \centering

\tikzset{every picture/.style={line width=0.75pt}} 

\begin{tikzpicture}[x=0.75pt,y=0.75pt,yscale=-0.9,xscale=0.9]

\draw    (71,600.46) -- (312,600.46) ;
\draw [shift={(314,600.46)}, rotate = 180] [color={rgb, 255:red, 0; green, 0; blue, 0 }  ][line width=0.75]    (10.93,-3.29) .. controls (6.95,-1.4) and (3.31,-0.3) .. (0,0) .. controls (3.31,0.3) and (6.95,1.4) .. (10.93,3.29)   ;
\draw    (71,600.46) -- (69.02,382.46) ;
\draw [shift={(69,380.46)}, rotate = 89.48] [color={rgb, 255:red, 0; green, 0; blue, 0 }  ][line width=0.75]    (10.93,-3.29) .. controls (6.95,-1.4) and (3.31,-0.3) .. (0,0) .. controls (3.31,0.3) and (6.95,1.4) .. (10.93,3.29)   ;
\draw    (70,419.46) -- (268,419.46) ;
\draw  [dash pattern={on 4.5pt off 4.5pt}]  (71,600.46) -- (160,509.46) ;
\draw  [dash pattern={on 4.5pt off 4.5pt}]  (160,509.46) -- (269,509.46) ;
\draw    (391,600.46) -- (632,600.46) ;
\draw [shift={(634,600.46)}, rotate = 180] [color={rgb, 255:red, 0; green, 0; blue, 0 }  ][line width=0.75]    (10.93,-3.29) .. controls (6.95,-1.4) and (3.31,-0.3) .. (0,0) .. controls (3.31,0.3) and (6.95,1.4) .. (10.93,3.29)   ;
\draw    (391,600.46) -- (389.02,382.46) ;
\draw [shift={(389,380.46)}, rotate = 89.48] [color={rgb, 255:red, 0; green, 0; blue, 0 }  ][line width=0.75]    (10.93,-3.29) .. controls (6.95,-1.4) and (3.31,-0.3) .. (0,0) .. controls (3.31,0.3) and (6.95,1.4) .. (10.93,3.29)   ;
\draw    (391,599.46) -- (577,417.46) ;
\draw  [dash pattern={on 4.5pt off 4.5pt}]  (391,600.46) -- (441,551.46) ;
\draw  [dash pattern={on 4.5pt off 4.5pt}]  (441,551.46) -- (581,551.46) ;

\draw (122,617.46) node [anchor=north west][inner sep=0.75pt]   [align=left] {number of arm pulls};
\draw (31.88,522.09) node [anchor=north west][inner sep=0.75pt]  [rotate=-269.09] [align=left] {reward};
\draw (114,398.46) node [anchor=north west][inner sep=0.75pt]   [align=left] {optimal arm $\displaystyle i^{*}$};
\draw (55,418.46) node [anchor=north west][inner sep=0.75pt]  [font=\small] [align=left] {$\displaystyle 1$};
\draw (253,602.86) node [anchor=north west][inner sep=0.75pt]  [font=\small]  {$T$};
\draw (143,603.86) node [anchor=north west][inner sep=0.75pt]  [font=\small]  {$T/2$};
\draw (57,484.86) node [anchor=north west][inner sep=0.75pt]  [font=\footnotesize]  {$\frac{1}{2}$};
\draw (168,484.46) node [anchor=north west][inner sep=0.75pt]   [align=left] {all other arms};
\draw (31,642.46) node [anchor=north west][inner sep=0.75pt]   [align=left] {(a) \cite{pmlr-v272-blum25a} may be suboptimal by a factor of\\two on some ``nice" instances};
\draw (442,617.46) node [anchor=north west][inner sep=0.75pt]   [align=left] {number of arm pulls};
\draw (351.88,522.09) node [anchor=north west][inner sep=0.75pt]  [rotate=-269.09] [align=left] {reward};
\draw (444,438.46) node [anchor=north west][inner sep=0.75pt]   [align=left] {optimal arm $\displaystyle i^{*}$};
\draw (375,418.46) node [anchor=north west][inner sep=0.75pt]  [font=\small] [align=left] {$\displaystyle 1$};
\draw (573,602.86) node [anchor=north west][inner sep=0.75pt]  [font=\small]  {$T$};
\draw (428,602.86) node [anchor=north west][inner sep=0.75pt]  [font=\small]  {$T/k$};
\draw (376,532.86) node [anchor=north west][inner sep=0.75pt]  [font=\footnotesize]  {$\frac{1}{k}$};
\draw (488,529.46) node [anchor=north west][inner sep=0.75pt]   [align=left] {all other arms};
\draw (351,642.46) node [anchor=north west][inner sep=0.75pt]   [align=left] {(b) \cite{mussi2024best} may be suboptimal by a factor of\\$\displaystyle \sqrt{k}$ on worst-case instances};

\end{tikzpicture}
    
    \caption{Examples demonstrating the need for a best-of-both-worlds approach.}
    \label{fig:motivating-examples}
\end{figure}

In this section, we bridge a gap in the literature: so far, algorithms can either identify the best arm in an instance that is sufficiently benign or identify an approximate best arm but not {\em both}, i.e., identify the best arm if possible and achieve the approximate goal if the exact goal cannot be achieved. To this end,
we introduce a family of hybrid algorithms that address the task of identifying the arm with the highest final reward, the best arm identification (BAI) task\footnote{While the reader may be more familiar with BAI in the sense of maximum mean in stochastic bandits, we note that in that case, we could equivalently study cumulative reward from a single arm.}. Each algorithm in this family
guarantees best arm identification whenever an instance is sufficiently benign, while simultaneously achieving optimal (up to constants) multiplicative guarantees on worst-case instances. 
We first provide examples that show that no existing algorithm achieves near-optimal results on both nice and worst-case instances. We define our family in Section~\ref{sec:hybriddef}. In Section~\ref{sec:hybridguarantees}, we formalize a `niceness' condition under which it is possible to sample every suboptimal arm enough to safely discard it by the switch time. We then prove that $\text{Hybrid}_{1,T/2}$ is guaranteed to identify the best arm on instances that satisfy this condition and, on all other instances, returns an arm whose expected maximum pull at time $T$ competes optimally with the highest single pull in the instance. We also develop results for choosing $\alpha, B$ in a data-driven way.\looseness-1 


\textbf{Motivating Examples.} 
Prior work gives two distinct types of guarantees for improving bandits---optimal approximation factors for worst-case instances \cite{patil_mitigating_2023,pmlr-v272-blum25a}, and better guarantees (sublinear policy regret, or small sample complexity of best arm identification) for nicer instances \cite{heidari_tight_nodate,metelli_stochastic_2022,mussi2024best}.
It is natural to ask whether we can achieve the best-of-both-worlds, that is, get almost optimal results on nice instances, while being within the optimal approximation factors (up to constants) in the worst-case. It turns out that the algorithms proposed in prior literature fail to achieve such a guarantee. 

The following example shows that the algorithm of \cite{pmlr-v272-blum25a} has sub-optimal guarantees on nice instances.\looseness-1

\begin{example}
{\it The randomized algorithm of \cite{pmlr-v272-blum25a} may fail to identify the best-arm on instances where the UCB-style algorithms of \cite{metelli_stochastic_2022,mussi2024best} find the best-arm.} We set the reward function for the best arm as $f_{i^*}(t)=1$ for all $t$, and for any other arm $i\ne i^*$ as $f_{i}(t)=\min\{\frac{t}{T},\frac{1}{2}\}$, where $T$ is the time horizon. We assume that $k$ is large (in particular, $k\ge 4$). Now, the randomized round-robin algorithm selects the optimal arm as its first or second arm with probability at most $2/k$. Otherwise, it keeps playing a sub-optimal arm till time $T/2$, and a different sub-optimal arm for the rest of the time horizon. Thus, with probability at least $1-\frac{2}{k}$, the best arm identified by the algorithm is sub-optimal by at least a factor of 2 (with respect to both its cumulative reward and its final pull). On the other hand, for the UCB-based algorithms, we can upper bound the number of exploratory pulls of the sub-optimal arms, and the algorithm succeeds in identifying the best arm $i^*$.
\end{example}

\noindent The following example shows that the UCB-variant (R-UCBE) developed for the improving bandits BAI problem by~\cite{mussi2024best}  has sub-optimal worst-case performance.

\begin{example}
{\it The UCB-based algorithm of ~\cite{mussi2024best} for best-arm identification may output an arm with $\Omega(k)$ sub-optimal reward compared to the actual best arm on some instances.} We adapt the example used by \cite{pmlr-v272-blum25a} in their lower bound construction. Set $f_{i^*}(t)=t/T$ for all $t$ for the optimal arm $i^*$, and for any other arm $i\ne i^*$ as $f_{i}(t)=\min\left\{\frac{t}{T},\frac{1}{k}\right\}$. Due to the exploration term in UCB, while the arm rewards are identical, each arm gets pulled an equal number of times. By time $T$, each arm gets pulled $T/k$ times, and all the arms appear identical to the algorithm. Thus, the best arm learned by the algorithm may be sub-optimal (with respect to both its cumulative reward and its final pull) by a factor of $\Omega(k)$. In contrast, the PTRR algorithm family guarantees a worst-case competitive ratio of $O(\sqrt{k})$ or smaller.\looseness-1
\end{example}




\subsection{Hybrid Algorithm Family}\label{sec:hybriddef}
We now define the hybrid algorithm family. As before, we work in the standard improving‑bandits setting with $k$ arms, a known horizon $T$, and non‑decreasing concave reward functions $f_i$. Each algorithm Hybrid$_{\alpha,B}$ has two stages. Stage 1 uses a UCB-style envelope: At each step, the algorithm computes a lower bound $L_i(n)$ and terminal upper bound $U_i(n)$ on the final reward $f_i(T)$ of every arm and pulls the arm with the largest optimistic estimate $U_i$. If the lower bound of one arm dominates the terminal upper bound of every other arm, Hybrid$_{\alpha,B}$ commits to this arm. 
If no commit occurs by time $B$, Stage 2 runs PTRR$_\alpha$ and finds an arm whose expected terminal reward is at least a substantial fraction of the best arm’s. 

For the terminal envelope, we define $L_i(t) := f_i(t),
\triangle_i(t) :=  (T-t) \gamma_i(t - 1),$ and $
U_i(t) := L_i(t) + \triangle_i(t), 
$
where $\gamma_i (t-1) := f_i(t) - f_i (t-1)$. We set $U_i(0) : = \infty$ to ensure first pulls. Using concavity and monotonicity, it is straightforward to prove that $L_i(t) \leq f_i(T) \leq U_i(t)$ for all $i,t$ (see Lemma \ref{lem:envelope} in Appendix~\ref{appendix:hybridproofs}).

\begin{definition} \label{def:fullyhybridfam}
Define the family of algorithms $\textit{Hybrid} := \{\textit{Hybrid}_{\alpha, B} (Algorithm ~\ref{alg:hybrid}) : \alpha \in (0,1]$ and $B \in [T]$\}.
\end{definition}

\begin{algorithm}[t]
\caption{$\textit{Hybrid}_{\alpha, B}$ BAI}\label{alg:hybrid}
\begin{algorithmic}[1]
\REQUIRE maximum final pull $m$, horizon $\tau$
\STATE \textbf{Stage 1}: $t\gets0$ 
\FOR{each arm $i$}
\STATE $t_i\gets0$, $L_i\gets0$, $U_i\gets+\infty$
\ENDFOR
\WHILE{$t<B$}
  \FOR{each $i$ with $t_i\ge1$}
     \STATE $L_i \gets f_i(t_i)$ \hfill 
     \STATE $\gamma_i \gets f_i(t_i)-f_i(t_i-1)$ \hfill 
     \STATE $U_i \gets L_i + (\tau-t)\,\cdot \gamma_i$ 
  \ENDFOR
  \STATE $\hat i \gets \arg\max_i L_i$,\quad $U_{\mathrm{next}}\gets \max_{j\ne \hat i} U_j$
  \IF{$L_{\hat i} > U_{\mathrm{next}}$} 
     \RETURN $\hat i$.
  \ENDIF
  \STATE $i' \gets \arg\max_i (U_i - L_i)$
  \STATE \textbf{pull} $i'$;\; $t_{i'}\gets t_{i'}+1$, $t\gets t+1$
\ENDWHILE
\STATE \textbf{Stage 2}:
 $\tau' \gets (\tau - B) - k$, $m' \gets \left(\frac{\tau'}{T}\right) \cdot m$
\FOR{each $i$}
\STATE $g_i(s)\gets f_i(t_i + s)$
\ENDFOR
\RETURN $\hat i \gets \text{arm returned by } \textit{PTRR}_\alpha$ with parameters $(m', \tau')$ on $\{g_i\}$ for $T-B$ steps.

\end{algorithmic}
\end{algorithm}

\subsection{Best-of-Both-Worlds BAI guarantees}
\label{sec:hybridguarantees}

In this section, we argue that it is meaningful to study the two-parameter family of \textit{Hybrid} algorithms from Definition \ref{def:fullyhybridfam}. We will do this by showing that \textit{Hybrid} contains algorithms that simultaneously (i) guarantee best arm identification on sufficiently benign instances, and (ii) preserve tight (up to constants) multiplicative bounds for approximating the best arm
on adversarial instances.\footnote{With additional information about stronger concavity, we achieve sharper bounds by using PTRR$_\alpha$ with appropriate $\alpha$.}

We start by defining a class of `sufficiently benign' instances and prove that our algorithm is guaranteed to return the best arm on all members of this class. If an instance is not in this class, Stage~2 pursues {\em approximate} BAI and runs $PTRR_\alpha$ with $\alpha$ dependent on the strength of concavity ($\alpha = 1$ works for {\em all} instances). Having reverted to an approximate goal, we identify an arm $\hat{i}$ whose final reward satisfies
$\mathbb{E}[f_{\hat{i}}(T)] \ge \Omega\left(k^{\frac{-\alpha}{1+\alpha}}\right) f^{\star}(T)$. We assume that both $T$ and $f^*(T)$ are known to the algorithm, which we input as $\tau$ and $m$.\looseness-1 

We will now define a condition (Definition \ref{def:GCC1}) under which we can guarantee best arm identification. For any arm $i$ and $\epsilon>0$, the {\it terminal budget} $h_i$ of arm $i$ is defined as
$
h_i(\epsilon) := \min\{\,n \in \{2,\ldots,T\} : \triangle_i(n) \le \epsilon\,\},
$
where $\triangle_i(t) := (T-t) \, \gamma_i(t - 1)$.  For any instance $I$, its {\it Best Arm Gap} $\Delta_I$ is defined as
 $\Delta_I := f^\star (T) - \max_{j \ne i^*} f_j(T).$



\begin{definition}[Gap Clearance Condition, $GCC(B)$]\label{def:GCC1}
For any $B\le T$, we say that an instance satisfies the Gap Clearance Condition $GCC(B)$ if $\Delta_I > 0$ and
$
\sum_{i=1}^K h_i(\Delta_I / 3) \;\le\; B.
$
\end{definition}

Under concavity, $\triangle_i(n)$ is an upper bound on the remaining possible increase in terminal value of arm $i$ after $n$ pulls: it denotes the most additional reward an adversary can ``hide in the tail''  by continuing with the last observed slope. The per-arm budget $h_i(\epsilon)$ is therefore the minimal number of pulls needed to ensure its optimistic terminal value is close to current lower envelope. 
Taking $\epsilon = \Delta_I/3$, once a suboptimal arm has been pulled this many times, its best possible continuation still loses to the best arm’s lower envelope. 
Thus $\mathrm{GCC}(B)$ ensures that the total work needed to certify the best arm fits within budget $B$. 
If the sum exceeds $B$, concavity allows at least one suboptimal arm to remain plausibly optimal by time $B$, so no sound mid-horizon certificate can be guaranteed.
For a comparison of this requirement to other `niceness' conditions from the literature and a proof of Theorem \ref{thm:hybrid-single2}, see Appendix~\ref{appendix:PR}.

\begin{theorem}[best-of-both-worlds guarantees] \label{thm:hybrid-single2}
Suppose an instance $I\in\mathcal{I}$ has Concavity Envelope Exponent $\beta_I\in(0,1]$. Algorithm \ref{alg:hybrid} with $\alpha \in(\beta_I, 1]$  satisfies the following:
\begin{enumerate}
[leftmargin=*,topsep=0pt,partopsep=1ex,parsep=1ex]\itemsep=-4pt
\item[(1)] If the instance further satisfies $\Delta_I>0$, and $GCC(\theta)$ holds for $\theta\le T/2$, then the algorithm with $B\in (\theta,T/2]$  identifies and commits to the best arm $i^\star$ in Stage~1.
    \item[(2)] If Stage~1 does not certify a best arm by time $B$, then Stage~2 finds an approximate best arm $\hat i$ such that 
    $\mathbb{E}\big[f_{\hat i}(T)\big]\ \ge\ \Omega{\left(k^{\frac{-\alpha}{1+\alpha}}\right)}\; f^\star(T),$ where the expectation is  over the randomness of the algorithm. 
\end{enumerate}
\end{theorem}

\begin{proof}[Proof Overview]
(1) $\mathrm{GCC}(\theta)$ implies that the per–arm budgets $h_i(\Delta_I/3)$ sum to at most $\theta$. Since the algorithm pulls the arm with the largest slack, we know that these budgets are met within $B\ge\theta$ pulls. Then every suboptimal arm has $U_j\le f^*(T)-2\Delta_I/3$ and the best arm has $L_{i^\star}\ge f^*(T)-\Delta_I/3$, which implies that $L_{i^\star}\ge\max_{j\ne i^\star}U_j$. We commit to $i^\star$.
(2) If no certificate fires by $B\le T/2$, running $\textit{PTRR}_\alpha$ for $T_{\mathrm{rem}}$ steps yields an average reward within a $k^{\alpha/(1+\alpha)}$ factor of the residual optimum (up to constants). Using the fact that the best single pull is at least the average, and $\mathrm{OPT}^{\mathrm{res}}$ is at least a constant times $g^\star(T_{\mathrm{rem}})\,T_{\mathrm{rem}}$ by concavity, we get $\mathbb{E}[f_{\hat i}(T)]\ge \Omega\!\big(k^{-\alpha/(1+\alpha)}\big)\,f^\star(T)$. 
\end{proof}

Finally, in Appendix \ref{sec:hybridcomplexity}, we establish bounds on the sample complexity of tuning $\alpha,B$ in the {\it Hybrid} family. 
\section{Conclusion}
We provide beyond worst-case guarantees for the improving multi-armed bandits problem. 
Employing the lens of data-driven algorithm design, we show that using sampled instances from a distribution, we can find algorithms that are near-optimal for instances arising from {\em that} distribution. We introduce two novel and interesting families of algorithms. Our first family achieves stronger competitive ratios that depend on the strength of concavity of the reward curves, and the second achieves a best-of-both-worlds guarantee on benign as well as worst-case instances which prior techniques in the literature fail to achieve. \looseness-1

Our work opens up several directions for future research. It would be interesting to extend our notion of strength of concavity to more refined arm-dependent and piecewise structures with unknown switching times, where techniques for ``tracking regret'' \cite{herbster2001tracking,sharma2020learning} might be helpful. Our best-of-both-worlds results mirror known ``benign'' regimes where sublinear regret is possible, but it would be interesting to determine the precise necessary and sufficient conditions for achieving sub-linear regret in improving bandits. Computationally efficient implementation and sample complexity lower bounds are  concrete questions raised by our offline-to-online hyperparameter transfer results. Finally, it would be interesting to develop similar guarantees for variants of the improving bandits model studied in the literature, including stochastic (arm reward means are concave non-decreasing, with sub-Gaussian variance) and restless (arms evolve with time, not pulls) versions.

\printbibliography

@inproceedings{heidari_tight_nodate,
author = {Heidari, Hoda and Kearns, Michael and Roth, Aaron},
title = {Tight policy regret bounds for improving and decaying bandits},
year = {2016},
isbn = {9781577357704},
publisher = {AAAI Press},
booktitle = {Proceedings of the Twenty-Fifth International Joint Conference on Artificial Intelligence},
pages = {1562–1570},
numpages = {9},
location = {New York, New York, USA},
series = {IJCAI'16}
}

@inproceedings{yanlcdb,
  title={LCDB 1.1: A Database Illustrating Learning Curves Are More Ill-Behaved Than Previously Thought},
  author={Yan, Cheng and Mohr, Felix and Viering, Tom Julian},
  booktitle={The Thirty-ninth Annual Conference on Neural Information Processing Systems Datasets and Benchmarks Track},
  year={2025}
}

@article{herbster2001tracking,
  title={Tracking the best linear predictor},
  author={Herbster, Mark and Warmuth, Manfred K},
  journal={Journal of Machine Learning Research},
  volume={1},
  number={Sep},
  pages={281--309},
  year={2001}
}

@inproceedings{sharma2020learning,
  title={Learning piecewise Lipschitz functions in changing environments},
  author={Sharma, Dravyansh and Balcan, Maria-Florina and Dick, Travis},
  booktitle={International Conference on Artificial Intelligence and Statistics},
  pages={3567--3577},
  year={2020},
  organization={PMLR}
}

@inproceedings{agarwal2017corralling,
  title={Corralling a band of bandit algorithms},
  author={Agarwal, Alekh and Luo, Haipeng and Neyshabur, Behnam and Schapire, Robert E},
  booktitle={Conference on Learning Theory},
  pages={12--38},
  year={2017},
  organization={PMLR}
}

@inproceedings{luo2022corralling,
  title={Corralling a larger band of bandits: A case study on switching regret for linear bandits},
  author={Luo, Haipeng and Zhang, Mengxiao and Zhao, Peng and Zhou, Zhi-Hua},
  booktitle={Conference on Learning Theory},
  pages={3635--3684},
  year={2022},
  organization={PMLR}
}

@inproceedings{arora2021corralling,
  title={Corralling stochastic bandit algorithms},
  author={Arora, Raman and Marinov, Teodor Vanislavov and Mohri, Mehryar},
  booktitle={International Conference on Artificial Intelligence and Statistics},
  pages={2116--2124},
  year={2021},
  organization={PMLR}
}

@inproceedings{sharma2024no,
  title={No internal regret with non-convex loss functions},
  author={Sharma, Dravyansh},
  booktitle={Proceedings of the AAAI Conference on Artificial Intelligence},
  volume={38},
  number={13},
  pages={14919--14927},
  year={2024}
}

@article{balcanalgorithm,
  title={Algorithm Configuration for Structured Pfaffian Settings},
  author={Balcan, Maria Florina and Nguyen, Anh Tuan and Sharma, Dravyansh},
  journal={Transactions on Machine Learning Research},
  year={2025}
}

@inproceedings{chatziafratisaccelerating,
  title={Accelerating data-driven algorithm selection for combinatorial partitioning problems},
  author={Chatziafratis, Vaggos and Karmarkar, Ishani and Li, Yingxi and Vitercik, Ellen},
  booktitle={The Thirty-ninth Annual Conference on Neural Information Processing Systems},
  year={2025}
}

@article{balcan2024accelerating,
  title={Accelerating ERM for data-driven algorithm design using output-sensitive techniques},
  author={Balcan, Maria-Florina and Seiler, Christopher and Sharma, Dravyansh},
  journal={Advances in Neural Information Processing Systems},
  volume={37},
  pages={72648--72687},
  year={2024}
}

@inproceedings{mussi2024best,
  title={Best arm identification for stochastic rising bandits},
  author={Mussi, Marco and Montenegro, Alessandro and Trov{\`o}, Francesco and Restelli, Marcello and Metelli, Alberto Maria},
  booktitle={Proceedings of the 41st International Conference on Machine Learning},
  pages={36953--36989},
  year={2024}
}

@inproceedings{gupta2016pac,
  title={A {PAC} approach to application-specific algorithm selection},
  author={Gupta, Rishi and Roughgarden, Tim},
  booktitle={Proceedings of the 2016 ACM Conference on Innovations in Theoretical Computer Science},
  pages={123--134},
  year={2016}
}

@article{balcan2024much,
  title={How much data is sufficient to learn high-performing algorithms?},
  author={Balcan, Maria-Florina and Deblasio, Dan and Dick, Travis and Kingsford, Carl and Sandholm, Tuomas and Vitercik, Ellen},
  journal={Journal of the ACM},
  volume={71},
  number={5},
  pages={1--58},
  year={2024},
  publisher={ACM New York, NY}
}

@inproceedings{balcan2017learning,
  title={Learning-theoretic foundations of algorithm configuration for combinatorial partitioning problems},
  author={Balcan, Maria-Florina and Nagarajan, Vaishnavh and Vitercik, Ellen and White, Colin},
  booktitle={Conference on Learning Theory},
  pages={213--274},
  year={2017},
  organization={PMLR}
}

@inproceedings{balcan2018dispersion,
  title={Dispersion for data-driven algorithm design, online learning, and private optimization},
  author={Balcan, Maria-Florina and Dick, Travis and Vitercik, Ellen},
  booktitle={2018 IEEE 59th Annual Symposium on Foundations of Computer Science (FOCS)},
  pages={603--614},
  year={2018},
  organization={IEEE}
}

@inproceedings{balcan2020data,
  title={{Data-Driven Algorithm Design} (Book Chapter)},
  author={Maria-Florina Balcan},
  booktitle={Beyond Worst-Case Analysis of Algorithms, Tim Roughgarden (Ed)},
  year={2020},
  publisher={{Cambridge University Press}}
}

@inproceedings{sharma2025offline,
  title={Offline-to-online hyperparameter transfer for stochastic bandits},
  author={Sharma, Dravyansh and Suggala, Arun},
  booktitle={Proceedings of the AAAI Conference on Artificial Intelligence},
  volume={39},
  number={19},
  pages={20362--20370},
  year={2025},
  url={https://arxiv.org/abs/2501.02926}
}

@article{khodak2023meta,
  title={Meta-learning adversarial bandit algorithms},
  author={Khodak, Mikhail and Osadchiy, Ilya and Harris, Keegan and Balcan, Maria-Florina and Levy, Kfir Y and Meir, Ron and Wu, Steven Z},
  journal={Advances in Neural Information Processing Systems},
  volume={36},
  pages={35441--35471},
  year={2023}
}

@inproceedings{bartlett2022generalization,
  title={Generalization bounds for data-driven numerical linear algebra},
  author={Bartlett, Peter and Indyk, Piotr and Wagner, Tal},
  booktitle={Conference on Learning Theory (COLT)},
  pages={2013--2040},
  year={2022},
  organization={PMLR}
}

@inproceedings{jinsample,
  title={Sample Complexity of Posted Pricing for a Single Item},
  author={Jin, Billy and Kesselheim, Thomas and Ma, Will and Singla, Sahil},
  booktitle={Advances in Neural Information Processing Systems},
  year={2024}
}

@article{cheng2024learning,
  title={Learning cut generating functions for integer programming},
  author={Cheng, Hongyu and Basu, Amitabh},
  journal={Advances in Neural Information Processing Systems},
  volume={37},
  pages={61455--61480},
  year={2024}
}

@article{sakaue2024generalization,
  title={Generalization bound and learning methods for data-driven projections in linear programming},
  author={Sakaue, Shinsaku and Oki, Taihei},
  journal={Advances in Neural Information Processing Systems},
  volume={37},
  pages={12825--12846},
  year={2024}
}

@article{khodak2024learning,
  title={Learning to Relax: Setting Solver Parameters Across a Sequence of Linear System Instances},
  author={Khodak, Mikhail and Chow, Edmond and Balcan, Maria-Florina and Talwalkar, Ameet},
  year={2024},
  journal={The Twelfth International Conference on Learning Representations (ICLR)}
}

@inproceedings{blum2021learning,
  title={Learning complexity of simulated annealing},
  author={Blum, Avrim and Dan, Chen and Seddighin, Saeed},
  booktitle={{International Conference on Artificial Intelligence and Statistics (AISTATS)}},
  pages={1540--1548},
  year={2021},
  organization={PMLR}
}

@inproceedings{balcan2024trees,
  title={Learning Accurate and Interpretable Decision Trees},
  author={Balcan, Maria-Florina and Sharma, Dravyansh},
  booktitle={Uncertainty in Artificial Intelligence},
  pages={288--307},
  year={2024},
  organization={PMLR}
}

@inproceedings{patil_mitigating_2023,
	location = {Macau, {SAR} China},
	title = {Mitigating Disparity while Maximizing Reward: Tight Anytime Guarantee for Improving Bandits},
	isbn = {978-1-956792-03-4},
	url = {https://www.ijcai.org/proceedings/2023/456},
	doi = {10.24963/ijcai.2023/456},
	shorttitle = {Mitigating Disparity while Maximizing Reward},
	eventtitle = {Thirty-Second International Joint Conference on Artificial Intelligence \{{IJCAI}-23\}},
	pages = {4100--4108},
	booktitle = {Proceedings of the Thirty-Second International Joint Conference on Artificial Intelligence},
	publisher = {International Joint Conferences on Artificial Intelligence Organization},
	author = {Patil, Vishakha and Nair, Vineet and Ghalme, Ganesh and Khan, Arindam},
	urldate = {2024-01-22},
	date = {2023-08},
        year = {2023},
	langid = {english},
}

@InProceedings{pmlr-v272-blum25a,
  title = 	 {Nearly-tight Approximation Guarantees for the Improving Multi-Armed Bandits Problem},
  author =       {Blum, Avrim and Ravichandran, Kavya},
  booktitle = 	 {Proceedings of The 36th International Conference on Algorithmic Learning Theory},
  pages = 	 {228--245},
  year = 	 {2025},
  editor = 	 {Kamath, Gautam and Loh, Po-Ling},
  volume = 	 {272},
  series = 	 {Proceedings of Machine Learning Research},
  month = 	 {2},
  publisher =    {PMLR},
  pdf = 	 {https://raw.githubusercontent.com/mlresearch/v272/main/assets/blum25a/blum25a.pdf},
  url = 	 {https://proceedings.mlr.press/v272/blum25a.html},
  abstract = 	 {We give nearly-tight upper and lower bounds for the <em>improving multi-armed bandits</em> problem. An instance of this problem has $k$ arms, each of whose reward function is a concave and increasing function of the <em>number of times that arm has been pulled so far</em>. We show that for any randomized online algorithm, there exists an instance on which it must suffer at least an $\Omega(\sqrt{k})$ approximation factor relative to the optimal reward. We then provide a randomized online algorithm that guarantees an $O(\sqrt{k})$ approximation factor, if it is told the maximum reward achievable by the optimal arm in advance. We then show how to remove this assumption at the cost of an extra $O(\log k)$ approximation factor, achieving an overall $O(\sqrt{k} \log k)$ approximation relative to optimal. }
}

@inproceedings{metelli_stochastic_2022,
  title={Stochastic rising bandits},
  author={Metelli, Alberto Maria and Trovo, Francesco and Pirola, Matteo and Restelli, Marcello},
  booktitle={International Conference on Machine Learning},
  pages={15421--15457},
  year={2022},
  organization={PMLR}
}

\clearpage
\onecolumn
\appendix
\section{Additional Related Work}\label{appendix:additional-related}
\textit{Improving (Rising) Bandits.} Initially, \cite{heidari_tight_nodate} formulated the improving (and decaying) bandits problem where payoffs increase the more the arm is played, and obtain sub-linear policy regret. \cite{patil_mitigating_2023} show that a linear regret is unavoidable in the worst-case and obtain a tight $\Theta(k)$ on the competitive ratio of deterministic algorithms. 
Then, \cite{pmlr-v272-blum25a} extend the development on this problem to randomized algorithms, 
achieving an $O(\sqrt{k} \log k)$ upper bound, nearly matching their $\Omega(\sqrt{k})$ lower bound. A related line of work is that on (deterministic, rested) {\em rising} bandits \cite{metelli_stochastic_2022,mussi2024best}. This line of work bounds the instance-dependent policy regret on benign instances but otherwise considers similar increasing reward functions. We note that policy regret is a stronger notion of regret than {\em external regret}, where an algorithm is compared to the best action on {\em the same} set of rewards. In policy regret (and in the competitive ratios we consider), an algorithm is compared to the best policy for the sequence of rewards generated by {\em playing that policy}.\looseness-1

\textit{Data-Driven Algorithm Design.}  Data-driven algorithm design is emerging as a powerful approach for using machine learning to design algorithms that have strong performance on “typical” input instances, as opposed to worst-case or average-case analysis~\cite{gupta2016pac,balcan2017learning,balcan2018dispersion,balcan2024much}. The approach is known to be effective in both statistical and online learning settings \cite{balcan2020data}, and, under this paradigm, a growing line of research has successfully developed techniques for designing several fundamental  algorithms in machine learning and beyond (e.g.~\cite{blum2021learning,bartlett2022generalization,jinsample,khodak2024learning,balcan2024trees,cheng2024learning,sakaue2024generalization,sharma2024no,balcanalgorithm}). Recent work \cite{sharma2025offline} has shown how to tune hyperparameters in standard multi-armed bandit algorithms like UCB, LinUCB and GP-UCB under this paradigm. While they focus on stochastic bandits, we extend their techniques to tune hyperparameters in the non-stochastic improving bandits setting. Another related work \cite{khodak2023meta} shows how to tune hyperparameters for adversarial bandit algorithms in an online-with-online meta-learning setting. While we focus on statistical complexity of tuning our bandit algorithms, an interesting future direction is to give computationally efficient algorithms~\cite{balcan2024accelerating,chatziafratisaccelerating}.\looseness-1

\section{Full Proofs for Sharper Competitive Ratio}
\label{appendix:competitive-ratio}
In this Appendix, we give full proofs of our optimal sharper competitive ratio for each value of the Concavity Envelope Exponent $\beta_I$ satisfied by the instance $I$, along with a formal proof of the ``doubling trick'' applied in our context for completeness. Our analysis extends the techniques due to \cite{pmlr-v272-blum25a}.
\subsection{Upper Bound} \label{appendix:upperbd-proof}
Suppose an instance has Concavity Envelope Exponent $\beta_I \in (0,1)$,  and fix $\alpha \in (\beta_I, 1)$. Let $\gamma := \frac{\alpha}{\alpha+1}$.
For $\tau'\ge0$ and $k'\ge1$, let $V(\tau',k')$ denote the expected reward earned in the next $\tau'+k'$ pulls when $k'$ arms are still untouched. Let $V(\tau'',\cdots)=0$ for all $\tau''\le0$.

We analyze $\textit{PTRR}_\alpha$ through a recurrence that trades off two properties. First, the optimal arm is never abandoned when $\alpha > \beta_I$ (Lemma~\ref{lem:never-drop}). Second, abandoning any non‑optimal arm after $t$ pulls yields at least the “area under the envelope” up to $t$ (Lemma~\ref{lem:area}). These two facts give the value recurrence in Lemma~\ref{lem:rec}: with probability $1/k'$ we hit $f^*$ and earn $\mathrm{OPT}_{\tau'+k'}$; otherwise we bank the area from time $t$ and recurse from $(\tau'-t,k'-1)$ with the minimizer attained on $[0,\tau']$. We then solve the recurrence by normalizing $t$ to $y=t/\tau'$ and minimizing a one‑variable convex form $u\,y^{\alpha+1}+v(1-y)^{\alpha+1}$ exactly (Lemma~\ref{lem:exact-min}). A clean lower bound on this minimum (Lemma~\ref{lem:balance}) closes the induction and yields
$$
V(\tau',k') \ge \frac{m}{(\alpha+1)\tau^\alpha}\cdot \frac{(\tau')^{\alpha+1}}{2\,(k'+1)^\gamma}
$$
(Lemma~\ref{lem:solve}). Finally, substituting $(\tau',k')=(T-k,k)$ and $\mathrm{OPT}_T$’s comparison to $m$ gives Theorem~\ref{thm:upper}. A more detailed overview of the proof is included after the theorem.

Our analysis of $\mathit{PTRR}_\alpha$ holds for any parameter $m$ that satisfies
$$\frac{1}{c_2} f^*(T-k) \leq m \leq f^*(T) \left( \frac{T-k}{T} \right)^\alpha$$ for some constant $c_2 \geq 1$. In particular, setting $m := \frac{T - k}{T} f^{\star}(T)$ gives $c_2 = \frac{T}{T - k} \in [1,2]$ when $T \ge 2k$. Since $\frac{T-k}{T} \in (0,1]$ and $\alpha \in (0,1)$, we also know that $f^{\star}(T) \frac{T - k}{T} \leq f^*(T) \left(\frac{T-k}{T}\right)^\alpha$, and therefore that $\frac{1}{2} f^{\star}(T - k) \le m \le f^*(T) \left( \frac{T-k}{T} \right)^\alpha$ in this case. 

\begin{lemma}[Optimal arm is never dropped]\label{lem:never-drop}
$\textit{PTRR}_\alpha$ will never switch away from the optimal arm $f^*$. 
\end{lemma}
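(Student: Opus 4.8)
The plan is to show directly that the keep-test $f^*(t)\ge m\bigl(t/\tau\bigr)^\alpha$ never fails for any $t\le T$; since the inner while-loop only abandons an arm when its test first fails, this means that once the algorithm samples $f^*$ it stays on it until the horizon. The whole argument rests on two ingredients that I would establish in order: first, that $f^*$ obeys the lower-envelope bound $\mathrm{LE}(\alpha)$; and second, that the threshold $m(t/\tau)^\alpha$ sits below this envelope, which is exactly where the upper bound on $m$ gets consumed.

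First I would record that the set $\{\beta\in(0,1]:\text{every arm satisfies }\mathrm{LE}(\beta)\}$ is upward closed. Indeed, since $t/T\in(0,1]$, the map $\beta\mapsto (t/T)^\beta$ is nonincreasing, so $\mathrm{LE}(\beta)$ implies $\mathrm{LE}(\beta')$ for every $\beta'\ge\beta$. Consequently, for any $\alpha>\beta_I=\inf\{\dots\}$, every arm---in particular $f^*$---satisfies $\mathrm{LE}(\alpha)$, i.e.
$$f^*(t)\ \ge\ f^*(T)\left(\frac{t}{T}\right)^\alpha\qquad\text{for all }t\le T.$$

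Next I would chain this with the admissible range for $m$ recorded just above the lemma, namely $m\le f^*(T)(\tau/T)^\alpha$ (which holds for $m=\tfrac{T-k}{T}f^*(T)$ when $T\ge 2k$). Factoring the $\mathrm{LE}(\alpha)$ bound through $\tau$,
$$f^*(t)\ \ge\ f^*(T)\left(\frac{t}{T}\right)^\alpha=\left[f^*(T)\left(\frac{\tau}{T}\right)^\alpha\right]\left(\frac{t}{\tau}\right)^\alpha\ \ge\ m\left(\frac{t}{\tau}\right)^\alpha,$$
where the final inequality is precisely the upper bound on $m$. Hence the keep-test passes at every $t\le T$ (the $t=0$ check being trivial since $f^*(0)\ge 0$), so the loop never terminates on $f^*$ before time $T$, which is the claim.

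The argument is short; the only genuine content---and the step I would be most careful about---is making explicit why $\alpha>\beta_I$ forces $f^*$ to obey $\mathrm{LE}(\alpha)$, since $\beta_I$ is defined as an infimum and one should rule out boundary subtleties via the upward-closure observation. The remaining manipulation is just the bookkeeping that converts the $1/T^\alpha$ normalization of $\mathrm{LE}(\alpha)$ into the $1/\tau^\alpha$ normalization of the algorithm's threshold, which is exactly the role played by the safety range $m\le f^*(T)(\tau/T)^\alpha$.
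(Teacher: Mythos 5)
Your proof is correct and follows essentially the same route as the paper's: both reduce the claim to the chain $f^*(t)\ge f^*(T)(t/T)^\alpha = f^*(T)(\tau/T)^\alpha (t/\tau)^\alpha \ge m(t/\tau)^\alpha$, using the safety range on $m$. The only difference is cosmetic but in your favor: the paper asserts $\mathrm{LE}(\beta_I)$ directly ``by definition of the CEE'' and then uses $(t/T)^{\beta_I}\ge (t/T)^\alpha$, whereas you invoke upward closure of the $\mathrm{LE}$ set plus the definition of the infimum to get $\mathrm{LE}(\alpha)$ for $\alpha>\beta_I$, which cleanly sidesteps the question of whether the envelope holds at the infimum itself.
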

\begin{proof}
By definition of the CEE, we know that
$
f_{\star}(t) \ge\ f_{\star}(T)\left( \frac{t}{T} \right)^{\beta_I}
$
for all $0\le t\le T$. Moreover, recall that $m \leq f^*(T) \left( \frac{\tau}{T}\right)^\alpha$. Since $\frac{\tau}{T} \in (0,1]$ and $\alpha > \beta_I$, it follows that
$$
f^{\star}(t) \ge f^{\star}(T) \left( \frac{t}{T} \right)^{\beta_I}
\ge f^{\star}(T) \left( \frac{t}{T} \right)^\alpha = f^{\star}(T) \left( \frac{\tau}{T} \right)^{\alpha} \left( \frac{t}{\tau} \right)^{\alpha}
\ge  m \left( \frac{t}{\tau} \right)^{\alpha}.
$$
for all $t \le T$, and therefore that the test never fails once the optimal arm is reached.
\end{proof}

\begin{lemma}[Area before abandonment]\label{lem:area}
If the keep-test holds for $t_i-1$ pulls on arm $i$ and fails at $t_i$, we obtain a reward of at least
$$
 \frac{m}{(\alpha+1)\tau^\alpha}\,(t_i-1)^{\alpha+1}.
$$
\end{lemma}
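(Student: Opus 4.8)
The plan is to lower-bound the reward banked on arm $i$ by the sum of the envelope thresholds that the keep-test has certified, and then replace that discrete sum by an integral. First I would recall the bookkeeping in Algorithm~\ref{alg:ptrr}: while arm $i$ is active, the pull that advances its counter from $s-1$ to $s$ adds $f_i(s)$ to the running reward, and immediately before that pull the keep-test must have passed. Equivalently, the test at counter $s$ reads $f_i(s) \ge m(s/\tau)^\alpha$, and by hypothesis it holds for every $s \in \{1, \dots, t_i - 1\}$ and fails only at $s = t_i$.

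Next I would discard the final (possibly sub-threshold) pull and keep only the certified ones: the reward accrued on arm $i$ is at least $\sum_{s=1}^{t_i-1} f_i(s)$, and each summand is bounded below by its threshold, giving $\sum_{s=1}^{t_i-1} f_i(s) \ge \frac{m}{\tau^\alpha}\sum_{s=1}^{t_i-1} s^\alpha$. The remaining step is the standard sum-to-integral comparison: since $x \mapsto x^\alpha$ is increasing on $[0,\infty)$ for $\alpha \in (0,1]$, we have $s^\alpha \ge \int_{s-1}^{s} x^\alpha\,dx$, so summing telescopes to $\sum_{s=1}^{t_i-1} s^\alpha \ge \int_0^{t_i-1} x^\alpha\,dx = \frac{(t_i-1)^{\alpha+1}}{\alpha+1}$. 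Substituting this back recovers exactly the claimed bound $\frac{m}{(\alpha+1)\tau^\alpha}(t_i-1)^{\alpha+1}$.

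There is no serious obstacle here; the argument is essentially mechanical, and the ``hard part'' is really just the indexing. The counter-$0$ test is vacuous (its right-hand side is $0$), and one must drop the terminal pull at $t_i$ rather than try to include it, since its reward is precisely the term that violated the threshold. Keeping the sum over $s = 1, \dots, t_i - 1$ both respects the hypothesis and produces the $(t_i - 1)^{\alpha+1}$ factor, so this off-by-one is what makes the constant come out cleanly. The boundary case $t_i = 1$ is handled automatically, since the empty sum gives the trivially true bound $\text{reward} \ge 0$.
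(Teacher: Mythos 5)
Your proof is correct and follows essentially the same route as the paper's: lower-bound the accrued reward by the sum of certified thresholds $\sum_{s=1}^{t_i-1} m(s/\tau)^\alpha$, then use monotonicity of $x^\alpha$ to compare this sum to $\int_0^{t_i-1} x^\alpha\,dx = \frac{(t_i-1)^{\alpha+1}}{\alpha+1}$. Your extra remarks on the indexing, the discarded terminal pull, and the $t_i=1$ boundary case are sound and merely make explicit what the paper leaves implicit.
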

\begin{proof}
For $1\le s<t_i$, we know that $f_i(s)\geq m(s/\tau)^\alpha$. Summing and using the monotonicity of $x^\alpha$ on $[0,\infty)$ gives
$$
 \sum_{s=1}^{t_i-1} f_i(s)\ \ge\ m\tau^{-\alpha}\sum_{s=1}^{t_i-1}s^\alpha \ \ge\ m\tau^{-\alpha}\int_{0}^{t_i-1} x^\alpha\,dx \ \geq \ \frac{m}{(\alpha+1)\tau^\alpha}\,(t_i-1)^{\alpha+1}.
$$
\end{proof}

\noindent With $k'$ untouched arms, we either hit the best arm now (probability $1/k'$) and then take the optimum path, or we spend $t$ pulls on a non-best arm, bank the area from Lemma \ref{lem:area}, and recurse on $(\tau' - t, k' - 1)$. To ensure a worst-case guarantee, we take the minimum over all feasible abandonment times $t$.

\begin{lemma}[Value recurrence]\label{lem:rec}
For all $\tau'\ge0$, $k'\ge1$,
\begin{equation}
V(\tau',k')\ \ge\ \frac{1}{k'}\,\mathrm{OPT}_{\tau'+k'} +\Bigl(1-\frac{1}{k'}\Bigr)\min_{0\leq t\leq \tau'+k'-1}\!\left[\frac{m}{(\alpha+1)\tau^\alpha}\,t^{\alpha+1}+V(\tau'-t,k'-1)\right]\!, \label{eq:rec}
\end{equation}
and the minimum is attained on $[0,\tau']$.
\end{lemma}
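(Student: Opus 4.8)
The plan is to establish the recurrence by conditioning on the identity of the first arm that $\textit{PTRR}_\alpha$ samples from the pool of $k'$ untouched arms, which is uniform by construction, and then writing $V(\tau',k')$ as the corresponding $(1/k')$-weighted average over this first pick. First I would treat the branch where the sampled arm is the globally optimal arm $f^\star$, an event of probability $1/k'$. By Lemma~\ref{lem:never-drop} the keep-test never fails on $f^\star$ once $\alpha>\beta_I$, so the algorithm plays $f^\star$ for the entire remaining budget of $\tau'+k'$ pulls and collects $\sum_{n=1}^{\tau'+k'} f^\star(n)=\mathrm{OPT}_{\tau'+k'}$, which is the first term of~\eqref{eq:rec}.

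Next I would treat the complementary branch, of total probability $1-1/k'$, in which one of the $k'-1$ suboptimal arms is sampled. Fixing such an arm, let $p$ be the number of pulls before it is abandoned (the keep-test holds on pulls $1,\dots,p-1$ and fails at pull $p$) and set $t:=p-1$. Lemma~\ref{lem:area} then lower-bounds the reward banked on this arm by $\frac{m}{(\alpha+1)\tau^\alpha}\,t^{\alpha+1}$. The crucial bookkeeping step is that after abandonment exactly $k'-1$ untouched arms remain (still including $f^\star$) and the leftover budget is $(\tau'+k')-p=(\tau'-t)+(k'-1)$, so the continuation is a genuine sub-instance of the same problem and contributes expected reward $V(\tau'-t,k'-1)$. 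Since the realized $t$ is unknown and arm-dependent, I would lower-bound the contribution of every suboptimal arm, hence their average, by the minimum over all feasible $t$; the budget constraint $1\le p\le\tau'+k'$ is precisely $0\le t\le\tau'+k'-1$. Combining the two branches gives inequality~\eqref{eq:rec}.

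Finally I would confirm the minimum is attained on $[0,\tau']$. For any integer $t>\tau'$ the recursive argument $\tau'-t$ is negative, so by the convention $V(\tau'-t,k'-1)=0$ and the bracket collapses to the strictly increasing area term $\frac{m}{(\alpha+1)\tau^\alpha}\,t^{\alpha+1}$; its value there strictly exceeds its value at $t=\tau'$, where the recursive term is also $V(0,k'-1)=0$. Hence no minimizer lies in $(\tau',\tau'+k'-1]$.

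I expect the main obstacle to be the off-by-one bookkeeping that must simultaneously reconcile three quantities: the number of pulls $p$, the exponent variable $t=p-1$ appearing in Lemma~\ref{lem:area}, and the budget decrement, so that the recursive state lands exactly on $(\tau'-t,k'-1)$ with residual budget $(\tau'-t)+(k'-1)$ consistent with the $\tau'+k'$ convention built into the definition of $V$. A secondary point requiring care is the justification that replacing each suboptimal arm's true, instance-dependent continuation value by the generic $V(\tau'-t,k'-1)$ and then minimizing over $t$ yields a valid lower bound; this rests on the continuation being a bona fide instance of the same type, so that the very lower bound being proved by induction applies to it uniformly.
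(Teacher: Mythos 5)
Your proposal is correct and follows essentially the same route as the paper's proof: condition on the uniformly sampled first arm, use Lemma~\ref{lem:never-drop} to earn $\mathrm{OPT}_{\tau'+k'}$ on the probability-$1/k'$ branch, use Lemma~\ref{lem:area} plus recursion on $(\tau'-t,k'-1)$ on the complementary branch, and pass to the minimum over feasible $t$. Your off-by-one accounting (total pulls $p$, passing pulls $t=p-1$, leftover budget $(\tau'-t)+(k'-1)$) and your argument that the minimizer lies in $[0,\tau']$ via the convention $V(\tau'',\cdot)=0$ for $\tau''\le 0$ both match the paper's reasoning, and are in fact spelled out in more detail than the paper's own three-sentence proof.
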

\begin{proof}
With probability $1/k'$ the next arm is $f^*$ and, by Lemma \ref{lem:never-drop}, the algorithm stays on it and earns $\mathrm{OPT}_{\tau'+k'}$. Else it plays a non-optimal arm. If it abandons this arm at time $t$, Lemma~\ref{lem:area} gives the earned area and the process recurses on $(\tau'-t,k'-1)$. If $t\geq \tau'$, then $V(\tau'-t,k'-1)=0$ while the area term increases in $t$, which implies that the minimizer lies in $[0,\tau']$.
\end{proof}

\noindent Below, we prove two technical results that aid our analysis of the recurrence. First, the recurrence reduces to minimizing $u\, y^p + v(1 - y)^p$ over $y \in [0,1]$. This convex function has a unique minimizer balancing the two terms. We write that minimum in closed form to keep constants explicit.

\begin{lemma}[One-variable minimum]\label{lem:exact-min}
For $p>1$ and $u,v>0$,
$$
 \min_{y\in[0,1]}\ \{\,u\,y^p+v(1-y)^p\,\} \ =\ \bigl(u^{-1/(p-1)}+v^{-1/(p-1)}\bigr)^{-(p-1)}.
$$
\end{lemma}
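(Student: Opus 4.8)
The plan is to solve this unconstrained-looking minimization by straightforward single-variable calculus, and then verify that the critical point actually lies in the interior $(0,1)$ so that the boundary does not interfere. Write $\phi(y) := u\,y^p + v(1-y)^p$. Since $p > 1$, each summand is strictly convex on $[0,1]$ (the second derivative $u\,p(p-1)y^{p-2}$ and $v\,p(p-1)(1-y)^{p-2}$ are both nonnegative, and positive on the interior), so $\phi$ is strictly convex and has a unique minimizer. First I would set $\phi'(y) = p\,u\,y^{p-1} - p\,v\,(1-y)^{p-1} = 0$, which rearranges to $u\,y^{p-1} = v\,(1-y)^{p-1}$, i.e.
$$
\left(\frac{y}{1-y}\right)^{p-1} = \frac{v}{u}, \qquad \text{so} \qquad \frac{y}{1-y} = \left(\frac{v}{u}\right)^{1/(p-1)}.
$$

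Next I would solve for $y$ and $1-y$ explicitly. Setting $a := u^{-1/(p-1)}$ and $b := v^{-1/(p-1)}$ (both positive, so the critical point is genuinely interior, confirming we may ignore the endpoints $y=0,1$ where $\phi$ is larger by convexity), the relation $y/(1-y) = a/b$ gives
$$
y^\star = \frac{a}{a+b}, \qquad 1 - y^\star = \frac{b}{a+b}.
$$
Then I would substitute back into $\phi$. The key observation that keeps the algebra clean is that $u = a^{-(p-1)}$ and $v = b^{-(p-1)}$, so
$$
u\,(y^\star)^p = a^{-(p-1)}\left(\frac{a}{a+b}\right)^p = \frac{a}{(a+b)^p}, \qquad v\,(1-y^\star)^p = \frac{b}{(a+b)^p}.
$$

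Adding these two terms yields $\phi(y^\star) = (a+b)/(a+b)^p = (a+b)^{-(p-1)}$, which is exactly $\bigl(u^{-1/(p-1)} + v^{-1/(p-1)}\bigr)^{-(p-1)}$, the claimed value. I do not anticipate a genuine obstacle here, since strict convexity fully justifies the first-order condition and the positivity of $u,v$ guarantees interiority; the only thing requiring care is the bookkeeping of exponents when simplifying $u\,(y^\star)^p$, where recognizing the cancellation $a^{-(p-1)}\cdot a^p = a$ is what makes both terms collapse to the common denominator $(a+b)^p$. I would state the convexity/uniqueness remark first so that the calculus computation suffices to identify the global minimum rather than merely a critical point.
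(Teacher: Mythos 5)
Your proof is correct and follows essentially the same route as the paper's: establish strict convexity, solve the first-order condition $u\,y^{p-1}=v\,(1-y)^{p-1}$ for the interior minimizer, and substitute back. The only difference is cosmetic — you set $a=u^{-1/(p-1)}$, $b=v^{-1/(p-1)}$ (reciprocals of the paper's substitution), which makes the final algebra collapse slightly more cleanly to $(a+b)^{-(p-1)}$.
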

\begin{proof}
Let $f(y):=u\,y^p+v(1-y)^p$ on $[0,1]$. For $p>1$, we have
$$
 f''(y)=u\,p(p\!-\!1)y^{p-2}+v\,p(p\!-\!1)(1-y)^{p-2}>0\quad(y\in(0,1)),
$$
so $f$ is strictly convex. Moreover, note that
$$
 f'(y)=u p\,y^{p-1}-v p\,(1-y)^{p-1},\quad f'(0^+)=-vp<0,\ \ f'(1^-)=up>0,
$$
which implies that $f'$ has a unique global minimizer $y^\star\in(0,1)$. Solving $u\,{y^\star}^{p-1}=v\,(1-y^\star)^{p-1}$ gives
$$
 y^\star=\frac{v^{1/(p-1)}}{u^{1/(p-1)}+v^{1/(p-1)}}.
$$
Letting $a:=u^{1/(p-1)}$ and $b:=v^{1/(p-1)}$, it follows that
$$
 \min_{y\in[0,1]} f(y)=f(y^\star)=\frac{a^{p-1}b^p+b^{p-1}a^p}{(a+b)^p} =\frac{(ab)^{p-1}}{(a+b)^{p-1}} =\bigl(u^{-1/(p-1)}+v^{-1/(p-1)}\bigr)^{-(p-1)}.
$$
\end{proof}

\noindent We now lower-bound the exact minimum with a simple balancing inequality that cleanly shows the dependence on $k'$.

\begin{lemma}[Balancing inequality]\label{lem:balance}
Let $\alpha\in(0,1]$ and $\gamma:=\frac{\alpha}{\alpha+1}$. For all integers $k'\ge1$,
\begin{equation}\label{eq:balance-half}
 \frac{1}{k'}+\Bigl(1-\frac{1}{k'}\Bigr)\Bigl(1+\bigl(2\,k'^{\gamma}\bigr)^{1/\alpha}\Bigr)^{-\alpha} \ \ge\ \frac{1}{2\,(k'+1)^{\gamma}}.
\end{equation}
\end{lemma}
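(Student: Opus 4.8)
The plan is to exploit the algebraic identity that makes the right-hand side appear in the first place. Writing $A := (2k'^\gamma)^{1/\alpha}$ for the quantity inside the parentheses on the left, the defining relation $\gamma = \frac{\alpha}{\alpha+1}$ gives $\gamma/\alpha = 1-\gamma$, and hence $A^{-\alpha} = (2k'^\gamma)^{-1} = \frac{1}{2k'^\gamma}$ exactly. So the ``main term'' of the second summand on the left is precisely $\frac{1}{2k'^\gamma}$, the $k'$-analogue of the target $\frac{1}{2(k'+1)^\gamma}$. This identity is the reason the exponent $\gamma$ is the correct one, and the whole argument reduces to controlling lower-order corrections.

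First I would convert the $+1$ inside the parentheses into an explicit correction via Bernoulli's inequality. Since the exponent $-\alpha$ lies in $[-1,0)$, we have $(1+x)^{-\alpha}\ge 1-\alpha x$ for $x\ge 0$; applying this with $x=A^{-1}$ yields
\[
(1+A)^{-\alpha}=A^{-\alpha}\,(1+A^{-1})^{-\alpha}\ \ge\ A^{-\alpha}-\alpha A^{-\alpha-1}=\frac{1}{2k'^\gamma}-\frac{\alpha}{2^{1+1/\alpha}\,k'},
\]
where the last equality again uses $\gamma/\alpha=1-\gamma$ to simplify $A^{-\alpha-1}$. I would abbreviate $P:=\frac{1}{2k'^\gamma}$, $Q:=\frac{\alpha}{2^{1+1/\alpha}k'}$, and $R:=\frac{1}{2(k'+1)^\gamma}$.

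Next I would substitute this lower bound into the left-hand side and regroup. A short rearrangement turns the desired inequality $\mathrm{LHS}\ge R$ into
\[
(P-R)\ +\ \tfrac{1}{k'}(1-P)\ -\ Q\bigl(1-\tfrac{1}{k'}\bigr)\ \ge\ 0,
\]
and I would bound the three pieces separately: $P-R\ge 0$ since $k'\mapsto k'^\gamma$ is increasing; $\tfrac{1}{k'}(1-P)\ge \tfrac{1}{2k'}$ because $P=\frac{1}{2k'^\gamma}\le\frac12$ for $k'\ge 1$; and $Q\bigl(1-\tfrac1{k'}\bigr)\le Q$. Summing these, the inequality follows (with a margin of $\tfrac{1}{4k'}$) as soon as $Q\le \tfrac{1}{4k'}$, i.e.\ as soon as $\frac{\alpha}{2^{1+1/\alpha}}\le\frac14$.

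The one genuine obstacle is this final uniform bound on the Bernoulli correction over all $\alpha\in(0,1]$. I would settle it by a monotonicity argument: setting $\phi(\alpha):=\frac{\alpha}{2^{1+1/\alpha}}$, the log-derivative $\frac{d}{d\alpha}\ln\phi(\alpha)=\frac1\alpha+\frac{\ln 2}{\alpha^2}>0$ shows $\phi$ is increasing on $(0,1]$, so it attains its maximum at $\alpha=1$, where $\phi(1)=\frac14$. Everything else is routine bookkeeping, and notably no case split on $k'$ is required: the argument holds uniformly for every integer $k'\ge1$.
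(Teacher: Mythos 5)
Your proposal is correct and takes essentially the same approach as the paper: both define $A=(2k'^{\gamma})^{1/\alpha}$, exploit the identity $A^{-\alpha}=\frac{1}{2k'^{\gamma}}$, apply the tangent-line (Bernoulli/convexity) bound to get $(1+A)^{-\alpha}\ge \frac{1}{2k'^{\gamma}}-\frac{\alpha}{2^{1+1/\alpha}k'}$, absorb the correction terms, and finish with $k'^{\gamma}\le (k'+1)^{\gamma}$. The only divergence is cosmetic bookkeeping at the end (your uniform bound $\frac{\alpha}{2^{1+1/\alpha}}\le\frac{1}{4}$ proved by monotonicity, versus the paper's $\frac{\alpha}{2^{1+1/\alpha}}\le\frac{\alpha}{2}$ leading to the slack term $\frac{1-\alpha}{2k'}\ge 0$), so the two arguments are interchangeable.
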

\begin{proof}
Let $A:=(2\,k'^{\gamma})^{1/\alpha}\ge0$. Note that the function $u\mapsto (1+u)^{-\alpha}$ is convex and decreasing on $[0,\infty)$ for every $\alpha>0$, which implies that
$$
 (1+A)^{-\alpha} =A^{-\alpha}\Bigl(1+\frac{1}{A}\Bigr)^{-\alpha} \ \ge\ A^{-\alpha}\Bigl(1-\frac{\alpha}{A}\Bigr) =\frac{1}{2\,k'^{\gamma}}-\frac{\alpha}{2^{1+1/\alpha}\,k'}
$$
for $A>0$. Substituting this lower bound into the left-hand side gives
$$
 \begin{aligned}
 \frac{1}{k'}+\Bigl(1-\frac{1}{k'}\Bigr)(1+A)^{-\alpha} &\geq \frac{1}{k'}+\Bigl(1-\frac{1}{k'}\Bigr)\!\left(\frac{1}{2\,k'^{\gamma}}-\frac{\alpha}{2^{1+1/\alpha}\,k'}\right)\\
 &= \frac{1}{2\,k'^{\gamma}}+\frac{1}{k'}-\frac{1}{2\,k'^{\gamma+1}} -\frac{\alpha}{2^{1+1/\alpha}\,k'}+\frac{\alpha}{2^{1+1/\alpha}\,k'^2}.
 \end{aligned}
$$
Since $k'^{\gamma}\ge1$, we know that $-\frac{1}{2\,k'^{\gamma+1}}\geq -\frac{1}{2k'}$ and $\frac{\alpha}{2^{1+1/\alpha}\,k'^2}\ge0$. Moreover, $2^{-1/\alpha}\le\frac12$ for $\alpha\in(0,1]$, which implies that $\frac{\alpha}{2^{1+1/\alpha}}\le\frac{\alpha}{2}$, and therefore that
$$
 \frac{1}{k'}-\frac{1}{2\,k'^{\gamma+1}}-\frac{\alpha}{2^{1+1/\alpha}\,k'} \ \ge\ \frac{1}{k'}-\frac{1}{2k'}-\frac{\alpha}{2k'} \ =\ \frac{1-\alpha}{2k'}\ \ge\ 0
$$
(with equality only when $\alpha=1$). Combining these estimates gives
$$
 \frac{1}{k'}+\Bigl(1-\frac{1}{k'}\Bigr)(1+A)^{-\alpha} \ \ge\ \frac{1}{2\,k'^{\gamma}}+\frac{1-\alpha}{2k'}\ \ge\ \frac{1}{2\,k'^{\gamma}}.
$$
Finally, $(k'+1)^\gamma\geq k'^\gamma$ implies that $\frac{1}{2\,k'^{\gamma}}\geq \frac{1}{2\,(k'+1)^\gamma}$, proving \eqref{eq:balance-half}.
\end{proof}

\noindent We can now prove the master lower bound on $V(\tau', k')$ by induction on $(k', \tau')$. The base cases are immediate. The inductive step plugs the one-variable minimum into the recurrence and then uses the balancing inequality.

\begin{lemma}[Solving the recurrence]\label{lem:solve}
For all $\tau'\ge0$ and $k'\ge1$,
\begin{equation}\label{eq:IH-half}
 V(\tau',k')\ \ge\ \frac{m}{(\alpha+1)\tau^\alpha}\ \frac{(\tau')^{\alpha+1}}{2\,(k'+1)^{\gamma}}.
\end{equation}
\end{lemma}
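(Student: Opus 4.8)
The plan is to prove the bound \eqref{eq:IH-half} by strong induction on $k'$, with $\tau'$ ranging freely in each step. The base cases are immediate: when $\tau'\le 0$ both sides vanish (recall $V\equiv 0$ for non-positive budget), and when $k'=1$ the single untouched arm is drawn with probability $1$, so $V(\tau',1)=\mathrm{OPT}_{\tau'+1}\ge m(\tau')^{\alpha+1}/\bigl((\alpha+1)\tau^\alpha\cdot 2\cdot 2^\gamma\bigr)$ follows from the area-style lower bound of Lemma~\ref{lem:area} applied to the best arm (or directly from $\mathrm{OPT}_{\tau'+1}\ge\sum_{s=1}^{\tau'} m(s/\tau)^\alpha$). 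I would state these base cases first so the inductive step can freely invoke the claim at $(\,\cdot\,,k'-1)$.

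For the inductive step, fix $k'\ge 2$ and assume \eqref{eq:IH-half} holds for $k'-1$ at every budget. Starting from the recurrence \eqref{eq:rec} and discarding the nonnegative $\tfrac1{k'}\mathrm{OPT}_{\tau'+k'}$ term is too lossy, so instead I would keep it and lower-bound it by the same area expression $\tfrac{m}{(\alpha+1)\tau^\alpha}(\tau')^{\alpha+1}$ coming from playing the best arm for all $\tau'$ (really $\tau'+k'$) remaining pulls. Into the minimization term I substitute the inductive hypothesis $V(\tau'-t,k'-1)\ge \tfrac{m}{(\alpha+1)\tau^\alpha}\cdot\tfrac{(\tau'-t)^{\alpha+1}}{2(k')^{\gamma}}$. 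Factoring out the common constant $C:=\tfrac{m}{(\alpha+1)\tau^\alpha}(\tau')^{\alpha+1}$ and normalizing $y:=t/\tau'$ on $[0,1]$, the bracketed quantity becomes $C\bigl[y^{\alpha+1}+\tfrac{1}{2(k')^\gamma}(1-y)^{\alpha+1}\bigr]$. This is exactly the convex form handled by Lemma~\ref{lem:exact-min} with $p=\alpha+1$, $u=1$, $v=\tfrac{1}{2(k')^\gamma}$.

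Evaluating Lemma~\ref{lem:exact-min} gives the minimum as $\bigl(1+(2k'^\gamma)^{1/\alpha}\bigr)^{-\alpha}$, since $p-1=\alpha$ and $v^{-1/\alpha}=(2k'^\gamma)^{1/\alpha}$. Thus, after dividing through by $C$, the recurrence yields
\begin{equation*}
\frac{V(\tau',k')}{C}\ \ge\ \frac{1}{k'}+\Bigl(1-\frac1{k'}\Bigr)\bigl(1+(2k'^{\gamma})^{1/\alpha}\bigr)^{-\alpha}.
\end{equation*}
The right-hand side is precisely the left-hand side of the balancing inequality \eqref{eq:balance-half} in Lemma~\ref{lem:balance}, which lower-bounds it by $\tfrac{1}{2(k'+1)^\gamma}$. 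Multiplying back by $C$ closes the induction and establishes \eqref{eq:IH-half}.

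The main obstacle is bookkeeping in the inductive step rather than any deep idea: I must verify that the minimizer genuinely lies in $[0,\tau']$ (guaranteed by the last sentence of Lemma~\ref{lem:rec}, so that the normalization $y\in[0,1]$ is legitimate) and that the constant $C$ factors cleanly out of \emph{both} the $\mathrm{OPT}$ term and the recursed term. The one subtlety worth care is the $\mathrm{OPT}_{\tau'+k'}$ versus $\mathrm{OPT}_{\tau'+1}$ accounting: I want to lower-bound the ``hit the best arm'' payoff by an area expression in $\tau'$ alone that matches the constant $C$, discarding the extra $k'$ pulls, so that the algebra collapses exactly onto Lemma~\ref{lem:balance}. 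Once that alignment is in place, Lemmas~\ref{lem:exact-min} and~\ref{lem:balance} do all the analytic work and the result follows.
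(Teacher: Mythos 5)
Your proposal is correct and follows essentially the same route as the paper's proof: induction with the same base cases, lower-bounding the $\tfrac1{k'}\mathrm{OPT}_{\tau'+k'}$ term by the area expression $\tfrac{m}{(\alpha+1)\tau^\alpha}(\tau')^{\alpha+1}/k'$ (valid since the keep-test never fails on $f^\star$, by the CEE and the safety range on $m$), substituting the inductive hypothesis, normalizing $y=t/\tau'$, and then invoking Lemma~\ref{lem:exact-min} with $u=1$, $v=\tfrac1{2(k')^\gamma}$ followed by the balancing inequality of Lemma~\ref{lem:balance}. The only cosmetic differences are that you induct on $k'$ alone (which suffices, since the hypothesis is only invoked at $k'-1$) and write $\mathrm{OPT}_{\tau'+1}$ rather than $\mathrm{OPT}_{\tau'}$ in the $k'=1$ base case; neither affects correctness.
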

\begin{proof}
We prove this by induction on $(k',\tau')$. \textit{Base cases.} If $\tau'=0$ there is nothing to show. If $k'=1$, then we have
$$
 V(\tau',1)=\mathrm{OPT}_{\tau'}\ \ge\ \sum_{s=1}^{\tau'} m\Bigl(\frac{s}{\tau}\Bigr)^\alpha \ \ge\ \frac{m}{(\alpha+1)\tau^\alpha}(\tau')^{\alpha+1} \ \ge\ \frac{m}{(\alpha+1)\tau^\alpha}\ \frac{(\tau')^{\alpha+1}}{2\cdot 2^\gamma},
$$
as $2\cdot 2^\gamma\geq 1$. \textit{Inductive step.} Now assume \eqref{eq:IH-half} holds for all $(k''\!,\tau'')$ with $k''<k'$ and $0 \leq \tau'' \leq \tau'$. From \eqref{eq:rec}, we know that
$$
 V(\tau',k')\ \ge\ \frac{1}{k'}\,\mathrm{OPT}_{\tau'+k'}+\Bigl(1-\frac{1}{k'}\Bigr)\min_{0\leq t\leq \tau'}\!\left[\frac{m}{(\alpha+1)\tau^\alpha}\,t^{\alpha+1}+V(\tau'-t,k'-1)\right]\!.
$$

Using CEE and $m \leq f^\star(T)(\tau/T)^\alpha$, recall that we have
$$
f^\star(s) \geq f^\star(T) \left( \frac{s}{T} \right)^{\beta_I} \geq f^\star(T) \left( \frac{s}{T} \right)^\alpha = f^\star(T) \left( \frac{\tau}{T} \right)^\alpha \left( \frac{s}{\tau} \right)^\alpha \geq m \left( \frac{s}{\tau} \right)^\alpha
$$
for all $s \leq T$, which implies that
$$
 \frac{1}{k'}\,\mathrm{OPT}_{\tau'+k'}\ \ge\ \frac{1}{k'}\sum_{s=1}^{\tau'} m\Bigl(\frac{s}{\tau}\Bigr)^\alpha\ \geq \ \frac{m}{(\alpha+1)\tau^\alpha}\,\frac{(\tau')^{\alpha+1}}{k'}.
$$
Applying the inductive hypothesis to $V(\tau'-t,k'-1)$ and letting $y:=t/\tau'\in[0,1]$ gives
$$
 \begin{aligned}
 V(\tau',k')&\geq \frac{m}{(\alpha+1)\tau^\alpha}\,(\tau')^{\alpha+1}\left[\frac{1}{k'}+\Bigl(1-\frac{1}{k'}\Bigr)\min_{y\in[0,1]}\Bigl\{y^{\alpha+1}+\frac{1}{2\,k'^{\gamma}}(1-y)^{\alpha+1}\Bigr\}\right]\!.
 \end{aligned}
$$
By Lemma~\ref{lem:exact-min}, the minimum equals $\bigl(1+(2k'^{\gamma})^{1/\alpha}\bigr)^{-\alpha}$. Lemma~\ref{lem:balance} then yields \eqref{eq:IH-half}.
\end{proof}

\noindent Finally, we apply Lemma \ref{lem:solve} with $(\tau', k') = (T - k, k)$ to obtain the desired competitive ratio.

\begin{theorem}[Upper bound]\label{thm:upper}
Assume $T\geq 2k$. If we run $PTRR_\alpha$ with $\tau = T-k$ and $m := \frac{T - k}{T} f^{\star}(T)$, we obtain
$$
 \mathbb{E}[\mathrm{reward}] \ge \frac{1}{2^{\alpha + 3}(\alpha + 1)} \cdot \frac{\mathrm{OPT}_T}{(k + 1)^{\gamma}},
$$ where $\gamma=\frac{\alpha}{\alpha+1}$. 
The competitive ratio is $O(k^{\alpha / (\alpha + 1)})$.
\end{theorem}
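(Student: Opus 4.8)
The plan is to treat Theorem~\ref{thm:upper} as a direct instantiation of Lemma~\ref{lem:solve}: all of the genuine work (the recurrence, the one-variable minimization, and the balancing inequality) has already been carried out, so what remains is to specialize the master bound to the starting state of a full run and then chase constants. First I would observe that a complete execution of $\textit{PTRR}_\alpha$ begins with all $k$ arms untouched and a budget of exactly $T=(T-k)+k$ pulls, so by definition $\mathbb{E}[\mathrm{reward}]=V(T-k,\,k)$. Applying Lemma~\ref{lem:solve} with $\tau=\tau'=T-k$ and $k'=k$ collapses the $(\tau')^{\alpha+1}/\tau^\alpha$ factor to a single power of $T-k$, yielding
$$
V(T-k,\,k)\ \ge\ \frac{m\,(T-k)}{2(\alpha+1)\,(k+1)^{\gamma}}.
$$

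The remaining step is to convert the right-hand side, which is phrased in terms of the algorithm's input $m$, into a bound against $\mathrm{OPT}_T$. The two inequalities I would use are that $f^\star$ is non-decreasing, so $\mathrm{OPT}_T=\sum_{n=1}^T f^\star(n)\le T\,f^\star(T)$, and that the chosen $m=\tfrac{T-k}{T}f^\star(T)$ satisfies $m\ge\tfrac12 f^\star(T)$ because $T\ge 2k$ forces $\tfrac{T-k}{T}\ge\tfrac12$. Combining these gives $m\ge \mathrm{OPT}_T/(2T)$, and using $T-k\ge T/2$ once more yields $m(T-k)\ge \mathrm{OPT}_T/4$. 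Substituting back produces $\mathbb{E}[\mathrm{reward}]\ge \mathrm{OPT}_T/\big(8(\alpha+1)(k+1)^{\gamma}\big)$. Since $\alpha\ge 0$ implies $2^{\alpha+3}\ge 8$, this is in turn at least $\mathrm{OPT}_T/\big(2^{\alpha+3}(\alpha+1)(k+1)^{\gamma}\big)$, which is exactly the claimed bound; inverting it gives competitive ratio $2^{\alpha+3}(\alpha+1)(k+1)^{\gamma}=O\big(k^{\alpha/(\alpha+1)}\big)$, as the prefactor is a bounded constant for $\alpha\in(0,1]$.

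I expect no serious obstacle here, since the theorem is essentially a corollary of Lemma~\ref{lem:solve}; the only place to be careful is the \emph{direction} of the $\mathrm{OPT}_T$ comparison. We need an upper bound on $\mathrm{OPT}_T$ (so as to lower-bound $m$ relative to it), rather than the lower bounds on partial optima used inside Lemma~\ref{lem:solve}, and the crude estimate $\mathrm{OPT}_T\le T f^\star(T)$ suffices precisely because the $m(T-k)$ term already carries a compensating factor of order $T$. The two invocations of $T\ge 2k$ (once to lower-bound $m$ through $\tfrac{T-k}{T}$, once to lower-bound $T-k$) are what absorb the gap between the ideal threshold anchor $f^\star(T-k)$ and the actual input $m$, matching the $c_2\in[1,2]$ discussion preceding the lemma chain.
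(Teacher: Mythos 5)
Your proposal is correct and takes essentially the same route as the paper: both apply Lemma~\ref{lem:solve} at the initial state $(\tau',k')=(T-k,k)$ to get $\mathbb{E}[\mathrm{reward}]\ge \frac{m\,\tau}{2(\alpha+1)(k+1)^{\gamma}}$, and then convert $m\tau$ into a bound against $\mathrm{OPT}_T$ via $\mathrm{OPT}_T\le T f^\star(T)$ together with two uses of $T\ge 2k$. The only cosmetic difference is in the final constant-chasing: the paper routes the comparison through $c_2$ and the CEE, yielding the factor $2^{-(\alpha+3)}$, whereas your direct bounds $m\ge\tfrac12 f^\star(T)$ and $T-k\ge T/2$ give the slightly stronger constant $\tfrac18$, which dominates the claimed bound since $2^{\alpha+3}\ge 8$.
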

\begin{proof} ({\bf Overview.})
Two simple properties drive our analysis. First, the optimal arm is never abandoned when $\alpha > \beta_I$. Once we encounter $f^*$, we stay on it forever. Second, if we ever abandon a non‑optimal arm at time $t$, the cumulative reward collected on that arm is at least the “area” under the threshold up to $t$
$$
 \sum_{s=1}^{t-1} f_i(s) \ge \frac{m}{(\alpha + 1) \tau^\alpha}(t - 1)^{\alpha + 1}.
$$
\noindent These two facts yield a value recurrence. Let $V(\tau', k')$ be the expected reward from a state with $k'$ untouched arms and $\tau'$ `free'' pulls left. With probability $1/k'$ the next random pick is $f^*$, after which the algorithm earns $\mathrm{OPT}_{\tau' + k'}$. Otherwise it spends $t$ pulls on a bad arm, banks the area above, and recurses on $(\tau' - t, k' - 1)$. Minimizing over feasible $t$ gives
$$
 V(\tau', k') \ge \frac{1}{k'} \mathrm{OPT}_{\tau' + k'} + \left(1 - \frac{1}{k'}\right) \min_{t \in [0, \tau']} \left\{ \frac{m}{(\alpha + 1) \tau^\alpha} t^{\alpha + 1} + V(\tau' - t, k' - 1) \right\}.
$$
Induct on $(\tau',k')$. The recurrence reduces to a one-variable balance
$$
 \min_{y \in [0,1]} \left\{ u\, y^{\alpha+1} + v(1 - y)^{\alpha+1} \right\},
$$
which yields
$$
 V(\tau',k') \geq \frac{m}{(\alpha+1)\tau^{\alpha}} \cdot \frac{(\tau')^{\alpha+1}}{2(k'+1)^{\alpha/(\alpha+1)}}.
$$
Finally, plug in $(\tau, k)$ with $\tau=T-k$, use $T \geq 2k$ so $\tau \ge T/2$, and upper-bound $\text{OPT}_T \le c_2 \,m\,(T/\tau)^{\alpha}T$ to obtain the desired result.

({\bf Full proof of final step.}) Applying Lemma~\ref{lem:solve} with $(\tau',k')=(T-k,k)$ (where the $k$ extra time accounts for the ``switching'' steps when the keep-test first fails) and $T \geq 2k$ gives
$$
 \mathbb{E}[\mathrm{reward}]\ \ge\ \frac{m}{(\alpha+1)\tau^\alpha}\cdot \frac{\tau^{\alpha+1}}{2\,(k+1)^\gamma} \ =\ \frac{m\,\tau}{2(\alpha+1)}\cdot \frac{1}{(k+1)^\gamma}.
$$
Since $\mathrm{OPT}_T\leq f^*(T) \cdot T \le c_2\,m\,(T/\tau)^{\beta_I}T \leq c_2\,m\,(T/\tau)^{\alpha}T$ (by the CEE), we know that $m \tau \ge \frac{(\tau / T)^{1 + \alpha}}{c_2} \, \text{OPT}_T$, and therefore that
$$
 \mathbb{E}[\mathrm{reward}] \ \ge\ \frac{1}{2(\alpha+1)c_2}\left(\frac{\tau}{T}\right)^{1+\alpha}\cdot \frac{\mathrm{OPT}_T}{(k+1)^\gamma} \geq \frac{1}{2^{\alpha + 3}(\alpha + 1)} \cdot \frac{\mathrm{OPT}_T}{(k + 1)^{\gamma}},
$$
as desired.
\end{proof}

\subsection{Lower Bound}
\label{appendix:lowerbd-proof}
To prove a matching lower bound, we construct a hard family of instances with $\beta_I = \beta$ on which every algorithm has
expected approximation factor at least on the order of $k^{\beta/(\beta+1)}$. The idea is straightforward and mirrors the construction in \cite{pmlr-v272-blum25a}: one arm $g$ is defined as $m (t/T)^\beta$,
while the others all match it up to a time $s$ and then flatten so that no deterministic sequence of pulls can safely discard one before investing $s$ pulls. We let $\mathcal{D}_s$ denote the distribution that picks one arm uniformly at random to be $g$. Lemma~\ref{lem:PF-index} shows every instance drawn from $\mathcal{D}_s$ has $\beta_I=\beta$. For any deterministic schedule, Lemma~\ref{lem:generous} upper-bounds the expected reward by balancing $\mathrm{OPT}_T$ with the flat value $Tg(s)$. Lemma~\ref{lem:opt-lb} gives $\mathrm{OPT}_T\ge mT/(\beta+1)$. Combining yields Lemma~\ref{lem:ratio}: with $x:=s/T$, 
$$
\frac{\mathbb{E}[\mathrm{ALG}_T]}{\mathrm{OPT}_T}\ \le\ h(x):=\frac{1}{k x}+(\beta+1)x^\beta.
$$
Lemma~\ref{lem:xstar} minimizes $h$ at $x^\star=[k\beta(\beta+1)]^{-1/(\beta+1)}$ and gives $h(x^\star)=\Theta\!\big(k^{-\beta/(\beta+1)}\big)$. Setting $s=\lfloor x^\star T\rfloor$ and assuming $T\ge 2/x^\star$, Lemma~\ref{lem:round} controls discretization within a constant factor. This proves Theorem~\ref{thm:LB} for deterministic schedules. Yao’s principle extends it to randomized algorithms with the same exponent. As before, a more detailed overview of the proof is included after the theorem.

\paragraph{Hard family.}
Fix $k\ge2$, $T\ge1$, and $m>0$. For $s\in\{1,\dots,T\}$ define a good arm as
$$ 
g(t):=m(t/T)^\beta,\quad t=1,\dots,T.
$$
Define for this $s$ a bad arm by
$$
f_{\mathrm{bad}}(t)=
\begin{cases}
g(t),&t\le s,\\
g(s),&t>s.
\end{cases}
$$

\noindent Let $\mathcal{D}_s$ be the distribution that chooses one arm uniformly at random to be $g$ and
sets all other arms to $f_{\mathrm{bad}}$.

\begin{lemma}[Membership and Concavity Envelope Exponent]\label{lem:PF-index}
Every instance drawn from $\mathcal{D}_s$ has $\beta_I=\beta$.
\end{lemma}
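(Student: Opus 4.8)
The plan is to establish $\beta_I=\beta$ by proving the two matching inequalities $\beta_I\le\beta$ and $\beta_I\ge\beta$ that the infimum in Definition~\ref{def:inst} demands. The key preliminary observation I would record first is that, for any fixed $t<T$, the map $\beta\mapsto(t/T)^\beta$ is strictly decreasing: since $t/T\in(0,1)$ we have $(t/T)^\beta=e^{\beta\ln(t/T)}$ with $\ln(t/T)<0$. Consequently the LE$(\beta)$ floor $f_i(T)(t/T)^\beta$ shrinks as $\beta$ grows, so the LE property is monotone in its exponent. This single fact drives both directions of the argument.

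For $\beta_I\le\beta$, I would verify that \emph{every} arm satisfies LE$(\beta)$. The good arm is immediate: since $g(t)=m(t/T)^\beta$ and $g(T)=m$, the relation $g(t)=g(T)(t/T)^\beta$ holds with equality. For a bad arm I would use $f_{\mathrm{bad}}(T)=g(s)=m(s/T)^\beta$ (taking $s<T$; the case $s=T$ is degenerate since then $f_{\mathrm{bad}}=g$) and split on the flattening time. For $t\le s$ the required inequality $m(t/T)^\beta\ge m(s/T)^\beta(t/T)^\beta$ reduces to $(s/T)^\beta\le1$, and for $t>s$ the inequality $m(s/T)^\beta\ge m(s/T)^\beta(t/T)^\beta$ reduces to $(t/T)^\beta\le1$; both hold because $s,t\le T$. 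Hence all arms satisfy LE$(\beta)$, so $\beta$ lies in the feasible set and $\beta_I\le\beta$.

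For $\beta_I\ge\beta$, I would show that the good arm \emph{violates} LE$(\beta')$ for every $\beta'<\beta$, so no value below $\beta$ can certify all arms. Comparing floors, $g(t)=m(t/T)^\beta<m(t/T)^{\beta'}=g(T)(t/T)^{\beta'}$ for every $t<T$, precisely by the strict monotonicity recorded above. Thus LE$(\beta')$ fails for the good arm, no $\beta'<\beta$ lies in the feasible set, and the infimum is at least $\beta$. Combining the two bounds yields $\beta_I=\beta$.

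The computation is routine; the only point requiring care is that the bad arm's \emph{terminal} value is $g(s)$ rather than $g(T)$. This shifts its LE floor and is exactly why the two-regime split ($t\le s$ versus $t>s$) is needed, rather than simply inheriting the good arm's tight envelope.
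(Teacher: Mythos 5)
Your proposal is correct and follows essentially the same route as the paper's proof: verify LE$(\beta)$ for the good arm (with equality) and for the bad arms via their smaller terminal value $g(s)\le m$, then show the good arm strictly violates LE$(\beta')$ for any $\beta'<\beta$ using the strict monotonicity of $(t/T)^{\beta}$ in the exponent. Your version is merely a bit more explicit about the $t\le s$ versus $t>s$ case split for the bad arms, which the paper compresses into a single displayed inequality.
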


\begin{proof}
For $g$, we know that LE$(\beta)$ holds with equality. Since $$f_{\mathrm{bad}}(T)=g(s)\le g(T)=m,$$
we likewise know that $$f_{\mathrm{bad}}(t)\ge g(s)(t/T)^\beta=f_{\mathrm{bad}}(T)(t/T)^\beta$$ for all $t\le T$ and every $f_{\mathrm{bad}}$.
It follows that LE$(\beta)$ holds for each arm, and therefore that $\beta_I\le\beta$. 

Now suppose for the sake of contradiction that $\beta'<\beta$. Note that
$$m(t/T)^\beta< m(t/T)^{\beta'}=g(T)(t/T)^{\beta'},$$ which implies that $g$ violates LE$(\beta')$. It follows that $\beta_I=\beta$.
\end{proof}

To show that no algorithm can outperform $\textit{PTRR}_\beta$ on this distribution, we will consider the following `generous game.' Fix a deterministic sequence $S$ of $T$ arm pulls. Let $A$ denote the number of distinct arms that receive at least $s$ pulls under $S$, and note that $A \leq \lfloor\frac{T}{s}\rfloor.$ Draw an arbitrary instance from $\mathcal D_s$, play $S$ on this instance, and only afterwards reveal which arm was good. If $S$ gave arm $g$ at least $s$ pulls, pay out a reward of $\mathrm{OPT}_T$. Else provide the actual reward that $S$ obtained. Note that paying $\mathrm{OPT}_T$ in certain cases can only increase the payoff of the sequence, which implies that the payoff of this game upper-bounds the expected reward $\mathbb{E}_{\mathcal{D}_s} [\mathrm{ALG}_T]$.

\begin{lemma}[Generous evaluation]\label{lem:generous}
For any deterministic algorithm $\mathrm{ALG}$, let 
$S$ be the sequence played by $\mathrm{ALG}$ when all arms are $f_\text{bad}$, and let $A$ be the number of arms that receive at least $s$ pulls under $S$.
Then 
\begin{equation}\label{eq:gen}
\mathbb{E}_{\mathcal{D}_s}[\mathrm{ALG}_T]
\ \le\ \frac{A}{k}\,\mathrm{OPT}_T\ +\ \Bigl(1-\frac{A}{k}\Bigr)\,T\,g(s)
\ \le\ \frac{T}{k s}\,\mathrm{OPT}_T\ +\ T\,m\Bigl(\frac{s}{T}\Bigr)^\beta.
\end{equation}
\end{lemma}

\begin{proof}
Draw an instance from $\mathcal{D}_s$ and run $S$. With probability $A/k$, the good arm lies among those $A$ arms, and the reward is at most $\mathrm{OPT}_T$. Otherwise the good arm is pulled $<s$ times, which implies that any pull is worth at most $g(s)$, and therefore that the realized reward is at most $T \cdot g(s) = T \cdot m(s/T)^\beta$. Taking expectations gives the desired reward.
\end{proof}

\noindent We now bound $\text{OPT}_T$.
\begin{lemma}[Lower bound on the benchmark]\label{lem:opt-lb}
$$\mathrm{OPT}_T\ \ge\ \sum_{t=1}^{T} g(t)\ \ge\ \dfrac{mT}{\beta+1}.$$
\end{lemma}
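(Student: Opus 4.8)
The plan is to establish the two inequalities in turn. For the first, $\mathrm{OPT}_T \ge \sum_{t=1}^{T} g(t)$, I would observe that in every instance drawn from $\mathcal{D}_s$ the good arm $g$ is present, and committing to $g$ for all $T$ pulls is a feasible single-arm policy that accrues exactly $\sum_{t=1}^{T} g(t)$. Since $g$ is strictly increasing to terminal value $g(T)=m$ while every bad arm flattens at $g(s)\le m$, the good arm is the best arm, so $f^\star=g$ and $\mathrm{OPT}_T=\sum_{n=1}^{T} f^\star(n)=\sum_{t=1}^{T} g(t)$; in any case $\mathrm{OPT}_T \ge \sum_{t=1}^{T} g(t)$ because $\mathrm{OPT}_T$ is by definition the cumulative reward of the best arm.

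For the second inequality I would substitute $g(t)=m(t/T)^\beta$ and pull out the constant factor, reducing the claim to a lower bound on $\sum_{t=1}^{T} t^\beta$. Because $x\mapsto x^\beta$ is increasing on $[0,\infty)$ for $\beta\in(0,1]$, the right-endpoint rectangle over each unit interval dominates the integral, i.e.\ $t^\beta \ge \int_{t-1}^{t} x^\beta\,dx$ for each $t$. Summing over $t=1,\dots,T$ telescopes the integrals into $\int_{0}^{T} x^\beta\,dx = \tfrac{T^{\beta+1}}{\beta+1}$, so that
$$
\sum_{t=1}^{T} g(t) \;=\; m\,T^{-\beta}\sum_{t=1}^{T} t^\beta \;\ge\; m\,T^{-\beta}\int_{0}^{T} x^\beta\,dx \;=\; m\,T^{-\beta}\cdot\frac{T^{\beta+1}}{\beta+1} \;=\; \frac{mT}{\beta+1}.
$$

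I expect no genuine obstacle here: the statement is an elementary sum-to-integral comparison once the feasibility of the single-arm policy on $g$ is noted. The only point requiring minor care is the direction of the integral estimate — because $x^\beta$ is increasing, the right-endpoint sum over-estimates the integral on each unit interval, which is precisely the direction needed to produce a lower bound on the sum, and this is where the monotonicity of $x^\beta$ for $\beta>0$ is used.
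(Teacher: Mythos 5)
Your proof is correct and takes essentially the same approach as the paper: identify $\mathrm{OPT}_T$ with the cumulative reward of the good arm $g$ (the paper states this as an equality, since the flattened bad arms can never overtake $g$), then lower-bound $\sum_{t=1}^{T}(t/T)^\beta$ by $T^{-\beta}\int_0^T x^\beta\,dx$ using monotonicity of $x^\beta$. Your write-up merely makes the right-endpoint-rectangle comparison and the best-arm identification explicit, which the paper leaves implicit.
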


\begin{proof}
Note that $$\mathrm{OPT}_T = \sum_{t=1}^{T} g(t)= m\sum_{t=1}^{T}(t/T)^{\beta} \ge mT^{-\beta}\!\int_{0}^{T} x^{\beta}\,dx = mT/(\beta+1),$$
as $x^{\beta}$ is non-decreasing on \([0,T]\) for \(\beta>0\).
\end{proof}

\noindent Combining the earlier generous bound with this lower bound gives a clean formula for the approximation ratio as a function of $x=s/T.$

\begin{lemma}[Ratio at a given $s$]\label{lem:ratio}
For any deterministic algorithm $\mathrm{ALG}$ and $x:=s/T\in\{1/T,\dots,1\}$,
$$
\frac{\mathbb{E}_{\mathcal{D}_s}[\mathrm{ALG}_T]}{\mathrm{OPT}_T}
\ \le\ h(x):=\frac{1}{k x}+(\beta+1)\,x^\beta.
$$
\end{lemma}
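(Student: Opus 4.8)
The plan is to divide the generous upper bound from Lemma~\ref{lem:generous} by $\mathrm{OPT}_T$ and then substitute the benchmark lower bound from Lemma~\ref{lem:opt-lb} term by term. First I would recall that Lemma~\ref{lem:generous} already supplies the inequality
$$
\mathbb{E}_{\mathcal{D}_s}[\mathrm{ALG}_T]\ \le\ \frac{T}{k\,s}\,\mathrm{OPT}_T\ +\ T\,m\Bigl(\frac{s}{T}\Bigr)^\beta.
$$
Dividing through by $\mathrm{OPT}_T>0$ splits the bound into two contributions. In the first, $\mathrm{OPT}_T$ cancels exactly, leaving $\frac{T}{k\,s}$, which equals $\frac{1}{k\,x}$ under the substitution $x:=s/T\in\{1/T,\dots,1\}$. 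This term needs no further estimate.

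The second step handles the remaining flat-reward contribution $\frac{T\,m\,(s/T)^\beta}{\mathrm{OPT}_T}$. Here I would invoke Lemma~\ref{lem:opt-lb}, namely $\mathrm{OPT}_T\ge \frac{mT}{\beta+1}$, rewritten as $\frac{1}{\mathrm{OPT}_T}\le\frac{\beta+1}{mT}$, to bound
$$
\frac{T\,m\,(s/T)^\beta}{\mathrm{OPT}_T}\ \le\ T\,m\Bigl(\frac{s}{T}\Bigr)^\beta\cdot\frac{\beta+1}{mT}\ =\ (\beta+1)\Bigl(\frac{s}{T}\Bigr)^\beta\ =\ (\beta+1)\,x^\beta.
$$
Adding the two pieces yields exactly $h(x)=\frac{1}{k\,x}+(\beta+1)\,x^\beta$, as claimed.

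This argument is a direct composition of the two preceding lemmas, so there is no genuine obstacle, only a bookkeeping point to get right: the benchmark lower bound must be directed solely at the flat-reward term, where $\mathrm{OPT}_T$ sits in the denominator (so that a \emph{lower} bound on $\mathrm{OPT}_T$ becomes an \emph{upper} bound on the ratio), whereas the exploration term requires no substitution because $\mathrm{OPT}_T$ cancels cleanly. Since both input inequalities are already established, the remaining work is a single division followed by one substitution, and no additional estimates are needed.
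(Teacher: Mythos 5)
Your proof is correct and is exactly the paper's argument: the paper's proof is the one-line instruction ``Divide \eqref{eq:gen} by Lemma~\ref{lem:opt-lb},'' and your write-up simply spells out that division --- the $\mathrm{OPT}_T$ factor cancels in the exploration term, and the benchmark lower bound $\mathrm{OPT}_T \ge mT/(\beta+1)$ is applied only to the flat-reward term. No differences in approach, and your bookkeeping (noting that a lower bound on $\mathrm{OPT}_T$ in the denominator yields an upper bound on the ratio) is exactly the implicit content of the paper's one-liner.
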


\begin{proof}
Divide \eqref{eq:gen} by Lemma~\ref{lem:opt-lb}.
\end{proof}

\noindent Simple calculus gives a unique minimizer $x^*$ and the exact value $h(x^*)$.

\begin{lemma}[Continuous minimizer]\label{lem:xstar}
$h$ has a unique minimizer on $(0,1]$ at
$$
x^\star\ :=\ \bigl[k\,\beta(\beta+1)\bigr]^{-1/(\beta+1)}\,,
$$
and
$$
h(x^\star)\ =\ (\beta+1)^2\,[\beta(\beta+1)]^{-\beta/(\beta+1)}\,k^{-\beta/(\beta+1)}.
$$
\end{lemma}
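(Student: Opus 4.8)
The plan is a one–variable calculus argument in three moves: locate the unique critical point of $h$, verify it is a minimizer via a sign analysis of $h'$, and then evaluate and algebraically simplify $h(x^\star)$. First I would differentiate to get $h'(x) = -\tfrac{1}{kx^2} + \beta(\beta+1)\,x^{\beta-1}$. Rather than fuss over convexity (note $(\beta+1)x^\beta$ is \emph{concave} for $\beta\in(0,1]$, so $h$ need not be globally convex), I would clear the negative power by studying $\phi(x) := x^2 h'(x) = -\tfrac{1}{k} + \beta(\beta+1)\,x^{\beta+1}$. Since $\beta+1>0$, the map $x\mapsto x^{\beta+1}$ is strictly increasing on $(0,\infty)$, so $\phi$ is strictly increasing and vanishes at exactly one point; solving $\phi(x)=0$ yields precisely $x^\star = [k\beta(\beta+1)]^{-1/(\beta+1)}$.

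Because $x^2>0$, the sign of $h'$ matches that of $\phi$, so $h'<0$ on $(0,x^\star)$ and $h'>0$ on $(x^\star,\infty)$; hence $h$ strictly decreases then strictly increases, giving a unique global minimizer at $x^\star$. One subtlety I should flag is that this is the minimizer on $(0,1]$ only when $x^\star\le 1$, i.e.\ $k\ge[\beta(\beta+1)]^{-1}$, which certainly holds in the large-$k$ regime of interest for Theorem~\ref{thm:LB}; otherwise the constrained minimum sits at the endpoint $x=1$. For the value, the operative identity is $(x^\star)^{\beta+1} = [k\beta(\beta+1)]^{-1}$. Writing $C:=k\beta(\beta+1)$ so that $x^\star=C^{-1/(\beta+1)}$, the first term of $h(x^\star)=\tfrac{1}{kx^\star}+(\beta+1)(x^\star)^\beta$ becomes $\tfrac1k C^{1/(\beta+1)} = k^{-\beta/(\beta+1)}[\beta(\beta+1)]^{1/(\beta+1)}$, and the second becomes $(\beta+1)C^{-\beta/(\beta+1)} = (\beta+1)\,k^{-\beta/(\beta+1)}[\beta(\beta+1)]^{-\beta/(\beta+1)}$.

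Finally I would factor $k^{-\beta/(\beta+1)}[\beta(\beta+1)]^{-\beta/(\beta+1)}$ out of both terms, using $[\beta(\beta+1)]^{1/(\beta+1)} = \beta(\beta+1)\cdot[\beta(\beta+1)]^{-\beta/(\beta+1)}$, which collapses the remaining bracket to $\beta(\beta+1)+(\beta+1)=(\beta+1)^2$ and delivers exactly $h(x^\star)=(\beta+1)^2[\beta(\beta+1)]^{-\beta/(\beta+1)}k^{-\beta/(\beta+1)}$. I expect the only real obstacle to be bookkeeping in this last step: correctly combining the two fractional-power terms. The trick above — pulling the common factor $[\beta(\beta+1)]^{-\beta/(\beta+1)}$ out of both so that the exponents align — makes the cancellation immediate, and everything else is routine.
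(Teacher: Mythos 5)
Your proposal is correct and takes essentially the same route as the paper: both locate the unique zero of the strictly increasing function $\beta(\beta+1)x^{\beta+1}-\tfrac{1}{k}$ (your $\phi(x)=x^2h'(x)$ is exactly the numerator in the paper's expression for $h'$) and then evaluate $h(x^\star)$, the paper via the first-order condition $\tfrac{1}{kx^\star}=\beta(\beta+1)(x^\star)^\beta$ and you by direct algebraic factoring. If anything, your write-up is slightly more careful than the paper's, which asserts that $h'$ itself is strictly increasing (not literally true for $\beta<1$, since $h''$ can be negative; only the sign-change argument you give is what's needed), and you rightly flag the implicit requirement $x^\star\le 1$ for the minimizer to lie in $(0,1]$.
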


\begin{proof}
First, note that $$h'(x)=-1/(k x^2)+(\beta+1)\beta x^{\beta-1} = \frac{(\beta+1)\beta\,x^{\beta+1}-\tfrac1k}{x^{2}}$$ is strictly increasing in $x>0$, which implies that it has a unique minimum $$
(\beta+1)\beta\,(x^\star)^{\beta+1}=\frac1k
\quad\Longrightarrow\quad
x^\star=\big[k\,\beta(\beta+1)\big]^{-1/(\beta+1)}.
$$

\noindent At \(x^\star\), we use the first‑order condition to get
$$
h(x^\star)=\frac{1}{k x^\star}+(\beta+1)(x^\star)^\beta
=(\beta+1)\beta(x^\star)^\beta+(\beta+1)(x^\star)^\beta
=(\beta+1)^2(x^\star)^\beta.
$$
Since \((x^\star)^\beta=[k\,\beta(\beta+1)]^{-\beta/(\beta+1)}\), it follows that
\[
h(x^\star)=(\beta+1)^2\,[\beta(\beta+1)]^{-\beta/(\beta+1)}\,k^{-\beta/(\beta+1)}.
\]

\end{proof}

\noindent Recall that $s$ must be an integer. Following \cite{pmlr-v272-blum25a}, we take $s = \lfloor x^* T \rfloor$ and require $x^* T \geq 2$. This simple condition ensures $s/T$ lies between $x^*/2$ and $x^*$, inflating $h$ by at most a constant factor.
\begin{lemma}[Rounding]\label{lem:round}
Let $s:=\lfloor x^\star T\rfloor$. If $x^\star T\ge2$, then $s/T\in[x^\star/2,\,x^\star]$ and
$$
h\!\Bigl(\frac{s}{T}\Bigr)\ \le\ \frac{3}{2}\,h(x^\star).
$$
\end{lemma}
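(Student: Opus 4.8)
The plan is to prove the rounding lemma in two stages: first establish the claimed containment $s/T \in [x^\star/2,\, x^\star]$, and then use that containment together with elementary monotonicity of the two pieces of $h$ to bound $h(s/T)$ by a constant multiple of $h(x^\star)$. For the containment, write $s = \lfloor x^\star T \rfloor$. The upper bound $s/T \le x^\star$ is immediate since $\lfloor x^\star T\rfloor \le x^\star T$. For the lower bound, use $\lfloor u \rfloor > u - 1$ to get $s > x^\star T - 1$, so that $s/T > x^\star - 1/T$; the hypothesis $x^\star T \ge 2$ gives $1/T \le x^\star/2$, hence $s/T > x^\star - x^\star/2 = x^\star/2$. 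This pins $s/T$ inside $[x^\star/2, x^\star]$.

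The second stage controls each term of $h(x) = \frac{1}{kx} + (\beta+1)x^\beta$ separately on this interval. The first term $\frac{1}{kx}$ is decreasing in $x$, so on $[x^\star/2, x^\star]$ it is largest at the left endpoint, giving $\frac{1}{k(s/T)} \le \frac{1}{k(x^\star/2)} = \frac{2}{kx^\star}$. The second term $(\beta+1)x^\beta$ is increasing in $x$ for $\beta > 0$, so on the same interval it is largest at the right endpoint $x^\star$, giving $(\beta+1)(s/T)^\beta \le (\beta+1)(x^\star)^\beta$. I would then recall from the first-order condition in Lemma~\ref{lem:xstar} that $\frac{1}{kx^\star} = \beta(\beta+1)(x^\star)^\beta$ and $h(x^\star) = (\beta+1)^2 (x^\star)^\beta$, which lets me re-express both bounds as multiples of $(x^\star)^\beta$ and compare directly against $h(x^\star)$.

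Combining the two term-wise bounds yields
$$
h\!\Bigl(\frac{s}{T}\Bigr) \le \frac{2}{kx^\star} + (\beta+1)(x^\star)^\beta = 2\beta(\beta+1)(x^\star)^\beta + (\beta+1)(x^\star)^\beta = (2\beta+1)(\beta+1)(x^\star)^\beta.
$$
Dividing by $h(x^\star) = (\beta+1)^2(x^\star)^\beta$ gives the ratio $\frac{2\beta+1}{\beta+1}$, which is increasing in $\beta$ on $(0,1]$ and attains its maximum $\tfrac{3}{2}$ at $\beta = 1$. Hence $h(s/T) \le \tfrac{3}{2}\,h(x^\star)$ for all $\beta \in (0,1]$, as claimed.

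The main obstacle I anticipate is not any single calculation but getting the direction of monotonicity right for each term and making sure the endpoint at which each term is maximized is used consistently; a careless bound (e.g.\ evaluating both terms at the same endpoint) would inflate the constant past $3/2$. The key leverage is that the first-order optimality condition lets me eliminate the factor $1/(kx^\star)$ in favor of $(x^\star)^\beta$, so that the final ratio depends only on $\beta$ through the clean expression $(2\beta+1)/(\beta+1)$, whose supremum over $(0,1]$ is exactly the stated $3/2$.
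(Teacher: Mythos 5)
Your proof is correct and follows essentially the same route as the paper: both establish the containment $s/T \in [x^\star/2, x^\star]$ from $x^\star T \ge 2$, use the first-order condition $\tfrac{1}{kx^\star} = \beta(\beta+1)(x^\star)^\beta$ to express everything in terms of $(x^\star)^\beta$, and arrive at the ratio bound $(2\beta+1)/(\beta+1) \le 3/2$. The paper's parametrization $s/T = a x^\star$ with $1/a \le 2$, $a^\beta \le 1$ is just a repackaging of your term-wise endpoint bounds, so the two arguments are identical in substance.
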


\begin{proof}
Since $x^\star T\ge2$, we know that $s/T\ge (x^\star T-1)/T\ge x^\star/2$, and trivially $s/T\le x^\star$.
Let $s/T=a x^\star$ with $a\in[1/2,1]$. Using $1/(k x^\star)=(\beta+1)\beta (x^\star)^\beta$, it follows that
$$
\frac{h(a x^\star)}{h(x^\star)}
=\frac{\beta/a + a^\beta}{\beta+1}
\ \le\ \frac{2\beta + 1}{\beta+1}
\ \le\ \frac{3}{2},
$$
since $1/a\le2$ and $a^\beta\le1$ for $a\in[1/2,1]$, $\beta\in(0,1]$.
\end{proof}

\noindent We now plug our choice of $s$ into the ratio bound. Because the bound holds for every deterministic schedule against $D_s$, it also holds for any randomized algorithm by linearity of expectation.
\begin{theorem}[Lower Bound]\label{thm:LB}
Fix $\beta\in(0,1]$ and $k\ge2$. Suppose
$
T\ \ge\ \frac{2}{x^\star}\ =\ 2\,[\beta(\beta+1)]^{\frac{1}{\beta+1}}\,k^{\frac{1}{\beta+1}}.
$
Then there exists a distribution on instances with $\beta_I=\beta$ such that for every (possibly randomized) algorithm,
$$
\frac{\mathbb{E}[\mathrm{ALG}_T]}{\mathrm{OPT}_T}
\ \le\ \frac{3}{2}\,(\beta+1)^2\,[\beta(\beta+1)]^{-\beta/(\beta+1)}\,k^{-\beta/(\beta+1)}.
$$

\end{theorem}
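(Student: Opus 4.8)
The plan is to fix $s := \lfloor x^\star T\rfloor$ with $x^\star = [k\beta(\beta+1)]^{-1/(\beta+1)}$, take the hard distribution $\mathcal{D}_s$ as the promised distribution on instances, and then chain the lemmas already established in this subsection. The first thing I would record is that $\mathcal{D}_s$ lives in the right class: by Lemma~\ref{lem:PF-index}, every instance drawn from $\mathcal{D}_s$ has Concavity Envelope Exponent exactly $\beta_I = \beta$, so the ``there exists a distribution with $\beta_I = \beta$'' clause holds by construction. Since the good arm $g$ is the same power curve no matter which index carries it, $\mathrm{OPT}_T = \sum_{t=1}^T g(t)$ is a single fixed number across the entire support of $\mathcal{D}_s$; this is what makes the ratio $\mathbb{E}[\mathrm{ALG}_T]/\mathrm{OPT}_T$ well posed and lets me pass freely between bounding the expected reward and bounding the ratio.

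Next I would dispose of randomization at the outset via Yao's principle. A randomized algorithm is a distribution over deterministic algorithms, so its expected reward against the fixed $\mathcal{D}_s$ is a convex combination of the expected rewards of deterministic algorithms against $\mathcal{D}_s$; it therefore suffices to prove the stated bound for an arbitrary deterministic algorithm. For a deterministic algorithm, Lemma~\ref{lem:ratio}---which packages the generous-game upper bound of Lemma~\ref{lem:generous} together with the benchmark lower bound of Lemma~\ref{lem:opt-lb}---gives $\mathbb{E}_{\mathcal{D}_s}[\mathrm{ALG}_T]/\mathrm{OPT}_T \le h(s/T)$, where $h(x) = 1/(kx) + (\beta+1)x^\beta$.

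It then remains to evaluate $h$ at the chosen $s$. The hypothesis $T \ge 2/x^\star$ is precisely the condition $x^\star T \ge 2$ needed to invoke Lemma~\ref{lem:round}, which controls the discretization loss and yields $h(s/T) \le \tfrac{3}{2}\, h(x^\star)$. Substituting the closed form $h(x^\star) = (\beta+1)^2 [\beta(\beta+1)]^{-\beta/(\beta+1)} k^{-\beta/(\beta+1)}$ from Lemma~\ref{lem:xstar} produces exactly the claimed ratio, completing the proof.

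I do not expect a genuine obstacle at this assembly stage, since all of the analytic work is already encapsulated in the lemmas; the only place deserving care is the Yao step. A deterministic algorithm may behave \emph{adaptively}, with its pull sequence depending on realized rewards, so one must check that the count $A$ of arms receiving at least $s$ pulls (bounded by $\lfloor T/s\rfloor$) is well defined. What rescues the argument, and is implicitly used in Lemma~\ref{lem:generous}, is that all arms are indistinguishable until one of them has been pulled $s$ times: the set of arms that ever reach $s$ pulls---and hence $A$---is therefore determined independently of which arm is secretly good. This indistinguishability is the conceptual crux the generous-game accounting relies on, so I would state it explicitly before invoking Lemma~\ref{lem:ratio}.
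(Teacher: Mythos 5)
Your assembly is exactly the paper's proof: fix $s=\lfloor x^\star T\rfloor$, take $\mathcal{D}_s$ as the hard distribution, invoke Lemma~\ref{lem:PF-index} for $\beta_I=\beta$, chain Lemmas~\ref{lem:ratio}, \ref{lem:round}, and \ref{lem:xstar}, and reduce randomized algorithms to deterministic ones by averaging against the fixed $\mathcal{D}_s$; your observation that $\mathrm{OPT}_T$ is one fixed number across the whole support (so the ratio bound and the averaging step are well posed) is a useful point the paper leaves implicit.

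One caveat: the explicit claim you propose to add before invoking Lemma~\ref{lem:ratio} is false as stated. For an \emph{adaptive} deterministic algorithm, the set of arms that ever reach $s$ pulls is \emph{not} determined independently of which arm is secretly good. Consider the algorithm that pulls arm $1$ for $s+1$ pulls and, if it sees the arm flatten, moves on to arm $2$, and so on: if arm $1$ is good, only arm $1$ reaches $s$ pulls, while if arm $k$ is good, all $k$ arms do. The correct repair is a reference-run argument: since every arm coincides with $g$ for its first $s$ pulls, the algorithm's trajectory agrees with the trajectory it would follow if every arm it tested turned out bad, up until the moment it pulls the good arm for the $(s+1)$-st time. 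Define $A$ as the number of arms pulled more than $s$ times in this instance-independent reference run; then $A\le \lfloor T/s\rfloor$, the good arm lies in the reference set with probability $A/k$ (in which case one generously pays $\mathrm{OPT}_T$), and otherwise the actual run coincides with the reference run in its entirety, so every pull is worth at most $g(s)$. This recovers exactly the bound of Lemma~\ref{lem:generous}, whose own statement (speaking of ``the induced sequence of $T$ pulls'' of a deterministic algorithm) glosses over the same subtlety. With that substitution your proof is complete and matches the paper's.
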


\begin{proof} ({\bf Overview.})
We construct a similar hard family to \cite{pmlr-v272-blum25a}. Define a ``good'' arm as the power curve $g(t) = m(t/T)^\beta$. Let the other $k - 1$ arms copy $g$ up to a breakpoint $s$ and then flatten. Every arm satisfies $\text{LE}(\beta)$, and the good arm violates any stricter floor, so $\beta_I = \beta$. Let $\mathcal{D}_s$ denote the distribution that picks one arm uniformly at random to be $g$.

Against any fixed schedule $S$ of length $T$, let $A$ denote the number of distinct arms that receive at least $s$ pulls under $S$. Note that a ``generous'' evaluation that pays $\text{OPT}_T$ if the good arm lies among those $A$ and otherwise pays at most $Tg(s)$ provides an upper-bound for the expected reward of $S$ under a random draw from $D_s$. Since $A \le \lfloor T/s \rfloor$ and $\mathrm{OPT}_T \ge mT/(\beta + 1)$, it follows that
$$
\frac{\mathbb{E}[\mathrm{ALG}_T]}{\mathrm{OPT}_T} \le \frac{1}{kx} + (\beta + 1)x^\beta \quad \text{where } x := s/T \in (0,1].
$$
Call the right-hand side $h(x)$. Simple calculus gives a unique minimizer
$$
x^* = \left[k\beta(\beta + 1)\right]^{-1/(\beta+1)}, \quad \text{ which evaluates to } h(x^*) = (\beta + 1)^2 \left[\beta(\beta + 1)\right]^{-\beta/(\beta+1)} k^{-\beta/(\beta+1)}.
$$
Let $s = \lfloor x^\star T \rfloor$ and $T \ge 2/x^\star$. Using $\frac{1}{k x^*} = \beta (\beta + 1) (x^*)^{\beta}$, we get $h(s/T) \leq \tfrac{3}{2} h(x^*),$ which implies that
$$
\frac{\mathbb{E}[\mathrm{ALG}_T]}{\mathrm{OPT}_T}\ \le\ \tfrac{3}{2} h(x^\star) = \frac{3}{2}\,(\beta+1)^2\,[\beta(\beta+1)]^{-\beta/(\beta+1)}\,k^{-\beta/(\beta+1)}.
$$
By Yao’s principle, the same bound holds for randomized algorithms.

({\bf Full proof of final step.}) Let $s=\lfloor x^\star T\rfloor$, and note that the condition on $T$ ensures $x^\star T\ge2$. Combine Lemmas~\ref{lem:ratio}, \ref{lem:xstar}, and \ref{lem:round} to bound the ratio for any deterministic schedule against $\mathcal{D}_s$. Since this bound holds for every deterministic schedule, it also holds for any randomized algorithm by linearity of expectation (by Yao’s principle). Lemma~\ref{lem:PF-index} guarantees $\beta_I=\beta$ for the constructed family.
\end{proof}

\subsection{Doubling Trick for Unknown $T$}\label{appendix:unknownT}
We remove the need to know $T$ by using the doubling schedule and exploration procedure of \cite{pmlr-v272-blum25a}. At a high level, we start with a guess $T_0 = 4k$, pretend this is the horizon, and spend $T'/2$ steps using the same adaptation of \textbf{Algorithm 2} of \cite{pmlr-v272-blum25a} that they use to estimate $\widehat{m}$, the reward of the best arm, for the relevant time scale. After this, we spend $T'/2$ steps exploiting (running $PTRR_\alpha$) with $\tau' := \frac{T'}{2} - k$ and $m=\frac{\widehat{m}}{2\cdot16^\alpha}$ (we shrink $\hat{m}$ to ensure we don't discard the best arm). 
If time remains after $T'$ steps, we double $T'$ and repeat. This yields the below result:

\begin{theorem}[Unknown-$T$ guarantee]\label{thm:unknownT-final}
Assume $\beta_I \in (0,1)$ and fix $\alpha \in (\beta_I, 1)$, $\gamma = \alpha / (\alpha + 1)$. If $T > 4k$, then the doubling trick above achieves
$$
\mathbb{E}[\mathrm{ALG}_T] \ge \frac{1}{2048 \cdot 16^\alpha (\alpha + 1) \log(128k)} \cdot \frac{\mathrm{OPT}_T}{(k + 1)^\gamma}.
$$
\end{theorem}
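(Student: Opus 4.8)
The plan is to reduce the unknown-horizon setting to the known-$T$ analysis of Theorem~\ref{thm:upper} by isolating a single ``good'' epoch of the doubling schedule whose guessed horizon is $\Theta(T)$, and applying the robust form of that theorem (whose analysis only requires $m$ to lie in the two-sided range around $f^\star$ stated just before Lemma~\ref{lem:never-drop}) to that epoch's exploit phase. Concretely, the epochs use guesses $T'_j=4k\cdot 2^{j}$, so after $J$ epochs the elapsed time is $\sum_{j\le J}T'_j=4k(2^{J+1}-1)$, a geometric sum dominated by its last term. Hence there is a completed epoch $J$ with $T'_J=\Theta(T)$, and I would lower-bound the total reward simply by the reward collected during the exploit phase of this one epoch (all rewards are nonnegative and monotone, so other epochs only help). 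This is why the doubling incurs no extra $\log T$ factor: we never sum a penalty over epochs, we just need one epoch at the right scale.

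Within epoch $J$ I would spend $T'_J/2$ rounds running the estimation procedure of \cite{pmlr-v272-blum25a} (their Algorithm~2), which geometrically searches over $O(\log(128k))$ candidate magnitudes for the best final reward and therefore returns $\widehat m$ within a constant factor of $f^\star$ at scale $T'_J/2$, at the cost of a $\log(128k)$ factor in the competitive ratio (one candidate scale receives a $1/\log(128k)$ share of the exploration budget). I would then set $m:=\widehat m/(2\cdot 16^\alpha)$ and $\tau':=T'_J/2-k$ and run $PTRR_\alpha$ for the remaining $T'_J/2$ rounds. The shrinkage by $2\cdot 16^\alpha$ is exactly what makes the upper half of the robustness condition hold, so that by Lemma~\ref{lem:never-drop} the best arm is never abandoned; the estimation lower bound on $\widehat m$ simultaneously secures the lower half $m\ge \tfrac{1}{c_2}f^\star(\tau')$ with an absolute constant $c_2$.

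With both halves of the robustness condition verified, Theorem~\ref{thm:upper} applied to the exploit phase (effective horizon $T'_J/2$, parameter $\tau'$) yields expected reward at least $\tfrac{1}{2^{\alpha+3}(\alpha+1)}\cdot \mathrm{OPT}_{T'_J/2}/(k+1)^\gamma$ for that phase. I would then transfer this benchmark to the true horizon using concavity: since $f^\star$ is concave with $f^\star(0)\ge 0$ we have $f^\star(n)\ge (n/T)f^\star(T)$, whence $\mathrm{OPT}_{T'_J/2}=\sum_{n\le T'_J/2}f^\star(n)\ge \Omega(1)\,\mathrm{OPT}_T$ because $T'_J/2=\Theta(T)$. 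Collecting the constant losses — the factor $2$ for spending half the epoch exploring, the factor $2\cdot 16^\alpha$ from shrinking $\widehat m$, the concavity constant in the $\mathrm{OPT}$ comparison, and the $\log(128k)$ estimation overhead — and folding them into the base constant $2^{-(\alpha+3)}/(\alpha+1)$ gives the claimed $\tfrac{1}{2048\cdot 16^\alpha(\alpha+1)\log(128k)}$ prefactor.

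The main obstacle is the estimation step together with the two-sided verification of the robustness condition at the \emph{shifted} scale $T'_J/2$ rather than at $T$. I must confirm that the adapted Algorithm~2 of \cite{pmlr-v272-blum25a} returns an $\widehat m$ accurate for the reward magnitude at the epoch's own horizon, and that, after shrinking by $2\cdot 16^\alpha$, the resulting $m$ still satisfies $f^\star(t)\ge m(t/\tau')^\alpha$ for all $t\le\tau'$ (the precise never-dropping requirement) using the global $\mathrm{LE}(\beta_I)$ envelope, which does not transfer verbatim to a smaller horizon or to the shifted tail curves $g_i(s)=f_i(t_i+s)$; this mismatch is exactly why the constant $16^\alpha$ and the choice $T_0=4k$ appear. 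Tracking these constants so that the geometric-sum argument, the concavity comparison, and the estimation loss compose into precisely the stated bound is the delicate part, whereas the doubling and epoch-selection bookkeeping itself is routine.
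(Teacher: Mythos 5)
Your proposal is correct and follows essentially the same route as the paper's proof: isolate the last completed doubling epoch (so that $\tau = T'/2 - k \ge T/16$), spend half of it on the $\log(128k)$-overhead estimation of $\widehat m$, exploit with the shrunken $m = \widehat m/(2\cdot 16^\alpha)$ so that $m \le f^\star(T)(\tau/T)^\alpha$ restores the never-drop condition through the horizon-$T$ envelope $\mathrm{LE}(\beta_I)$, and invoke the known-$T$ recurrence bound. The only (immaterial) difference is in the final accounting: you transfer benchmarks via $\mathrm{OPT}_{T'/2} \ge \Omega(1)\,\mathrm{OPT}_T$ by concavity, whereas the paper compares $m\tau$ to $\mathrm{OPT}_T$ directly through $\mathrm{OPT}_T \le f^\star(T)\,T \le 32\,\widehat m\,T$; and the delicate step you flag as needing confirmation is resolved exactly as you predict, by keeping all inequalities relative to the original horizon $T$ rather than the epoch's own scale.
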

\begin{proof}
Consider $i$ such that $T' = 2^i \, T_0$ is the last iteration for which an explore / exploit cycle was  completed. Define $T'' \coloneqq \sum_{j = 0}^{i} 2^j T_0 = (2^{i+1} - 1) T_0 $. Then $T'' < T \le 2 T''\,.$ We will argue two things: first, we will show that spending $T'/2$ time on the procedure for estimating $m$ provides a sufficiently good estimate for $m$ for our purposes; second, we will quantify the gap between spending $T'/2$ collecting reward from the instance based on our estimated $m$ and $T$ time spent on the optimal arm.

Let $\tau := T'/2 - k$. Let $\hat{m}$ denote the estimated maximum value at horizon $\tau + k$ from running the estimation procedure for time $T'/2$ (from time $T''-T' $ to $T''-T'/2$). From the analysis in \cite{pmlr-v272-blum25a}, we know that $\frac{1}{2} \widehat{m} \le f^{\star}(\tau) \le 2\widehat{m}$ with probability at least $\frac{1}{\log(128k)}$, and that
$$
    \frac{T}{\tau}= \frac{T}{\frac{T'}{2}-k} \le \frac{T}{\frac{T'}{4}} = 4 \, \frac{T}{T''} \frac{T''}{T'} \le 4 \cdot 2 \cdot 2 = 16\,.$$
Let $m := \frac{\hat m}{2 \cdot 16^\alpha}$, and run $\textit{PTRR}_\alpha$ with $m$, $\tau$ for $\frac{T'}{2}$ steps.

Since $\tau/T \ge 1/16$ and $\widehat{m} \le 2 f^{\star}(\tau) \le 2 f^{\star}(T)$, we know that
$$
m \le \frac{2 f^{\star}(T)}{2 \cdot 16^\alpha} \le f^{\star}(T) \left( \frac{\tau}{T} \right)^{\alpha},
$$
and therefore that 
$$
m \left( \frac{t}{\tau} \right)^\alpha \le f^{\star}(T) \left( \frac{t}{T} \right)^\alpha \le f^{\star}(T) \left( \frac{t}{T} \right)^{\beta_I} \le f^{\star}(t)
$$ for all $t \le T$. It follows that $\textit{PTRR}_\alpha$ again does not switch away from the best arm $f^*$. 

Now, we proceed to the second part of our argument.
Applying Lemma~\ref{lem:solve} with $(\tau', k') = (\tau, k)$ gives
$$
\mathbb{E}[\text{exploit}] \ge \frac{m \tau}{2(\alpha + 1)(k + 1)^\gamma}, \qquad \gamma = \frac{\alpha}{\alpha + 1}.
$$
Since $m = \widehat{m}/(2 \cdot 16^\alpha)$ and $\tau / T \ge 1/16$, it follows that
$$
\mathbb{E}[\text{exploit}] \ge \frac{\widehat{m} \tau}{4(\alpha + 1) 16^\alpha (k + 1)^\gamma} \ge \frac{\widehat{m} T}{64(\alpha + 1) 16^\alpha (k + 1)^\gamma}.
$$

Since $f^{\star}(T) \le 16 f^{\star}(\tau) \le 32 \widehat{m}$, it further follows that $\mathrm{OPT}_T \le f^{\star}(T) T \le 32 \widehat{m} T$, and therefore that 
$$
\frac{\mathbb{E}[\text{exploit}]}{\mathrm{OPT}_T} \ge \frac{1}{2048 (\alpha + 1) 16^\alpha} \cdot \frac{1}{(k + 1)^\gamma}.
$$
Multiplying by the $1 / \log(128k)$ selection probability gives
$$
\mathbb{E}[\mathrm{ALG}_T] \ge \frac{1}{2048 (\alpha + 1) 16^\alpha \log(128k)} \cdot \frac{\mathrm{OPT}_T}{(k + 1)^\gamma}.
$$
\end{proof}

\section{Details for Sample Complexity Analysis}
We now provide some background from prior work and full proofs for the sample complexity of tuning the $\alpha$ parameter in our algorithm family $PTRR_\alpha$.

\subsection{Results from Prior Work \cite{sharma2025offline}} 
\label{appendix:ss25results}
\begin{definition}[Definition 1 in Arxiv verson of \cite{sharma2025offline}]
    Suppose the derandomized dual function $l_T^{I, \pmb{z}}(\rho)$ is a piecewise constant function. The derandomized dual complexity of $\mathcal{D}$ w.r.t. instance I is given by $\mathcal{Q}_\mathcal{D} \coloneqq \E{I \sim \mathcal{D}}{\E{z}{q\left(l_T^{I, \pmb{z}}(\cdot)\right)}}\,,$ where $q(\cdot)$ is the number of pieces over which the function is piecewise constant.
\end{definition}

\noindent Having defined this complexity measure that is capable of characterizing the complexity of an algorithm family of interest, we now present the main theorem we apply from \cite{sharma2025offline}, a uniform convergence guarantee on learning the best value of the parameter $\alpha$:
\begin{theorem}[Theorem 6.1 in arXiv Version of \cite{sharma2025offline}] \label{thm:ss25main}
    Consider the Hyperparameter Transfer setup for any arbitrary $\mathcal{D}$\footnote{To reiterate, we now consider $\mathcal{D}$ supported over $\mathcal{I} \times \Pi_k\,,$ where $\Pi_k = \{ \pi_k\}$ is the set of permutations of $[k]$.} and suppose the derandomized dual function $l_T^{I, \pmb{z}}(\alpha)$ is a piecewise constant function. For any $\epsilon, \delta > 0, N$ problems $\{I_i\}_{i = 1}^N$ sampled from $\mathcal{D}$ with corresponding random coins $\{\pmb{z}_i\}_{i = 1}^N$ such that $N = O\left( \left(\frac{H}{\epsilon}\right)^2 (\log \mathcal{Q}_\mathcal{D} + \log \frac 1 \delta )  \right)$ are sufficient to ensure that with probability at least $1-\delta,$ for all $\pmb{\alpha} \in \mathcal{P},$ we have that:
    
    \begin{equation*}        
    \left| \frac 1N \sum_{i = 1}^N l_T^{P_i, \pmb{z}_i}(\pmb{\alpha}) - \E{P \sim \mathcal{D}}{l_T^P(\pmb{\alpha})}    \right| < \epsilon
    \end{equation*}

\end{theorem}

Note that by the argument below, this guarantee solves the Hyperparameter Transfer Setting described in Definition~\ref{defn:hypertransf}.

\subsection{Uniform Convergence Implies Population Loss Near-Optimality}
\label{appendix:ucpoploss}
We recall the standard argument below for completeness. 

\begin{lemma}
    [Uniform convergence implies near-optimality in population loss]
    Suppose $\hat{\alpha}$ is the minimizer of $\frac 1N \sum_{i = 1}^N l(P_i, \alpha)$ for $N$ large enough to satisfy uniform convergence with error $\epsilon$ and $\alpha^\star \coloneqq \arg \min_{\alpha} \E{P \sim \mathcal{D}}{l_T(P, \alpha)}.$ Then, 
    $$
    \left |\E{P \sim \mathcal{D}}{l_T(P, \hat{\alpha})} - \E{P \sim \mathcal{D}}{l_T(P, \alpha^\star)} \right | < 2\epsilon\,.
    $$
\end{lemma}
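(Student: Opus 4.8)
The plan is to run the textbook empirical-risk-minimization comparison: a three-term chain that inserts the empirical loss between the two population losses and controls each gap by uniform convergence, using the optimality of $\hat\alpha$ in the middle. First I would fix abbreviations, writing $\hat{L}(\alpha) := \frac{1}{N}\sum_{i=1}^N l(P_i,\alpha)$ for the empirical loss and $L(\alpha) := \E{P \sim \mathcal{D}}{l_T(P,\alpha)}$ for the population loss. The hypothesis ``$N$ large enough to satisfy uniform convergence with error $\epsilon$'' means exactly that the event of Theorem~\ref{thm:ss25main} holds, i.e.\ $|\hat{L}(\alpha) - L(\alpha)| < \epsilon$ holds \emph{simultaneously} for every $\alpha$ in the parameter space.

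The core is then the inequality chain
$$
L(\hat\alpha) \;<\; \hat{L}(\hat\alpha) + \epsilon \;\le\; \hat{L}(\alpha^\star) + \epsilon \;<\; L(\alpha^\star) + 2\epsilon .
$$
The first and third steps are the two instantiations of uniform convergence, applied at $\hat\alpha$ and at $\alpha^\star$ respectively; the middle step is the defining property of $\hat\alpha$ as the empirical minimizer, which gives $\hat{L}(\hat\alpha) \le \hat{L}(\alpha^\star)$. Finally, since $\alpha^\star$ minimizes the population loss $L$, we have $L(\hat\alpha) \ge L(\alpha^\star)$, so the quantity inside the absolute value is nonnegative and the chain yields $|L(\hat\alpha) - L(\alpha^\star)| = L(\hat\alpha) - L(\alpha^\star) < 2\epsilon$, as claimed.

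There is no genuine technical obstacle here; the only point worth flagging is conceptual. Uniform convergence is invoked at $\hat\alpha$, which is a \emph{random, data-dependent} parameter, and this is legitimate precisely because Theorem~\ref{thm:ss25main} supplies a bound that holds uniformly over \emph{all} $\alpha$ at once---a per-$\alpha$ (non-uniform) concentration bound would not suffice, since $\hat\alpha$ is selected after seeing the samples. All statements hold on the high-probability event of Theorem~\ref{thm:ss25main}, so the conclusion inherits the same $1-\delta$ success probability, which I would note to connect this lemma back to the sample-complexity bound in Equation~\eqref{eqn:goal}.
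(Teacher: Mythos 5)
Your proof is correct and is essentially the same argument as the paper's: both insert the empirical loss between the two population losses, invoke uniform convergence at $\hat\alpha$ and at $\alpha^\star$, and use empirical optimality of $\hat\alpha$ for the middle step. If anything, your version is slightly tighter, since you explicitly note that $L(\hat\alpha) - L(\alpha^\star) \ge 0$ (so the absolute value resolves cleanly), a sign consideration the paper's add-and-subtract presentation glosses over when it drops the negative middle term.
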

\newcommand{\emploss}[1]{\frac 1N \sum_{i = 1}^N l_T(P_i, #1)}

\begin{proof}
    We can see this by adding and subtracting empirical losses:
    \begin{align}
        \left |\E{P \sim \mathcal{D}}{l_T(P, \hat{\alpha})} - \E{P \sim \mathcal{D}}{l_T(P, \alpha^\star)} \right | &= 
        \bigg |\E{P \sim \mathcal{D}}{l_T(P, \hat{\alpha})} - \emploss{\hat{\alpha}}\\ &+ \emploss{\hat{\alpha}} - \emploss{\alpha^\star} \label{eqn:neg}\\&+ \emploss{\alpha^\star} - \E{P \sim \mathcal{D}}{l_T(P, \alpha^\star)} \bigg | \\
        &\le \bigg |\E{P \sim \mathcal{D}}{l_T(P, \hat{\alpha})} - \emploss{\hat{\alpha}} \bigg | + \\
        &+ \bigg| \emploss{\alpha^\star} - \E{P \sim \mathcal{D}}{l_T(P, \alpha^\star)} \bigg | \\
        &\le 2\epsilon
    \end{align}
    Note that the term in Eqn.~\ref{eqn:neg} is negative, since $\hat{\alpha}$ is the minimizer of the empirical loss, and the last inequality follows from the uniform convergence guarantee. 
\end{proof}

\subsection{Average Regret} \label{appendix:avgregret}

Now, let us consider specific instantiations for an $H$-bounded loss function.

\begin{definition} \label{defn:avgregret}
    Suppose at time $t$, the reward collected by the algorithm is $r_t.$ Define the {\em average regret} as:

    $$
    R(T) \coloneqq\frac{1}{T} \left( \max_i \sum_{t = 1}^T f_i(t) -  \sum_{t = 1}^T r_t \right)\,.
    $$
    That is, the regret in hindsight is in reference to the best fixed arm.
\end{definition}

\begin{fact}
    The average regret is $H$-bounded for $H = m\,.$
\end{fact}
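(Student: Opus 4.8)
The plan is to verify the two defining properties of an $H$-bounded loss with $H=m$, where $m$ denotes (an upper bound on) the largest possible single-pull reward $f^\star(T)=\max_i f_i(T)$, consistent with the way $m$ is used in the sample-complexity corollary. Concretely I must show that (i) $R(T)\in[0,m]$ on every instance and for every $\alpha\in(0,1]$, and (ii) for each fixed (derandomized) instance the map $\alpha\mapsto R(T)$ is piecewise constant on $(0,1]$.

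For the range I would treat the two bounds separately. The upper bound is immediate: since all per-pull rewards are nonnegative, the collected reward $\sum_{t=1}^T r_t$ is nonnegative, so $R(T)\le \tfrac1T\max_i\sum_{t=1}^T f_i(t)=\tfrac1T\mathrm{OPT}_T$; and because $f^\star$ is nondecreasing, $\mathrm{OPT}_T=\sum_{n=1}^T f^\star(n)\le T f^\star(T)$, whence $R(T)\le f^\star(T)\le m$. For the lower bound $R(T)\ge0$ I would show that no allocation of pulls beats the best fixed arm. Writing $G_i(t):=\sum_{n=1}^t f_i(n)$ for the cumulative reward of arm $i$, monotonicity of $f_i$ makes the increments $G_i(t)-G_i(t-1)=f_i(t)$ nondecreasing, so each $G_i$ is convex with $G_i(0)=0$. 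The chord bound then gives, for any realized split $\{T_i\}$ with $\sum_i T_i=T$, the inequality $G_i(T_i)\le \tfrac{T_i}{T}G_i(T)+(1-\tfrac{T_i}{T})G_i(0)=\tfrac{T_i}{T}G_i(T)$; summing over $i$ yields $\sum_i G_i(T_i)\le \max_j G_j(T)=\mathrm{OPT}_T$. Since the algorithm's total reward is exactly $\sum_i G_i(T_i)$ for its realized pull counts, we conclude $\sum_t r_t\le \mathrm{OPT}_T$ and hence $R(T)\ge0$.

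For piecewise-constancy I would reuse the sufficient statistic from Lemma~\ref{lemma:bdqd}. After fixing the arm order (derandomization), $\mathrm{PTRR}_\alpha$ is deterministic and its entire trajectory—hence both $\sum_t r_t$ and the benchmark $\mathrm{OPT}_T$—is determined by the tuple of stopping times $R_\alpha=(t^{(1)}_{\mathrm{stop}},\ldots,t^{(k)}_{\mathrm{stop}})$. The trajectory depends on $\alpha$ only through the finitely many threshold comparisons $f_i(t)\gtrless m(t/\tau)^\alpha$, one for each relevant pair $(i,t)$ with $i\in[k]$, $t\in[T]$. For each such pair the map $\alpha\mapsto m(t/\tau)^\alpha$ is monotone in $\alpha$ (strictly increasing if $t>\tau$, strictly decreasing if $t<\tau$, constant if $t=\tau$), so the sign of $f_i(t)-m(t/\tau)^\alpha$ flips at most once on $(0,1]$. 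These at most $kT$ breakpoints partition $(0,1]$ into finitely many subintervals on each of which all comparison outcomes, and therefore $R_\alpha$, the reward, and $R(T)$, are constant. Because $R(T)$ is an affine function of $\sum_t r_t$ with $T$ and $\mathrm{OPT}_T$ fixed on a fixed instance, this makes $R(T)$ piecewise constant (with at most $kT$ pieces, matching Lemma~\ref{lemma:bdqd}).

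The only real obstacle is the lower bound $R(T)\ge0$, i.e.\ confirming that concentrating on a single best fixed arm is optimal among all allocations; this is exactly where convexity of the cumulative-reward functions $G_i$ (equivalently, monotonicity of the per-pull rewards together with $G_i(0)=0$) enters. A secondary point to flag is the identification of $H$: the constant $m=\tfrac{T-k}{T}f^\star(T)$ plugged into $\mathrm{PTRR}_\alpha$ is strictly below $f^\star(T)$, so for the loss bound one takes $m$ to mean the maximum single-pull reward $f^\star(T)$ (or any known upper bound on it), which is precisely the quantity whose bound the corollary assumes is available.
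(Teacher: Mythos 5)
Your proof is correct, and it supplies an argument the paper itself omits entirely: the Fact is asserted there with no proof. Both of your substantive steps are the right ones. For the range, the upper bound $R(T)\le\tfrac{1}{T}\mathrm{OPT}_T\le f^\star(T)$ follows from nonnegativity and monotonicity as you say, and the genuinely nontrivial part is $R(T)\ge 0$, i.e., that no allocation of pulls beats the best fixed arm; your discrete chord bound is valid because $G_i(s)/s$ is the running average of the nondecreasing sequence $f_i(1),f_i(2),\dots$, hence nondecreasing in $s$, giving $\sum_i G_i(T_i)\le\sum_i\tfrac{T_i}{T}G_i(T)\le\max_j G_j(T)$. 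For piecewise constancy, your observation that each of the $kT$ comparisons $f_i(t)\gtrless m(t/\tau)^\alpha$ is monotone in $\alpha$ and so changes sign at most once on $(0,1]$ is cleaner (and tighter) than the counting sketched in the paper's Lemma~\ref{lemma:bdqd}, and it correctly yields finitely many pieces of the dual loss after derandomization.

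Your closing caveat about the identification of $H$ is not pedantry but a genuine correction, and it is worth recording: with the PTRR instantiation $m=\tfrac{T-k}{T}f^\star(T)$, the Fact as literally stated can fail for the derandomized (per-permutation) loss. For instance, take $T=2k$ (so $\tau=k$, $m=\tfrac12$), best arm constant $1$, and suboptimal arms with $f_i(t)=\tfrac{1}{2k}$ for all $t\ge1$: each suboptimal arm passes the threshold test exactly at $t_i=1$ and fails at $t_i=2$, so under the permutation placing the best arm last the algorithm spends $2(k-1)$ pulls collecting $\tfrac{k-1}{k}$, leaving realized average regret about $1-\tfrac{3}{2k}$, which exceeds $m=\tfrac12$ for $k\ge3$. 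So one must take $H=f^\star(T)$ (or any known upper bound on it), exactly as you propose; since $f^\star(T)=\tfrac{T}{T-k}m\le 2m$ when $T\ge2k$, this changes the sample-complexity corollary only by a constant factor.
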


\noindent Thus, if we are interested in solving the problem with respect to average regret, we get the sample complexity below:
\begin{corollary}
        For the Hyperparameter Transfer setting for the improving multi-armed bandits problem optimizing for averaged regret, $N = O\left( \left(\frac{m}{\epsilon}\right)^2 (\log kT + \log \frac 1 \delta )  \right)$ instances drawn from $\mathcal{D}$ suffice to get the uniform convergence guarantee in Theorem~\ref{thm:ss25main}.
\end{corollary}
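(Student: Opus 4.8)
The plan is to obtain the corollary as a direct instantiation of the generic $H$-bounded sample complexity guarantee, so that no new analysis is required beyond verifying the two hypotheses that Theorem~\ref{thm:ss25main} demands. Concretely, I would check (i) that the average-regret loss of Definition~\ref{defn:avgregret} is bounded by $H = m$, which is exactly the content of the preceding Fact, and (ii) that its derandomized dual is piecewise-constant in $\alpha$ with $\mathcal{Q}_\mathcal{D} \le kT$, which is exactly Lemma~\ref{lemma:bdqd}. Granting these, the sample complexity follows by substituting $H = m$ and $\mathcal{Q}_\mathcal{D} \le kT$ into the bound of Theorem~\ref{thm:ss25main}.

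First I would pin down the boundedness constant. The average regret $R(T) = \frac{1}{T}\bigl(\max_i \sum_{t=1}^T f_i(t) - \sum_{t=1}^T r_t\bigr)$ is nonnegative, since its benchmark $\max_i \sum_t f_i(t) = \mathrm{OPT}_T$ is the optimal reward achievable on the instance and the algorithm's cumulative reward can never exceed it. For the upper bound, $\mathrm{OPT}_T = \sum_t f^\star(t) \le T f^\star(T)$, and because $m = \tfrac{T-k}{T} f^\star(T) \ge \tfrac12 f^\star(T)$ in the standing regime $T \ge 2k$, we have $f^\star(T) \le 2m$ and therefore $R(T) \le f^\star(T) \le 2m$. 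Hence the loss lands in $[0, O(m)]$, justifying $H = m$ up to the constant absorbed by the $O(\cdot)$.

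Next I would record the piecewise-constant structure and its piece count. The benchmark term $\max_i \sum_t f_i(t)$ depends only on the instance, not on $\alpha$; the sole $\alpha$-dependence of $R(T)$ enters through the algorithm's collected reward $\sum_t r_t$, which is fully determined by the tuple of stopping times $R_\alpha = (t^{(1)}_{\mathrm{stop}}, \ldots, t^{(k)}_{\mathrm{stop}})$ realized on the (augmented) instance. Lemma~\ref{lemma:bdqd} shows this tuple takes at most $kT$ distinct values as $\alpha$ ranges over $(0,1]$, so the derandomized dual $l_T^{I,\pmb z}(\alpha)$ is piecewise-constant with at most $kT$ pieces on every fixed instance, giving $\mathcal{Q}_\mathcal{D} \le kT$.

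With both hypotheses in place, I would invoke Theorem~\ref{thm:ss25main} over the derandomized distribution $\mathcal{D}'$ of Section~\ref{sec:sample-complexity} with $H = m$ and $\mathcal{Q}_\mathcal{D} \le kT$, which yields $N = O\bigl((m/\epsilon)^2(\log kT + \log \tfrac{1}{\delta})\bigr)$ and hence the uniform convergence guarantee. I do not anticipate a real obstacle; the only step deserving care is confirming that $R_\alpha$ is genuinely a sufficient statistic for the average-regret loss (so that the piece count and boundedness transfer verbatim), which holds precisely because the benchmark is $\alpha$-independent and the reward sequence is a deterministic function of the stopping times.
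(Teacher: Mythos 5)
Your proposal is correct and follows essentially the same route as the paper: the corollary is obtained by instantiating Theorem~\ref{thm:ss25main} with the boundedness constant $H = m$ from the preceding Fact and the piece-count bound $\mathcal{Q}_\mathcal{D} \le kT$ from Lemma~\ref{lemma:bdqd}. Your additional verifications (nonnegativity and the $R(T) \le 2m$ bound justifying the Fact up to a constant absorbed in the $O(\cdot)$, and the $\alpha$-independence of the benchmark making the stopping-time tuple a sufficient statistic) are details the paper leaves implicit, and they check out.
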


Note that the guarantee of this corollary is that the parameters found will produce a sequence of pulls that has regret that is close to the regret of the sequence of pulls induced by the {\em best set of parameters}.

\subsection{Computational Complexity of ERM}
\label{appendix:erm-alg}
In Section~3.3 we analyze the sample complexity of selecting $\alpha$ by ERM over
the family $\mathcal{A}=\{\text{PTRR}_\alpha:\alpha\in(0,1]\}$ in the
Hyperparameter Transfer Setting (Definition~2.6), using the uniform convergence
results from Appendix~B.1 and
Appendix~B.2. Here we add to that analysis by showing that, given offline
access to the full learning curves, the ERM minimizer over the
continuous domain $\mathcal{P}=(0,1]$ can be computed exactly. Our implementation is efficient for small $k$, and we leave open the question of efficient exact or approximate implementation for large $k$.

As in Section~3.3, we derandomize Algorithm~1 by augmenting each instance with a
permutation $z\in\Pi_k$ that fixes the order in which arms are sampled from $S$.
We therefore work with i.i.d.\ samples $(I_1,z_1),\dots,(I_N,z_N)\sim (D')^N$. For
fixed $(I,z)$ and $\alpha$, the execution of $\text{PTRR}_\alpha$ is
deterministic. Let $l^{I,z}_T(\alpha)$ denote the corresponding (derandomized)
dual loss value at horizon $T$ (Appendix~B.1). 

If $S=\emptyset$ while $t<T$, we halt (this does not occur for appropriate alpha, but ensures that the algorithm is well-defined for
all $\alpha$ and does not affect the arguments below).

\paragraph{Finite critical set for fixed $(I,z)$}.

Fix an augmented instance $(I,z)$ with reward curves
$\{f_i(t)\}_{i\in[k],\,t\in\{0,1,\dots,T\}}$, and recall that Algorithm~1 makes $\alpha$-dependent decisions through the keep-test
\begin{equation}
\label{eq:ptrr_keep_test_app}
f_i(t_i)\ \ge\ m\Big(\frac{t_i}{\tau}\Big)^\alpha,
\qquad t_i\in\{0,1,\dots,T\}.
\end{equation}
For each pair $(i,t)$ with $i\in[k]$ and $t\in\{1,\dots,T\}\setminus\{\tau\}$,
define $c_{i,t}\in(0,1]$ to be the unique solution (if it exists) to the equality
\begin{equation}
\label{eq:critical_point_def_app}
f_i(t)\ =\ m\Big(\frac{t}{\tau}\Big)^\alpha,
\qquad \alpha\in(0,1].
\end{equation}
Uniqueness holds because for fixed $t\neq\tau$, the map $\alpha\mapsto m(t/\tau)^\alpha$
is strictly monotone. Let the critical set be
$$
\mathcal{C}(I)
\ :=\
\big\{c_{i,t}\in(0,1] : i\in[k],\ t\in\{1,\dots,T\}\setminus\{\tau\},
\text{ and \eqref{eq:critical_point_def_app} has a solution in }(0,1]\big\}.
$$
Then clearly $|\mathcal{C}(I)|\le kT$.

\begin{lemma}[Constancy between critical values]
\label{lem:constancy_between_critical}
Fix $(I,z)$. The function $\alpha\mapsto l^{I,z}_T(\alpha)$ is constant on each
connected component of $(0,1]\setminus \mathcal{C}(I)$.
Moreover, any change in $l^{I,z}_T(\alpha)$
can occur only at $\alpha\in\mathcal{C}(I)$.
\end{lemma}

\begin{proof}
For each $i\in[k]$ and $t\in\{0,1,\dots,T\}$, define the predicate
$$
\text{Test}_{i,t}(\alpha)
\ :=\
\mathbf{1}\!\left[\, f_i(t)\ \ge\ m\Big(\frac{t}{\tau}\Big)^\alpha \,\right].
$$
When $t=\tau$, the right-hand side equals $m$ and is independent of $\alpha$.
When $t\neq\tau$, strict monotonicity of $\alpha\mapsto m(t/\tau)^\alpha$ implies that
$\mathsf{Test}_{i,t}(\alpha)$ can change value only at the unique equality solution
$c_{i,t}$ (if it exists). Therefore, on any connected component
$U\subset(0,1]\setminus\mathcal{C}(I)$, every predicate $\mathsf{Test}_{i,t}(\alpha)$
is constant for all $\alpha\in U$.

Now fix $\alpha,\alpha'\in U$ and couple the executions of $\text{PTRR}_\alpha$ and
$\text{PTRR}_{\alpha'}$ on the same augmented instance $(I,z)$.
Because $z$ fixes the order of arms sampled from $S$, the only remaining branching
in Algorithm~1 is the outcome of the keep-test \eqref{eq:ptrr_keep_test_app} at the
current arm and current pull-count. Each such branch is determined by some
$\mathsf{Test}_{i,t}(\cdot)$, and all of these predicates agree on $U$.
Therefore both executions take identical branches at every step and therefore generate
the same pull sequence and the same accumulated reward, which implies that $l^{I,z}_T(\alpha)=
l^{I,z}_T(\alpha')$. This proves constancy on $U$ and that trace changes can occur
only at $\alpha\in\mathcal{C}(I)$.
\end{proof}

Because Algorithm~1 uses a non-strict inequality in \eqref{eq:ptrr_keep_test_app},
the value of $l^{I,z}_T(\alpha)$ at $\alpha\in\mathcal{C}(I)$ can differ from
the constant values on the adjacent open intervals. Therefore, any exact ERM
procedure must treat critical points as candidates.

\paragraph{Exact ERM over $\alpha\in(0,1]$}.

Given i.i.d.\ samples $(I_1,z_1),\dots,(I_N,z_N)\sim(D')^N$, define the empirical
objective
\[
\widehat{L}_N(\alpha)
\ :=\
\frac{1}{N}\sum_{j=1}^N l^{I_j,z_j}_T(\alpha),
\qquad \alpha\in(0,1].
\]
We show how to compute an exact minimizer
$\widehat{\alpha}\in\arg\min_{\alpha\in(0,1]} \widehat{L}_N(\alpha)$. For each sample $j$, form $\mathcal{C}(I_j)$ and sort its distinct elements as
$$
0 < c_{j,1} < \cdots < c_{j,q_j} \le 1,
\qquad q_j \le kT.
$$
Define the open intervals induced by these breakpoints:
\[
U_{j,0}:=(0,c_{j,1}),\quad
U_{j,r}:=(c_{j,r},c_{j,r+1})\ (r=1,\dots,q_j-1),\quad
U_{j,q_j}:=(c_{j,q_j},1),
\]
(with the convention that if $q_j=0$ then $U_{j,0}=(0,1)$).
By Lemma~\ref{lem:constancy_between_critical}, we know that $l^{I_j,z_j}_T(\alpha)$ is constant
on each $U_{j,r}$, but may take a different value at each $\alpha=c_{j,r}$. Define the global critical set
$$
\mathcal{C}_{\mathrm{all}}
\ :=\
\bigcup_{j=1}^N \mathcal{C}(I_j),
\qquad |\mathcal{C}_{\mathrm{all}}|\le NkT,
$$
and let its distinct elements be
$0 < b_1 < \cdots < b_M < 1$ (possibly $M=0$).

\begin{theorem}[Running time for implementing exact ERM for $\text{PTRR}_\alpha$ over the derandomized loss function]
\label{thm:exact_erm_ptrr}
Given offline access to full reward curves for each training instance, an exact
empirical minimizer
$\widehat{\alpha}\in\arg\min_{\alpha\in(0,1]} \widehat{L}_N(\alpha)$
can be computed in time
$$
O\!\left(NkT^2\ +\ NkT\log(NkT)\right).
$$
\end{theorem}

\begin{proof}
We first show that it suffices to minimize $\widehat{L}_N(\alpha)$ over a finite
candidate set. By Lemma~\ref{lem:constancy_between_critical},
for each $j$, the function $l^{I_j,z_j}_T(\alpha)$ is constant on each open interval
of $(0,1]\setminus\mathcal{C}(I_j)$. Therefore the empirical average
$\widehat{L}_N(\alpha)$ is constant on each open interval of
$(0,1]\setminus\mathcal{C}_{\mathrm{all}}$. The only points at which
$\widehat{L}_N(\alpha)$ can differ from these open-interval constants are the
breakpoints in $\mathcal{C}_{\mathrm{all}}$ and the endpoint $\alpha=1$, which implies that an exact minimizer is attained by some $\alpha$ in the finite set
$$
\mathcal{A}_{\mathrm{cand}}
\ :=\
\Big(\{b_1,\dots,b_M,1\}\Big)
\ \cup\
\Big\{\text{one representative }\tilde\alpha_r\in(b_r,b_{r+1})
:\ r=0,\dots,M\Big\}
$$
(where we set $b_0:=0$ and $b_{M+1}:=1$).

We now describe an evaluation procedure for $\mathcal{A}_{\mathrm{cand}}$ and bound
its runtime. For each sample $j$, we compute
(i) the constant value of $l^{I_j,z_j}_T(\alpha)$ on each local interval $U_{j,r}$
by simulating $\text{PTRR}_\alpha$ at one representative point in $U_{j,r}$, and
(ii) the value at each local breakpoint $c_{j,r}$ by simulating at $\alpha=c_{j,r}$,
and also simulate once at $\alpha=1$.
Each simulation performs at most $T$ pulls and therefore runs in $O(T)$ time.
Since each sample has at most $q_j\le kT$ breakpoints and $q_j+1\le kT+1$ open
intervals, this requires $O((2q_j+2)\cdot T)=O(kT^2)$ time per sample and
$O(NkT^2)$ time total.

Next, we sort the global breakpoint set $\mathcal{C}_{\mathrm{all}}$ in
$O(NkT\log(NkT))$ time. Sweep $\alpha$ from left to right across the sorted list.
On each global open interval $(b_r,b_{r+1})$, compute $\widehat{L}_N(\tilde\alpha_r)$
using the current local-interval value for each sample. At each breakpoint $b_r$,
compute $\widehat{L}_N(b_r)$ by using the precomputed breakpoint value for those
samples that have a local breakpoint at $b_r$, and otherwise using the current
local-interval value. Track the best value encountered across all candidates in
$\mathcal{A}_{\mathrm{cand}}$, including $\alpha=1$. This yields an exact minimizer
over $(0,1]$.

The sweep itself is linear in the number of breakpoint events, $O(NkT)$, and is
dominated by the $O(NkT^2)$ simulation cost and the $O(NkT\log(NkT))$ sorting cost.
\end{proof}

This is a purely computational result, which shows that the ERM
minimizer assumed in Appendix~B.2 can be computed exactly over the continuous
parameter space $\alpha\in(0,1]$ from offline learning curves in time
$$
O\!\left(NkT^2\ +\ NkT\log(NkT)\right).
$$

\noindent We can use this result for implementing exact ERM over the derandomized dual loss function to implement exact ERM over the true dual loss function (which involves expectation over the randomization of PTRR$_\alpha$) by taking an average over $k!$ permutations of the arms.

\subsection{An Algorithm for Finding a Suitable Value of the Parameter}
\label{appendix:approx-learning-alpha}

In this section, we provide a natural algorithm for finding a suitable value of the parameter $\alpha$ for the PTRR algorithm based on access to instances. We then analyze the sample complexity, runtime complexity, and performance of the algorithm. \par

At a high level, the algorithm starts by (approximately) identifying the underlying value of the CEE, $\beta_i$ for each instance $i$ in the training set. Then, we aggregate estimated values $\hat{\beta}_i$ by taking the maximum and that aggregated value (or 1, if it is bigger than 1) is returned as a value of $\alpha$ to use in future deployments. We now present a very general version of the result.

\begin{algorithm}
\caption{Approximate Learner}\label{alg:learn-alpha}
\begin{algorithmic}[1]
\REQUIRE $n$ samples from distribution $\mathcal{D}$
\FOR{each instance $I_i$}
    \STATE $\hat{\beta}_i \leftarrow \texttt{CEE\_Oracle}(I_i)$
\ENDFOR
\STATE $\beta \gets \max_i  \hat{\beta}_i $
\RETURN $\beta$
\end{algorithmic}
\end{algorithm}



For both steps, we can adjust the sophistication of the protocols. For the first step, we start by assuming we have an oracle that returns $CEE(I)$ for any instance $I\,.$ We can implement an approximate oracle by dividing the interval $(0, 1]$ into subintervals of size $\epsilon$ and binary searching over that discrete set for the smallest satisfying value. Similarly, for the second step, it is easiest to analyze the aggregation protocol that simply takes the maximum. However, this has the awkward property that with more samples, the predicted parameter increases toward the supremum of the support of the induced distribution over $\beta$s. Instead, we ought to use a $1-\delta$ order statistic that concentrates around its mean. This implies discarding some probability mass as part of the failure probability.

At a high level, in order to argue that this algorithm provides interesting, non-vacuous guarantees on the performance of PTRR with the learned parameter on a new instance drawn from the same distribution, we argue that the learned value of the parameter is good for a good fraction of the support of the distribution. To argue about this, we first define the induced distribution over $\beta$:

\begin{definition}
    For each instance $I_i$ drawn from the distribution $\mathcal{D}\,,$ suppose we compute $\beta_i = CEE(I).$ We call the distribution over the $\beta_i$ the {\em induced distribution over } $\beta$s.
\end{definition}

Next, we define the optimal parameter $\alpha^\star$ for the distribution. 

\begin{definition}
    Define $\alpha^\star$ as the maximizer of the expected competitive ratio over the distribution of instances. In particular, $\alpha^\star \coloneqq \arg \max_{\alpha \in (0, 1]} \E{I_\text{test} \sim \mathcal{D}}{\frac{Rew(PTRR_\alpha, I_\text{test})}{OPT(I_\text{test})}}\,.$
\end{definition}

Now, assuming the induced distribution over $\beta$ for a distribution $\mathcal{D}$ has some density and cumulative density, we can show that, for a fixed number of samples, with quantifiable probability (approaching 1 as the number of samples goes to infinity), we can bound the reward ratio in terms of the learned parameter. We formalize this in the theorem below.

\begin{theorem}
    Suppose the induced distribution over $\beta$s has density $f(\beta)$ and cumulative density $F(\beta)\,.$ Suppose we run Algorithm~\ref{alg:learn-alpha} with a fixed number of samples $n > 10$ and receive $\hat{\alpha}$ as the learned value of the parameter. If $\hat{\alpha} \ge1\,,$ we run $PTRR_1$ and achieve $1/\sqrt{k}$ fraction of the reward. Otherwise, 
    with probability $\left( \frac{1}{n+1} \right)^{1/n}\left( \frac{n}{n+1}\right)\,$ (which approaches 1 as $n \rightarrow \infty)$ over the test sample and the training samples, the reward of running $PTRR_{\hat{\alpha}}$ on the test sample is at least $1/k^{\hat{\alpha}/(\hat{\alpha}+1)}$ fraction of running $PTRR_{\alpha^\star}\,$ on that instance. That is:
    \begin{equation} \label{eqn:reward-ratio}
    \frac{Rew(PTRR_{\hat{\alpha}}, I_\text{test})}{Rew(PTRR_{\alpha^\star}, I_\text{test})} \ge \Omega \left(\frac{1}{k^{\hat{\alpha}/(\hat{\alpha}+1)}} \right)\,.
    \end{equation}
\end{theorem}


\begin{remark}
    Note that this is different from the PAC guarantee given by ERM in two important ways. First, we cannot choose arbitrarily small failure probability and error values and draw a number of samples that is a function of those. Secondly, we are providing a guarantee for a single fixed test instance drawn from the distribution, not for the ``average'' test instance. In particular, we are using the fact that PTRR with the best value of the parameter for the distribution cannot get more reward on the instance than the optimal reward achievable on that instance. 
\end{remark}

\begin{proof}

In order to prove this result, at a high-level, we will show that even if we resign ourselves to failure on a small fraction of the distribution (i.e., bad approximation ratio), we can still achieve a good approximation with high probability on the rest. To show this, we proceed in three steps:
\begin{enumerate}
    \item First, we define two events that are crucial to our guarantee: first, that the learned $\hat{\alpha}$ is sufficiently large; second, that the test example is sufficiently nice.
    \item We argue that for any test example for which the CEE is at most $\tau\,,$ running $PTRR_{\hat{\alpha}}$ obeys the reward ratio in Eqn.~\ref{eqn:reward-ratio} as long as $\hat{\alpha} \ge \tau$.
    \item We argue that with good probability over the training sample, $\hat{\alpha} \ge \tau\,,$ and with good probability over the test instance, $CEE(I_\text{test}) \le \tau\,.$ Thus, we can combine the probabilities of the relevant events to show that with the stated probability, the reward ratio guarantee holds.
\end{enumerate}

\paragraph{Relevant Events} At a high level, our strategy will be to resign ourselves to failure on a small fraction of the support of the induced distribution over $\beta$s in the interest of guaranteeing an approximation on the rest of the support. To that end, we define the following two events:
\begin{align}
    \mathcal{E}_1 &\coloneqq \{  \hat{\alpha} \ge \tau  \} \\
    \mathcal{E}_2 &\coloneqq \{  CEE(I_\text{test}) \le \tau \}
\end{align}

We know from the analysis in Section~\ref{sec:intro3} that for an instance with CEE $\beta$\,, running $PTRR_\alpha$ for any $\alpha \ge \beta$ will achieve $1/k^{\alpha/(\alpha+1)}$ fraction of the optimal reward. Thus, on a new test example, provided the value of the PTRR parameter we use is larger than the CEE of the instance, we will accrue sufficient reward. We formalize this subsequently.

\paragraph{Reward Ratio Holds}. Now, recall from Theorem~\ref{thm:ptrr-alpha} that for an instance with CEE $\beta_I\,,$ $PTRR_\alpha$ for any $\alpha > \beta_I$ accrues $O(OPT/k^{\alpha/(\alpha+1)})$ reward. Thus, if $\hat{\alpha} > \tau > \beta_\text{test} \coloneqq CEE(I_\text{test})\,,$ the $Rew(PTRR_{\hat{\alpha}}, I_\text{test}) \ge O(OPT/k^{\alpha/(\alpha+1)})\,.$ Finally, we observe that by the definition of $OPT\,,$ we have that $OPT \ge Rew(PTRR_\alpha, I_\text{test}) \, \forall \alpha\,,$ so in particular this holds for $\alpha = \alpha^\star\,,$ the value of $\alpha$ for which the expected competitive ratio is maximized. 
Thus, we have that if $\hat{\alpha} \ge \tau$ and $CEE(I_\text{test}) \le \tau\,,$ then:
$$
\frac{Rew(PTRR_{\hat{\alpha}}, I_\text{test})}{Rew(PTRR_{\alpha^\star}, I_\text{test})} \ge O \left(\frac{1}{k^{\hat{\alpha}/(\hat{\alpha}+1)}} \right)\,.
$$

Thus, it remains to argue that the conditions in the previous statement occur with good probability.

\paragraph{Probability}. First, let us consider event $\Pr{}{\mathcal{E}_2}\,.$ This is simply the cumulative density at $\beta=\tau\,, F(\tau)\,.$ Next, we analyze $\Pr{}{\mathcal{E}_1}:$
\begin{align}
    \Pr{}{\mathcal{E}_1} &= \Pr{}{\hat{\alpha} \ge \tau} \\
    &= \Pr{}{\max_{i}\hat{\beta}_i \ge \tau} \\
    \text{ under oracle } &= 1 - \Pr{}{\forall \, i \, \beta_i \le \tau} \\
    &= 1 - F(\tau)^n\,.
\end{align}
The probability of success is the probability of the intersection of these two events, so the probability of success is $F(\tau)(1-F(\tau)^n)\,.$ Now, it suffices to show that we can pick a $\tau$ that gives us a good probability of success. We can therefore pick the $\tau$ that maximizes the probability of success, namely $\arg \max_{\tau} F(\tau)(1-F(\tau)^n)\,.$
\begin{align}
    \max_{\tau} F(\tau)(1-F(\tau)^n) \Leftrightarrow f(\tau) - (n+1) F(\tau)^n f(\tau) &= 0 \\
    f(\tau)\left( 1 - (n+1) F(\tau)^n \right) &= 0 \\
    \frac{1}{n+1} &= F(\tau)^n \\
    \tau = F^{-1}\left(  \left(   \frac{1}{n+1} \right)^{1/n}  \right)\,.
\end{align}

Plugging this back into the probability of success, we get the stated result.

\end{proof}

\subsection{Full Empirical Evaluation}
\label{appendix:empirical}
The main phenomenon highlighted by the theory is that the parameter $\alpha$ controls the exploration--abandonment tradeoff, and therefore that the best $\alpha$ can depend on the underlying instance. The purpose of this empirical section is to provide evidence on real learning-curve data that different instances prefer different $\alpha$. We use sample-wise learning curves from the Learning Curve Database (LCDB~1.1), specifically the CC-18 benchmarks.
LCDB stores error-rate curves at multiple training-set sizes (``anchors'') for a fixed set of learners, together with repeated evaluations under a nested cross-validation protocol. 

Each dataset $d$ defines one improving-bandit instance.
Arms correspond to learners, and the time index corresponds to LCDB anchor points.
We convert error rates to rewards through
$
r_{i,d}(t) \;=\; 1 - \mathrm{err}_{i,d}(t),
$
so that larger rewards correspond to better performance.
LCDB stores multiple repetitions per (dataset, learner, anchor) due to nested cross-validation.
We average across these repetitions (ignoring missing values) to obtain a single mean learning curve per (dataset, learner).
This yields deterministic reward functions $r_{i,d}(\cdot)$, matching our setting.

LCDB curves can terminate early for some dataset--learner pairs (in the stored tensors this appears as NaNs).
For a fixed horizon $T$, we restrict to datasets for which all included learners have finite values for anchors $t=1,\dots,T$.
This ensures that each dataset corresponds to a well-defined bandit instance with a common horizon.

For each usable dataset $d$ and each $\alpha$ on a fixed grid, we run $\mathrm{PTRR}_\alpha$ for a horizon of $T$ pulls.
The only randomness in our implementation is the random ordering in which arms are first considered. We average results over 200 random seeds. We evaluate performance using the normalized cumulative reward
$$
\frac{\mathbb{E}[\mathrm{ALG}(d;\alpha)]}{\mathrm{OPT}(d)}
\qquad\text{where}\qquad
\mathrm{OPT}(d)=\max_i \sum_{t=1}^T r_{i,d}(t),
$$
which is the reciprocal of the competitive ratio $\mathrm{OPT}/\mathbb{E}[\mathrm{ALG}]$ from Definition~2.2. Note that $\mathrm{OPT}(d)$, the best fixed-arm policy on $d$ in hindsight, is not necessarily the global best policy here due to the absence of monotonicity.

\paragraph{Main experiment: $T=44$, $k=22$.}
Our primary experiment uses horizon $T=44$ and a learner set of size $k=22$ (we exclude two learners with $\geq 50 \%$ ill-behavior in the CC-18 file).
Under this choice, we obtain 27 datasets with complete prefixes of length $T$ across all $k$ learners.


\begin{figure}[H]
    \centering
    \begin{subfigure}[t]{0.49\linewidth}
        \centering
        \includegraphics[width=\linewidth]{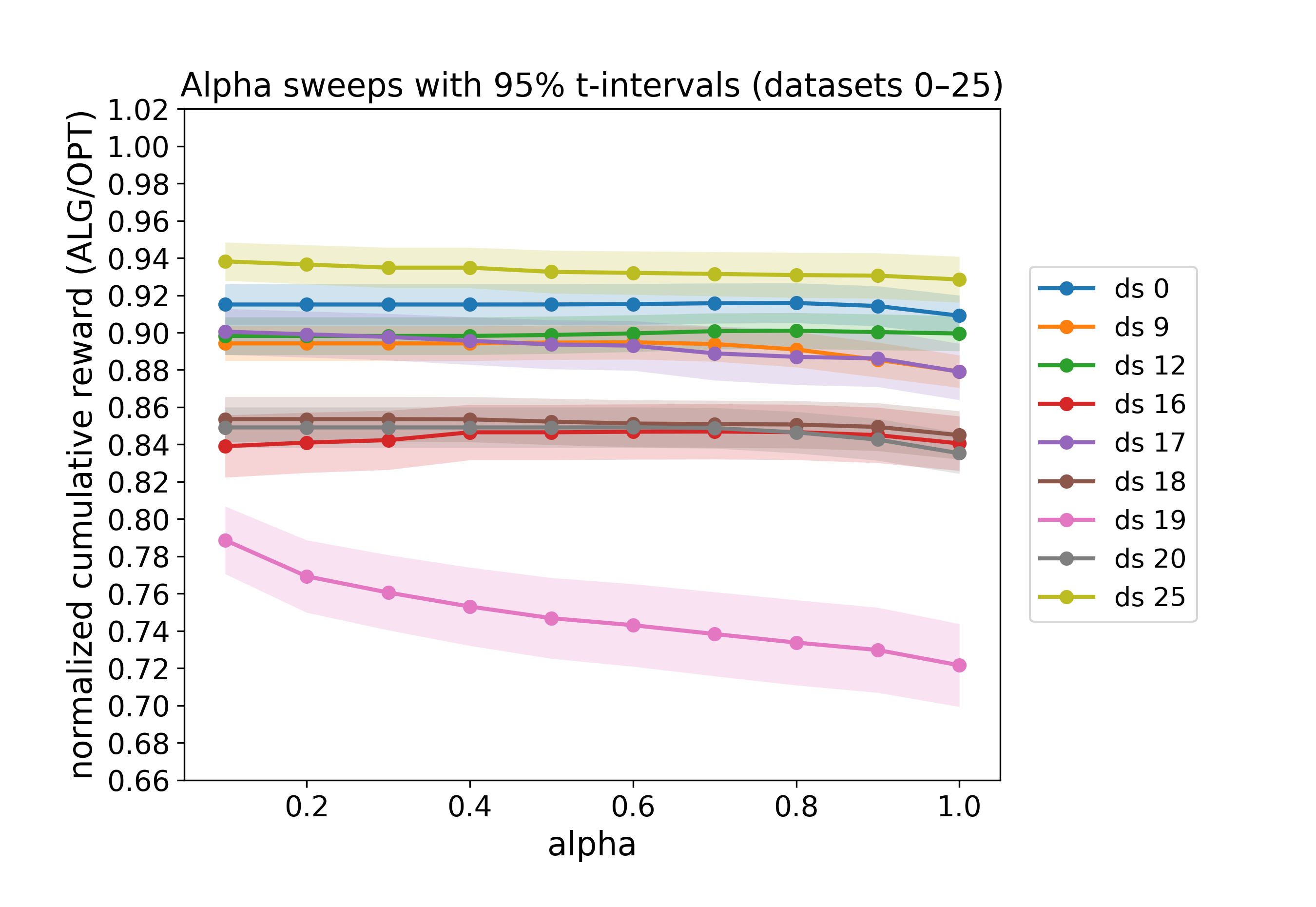}
        \label{fig:alpha_sweep_block1_ci_bigfont}
    \end{subfigure}\hfill
    \begin{subfigure}[t]{0.49\linewidth}
        \centering
        \includegraphics[width=\linewidth]{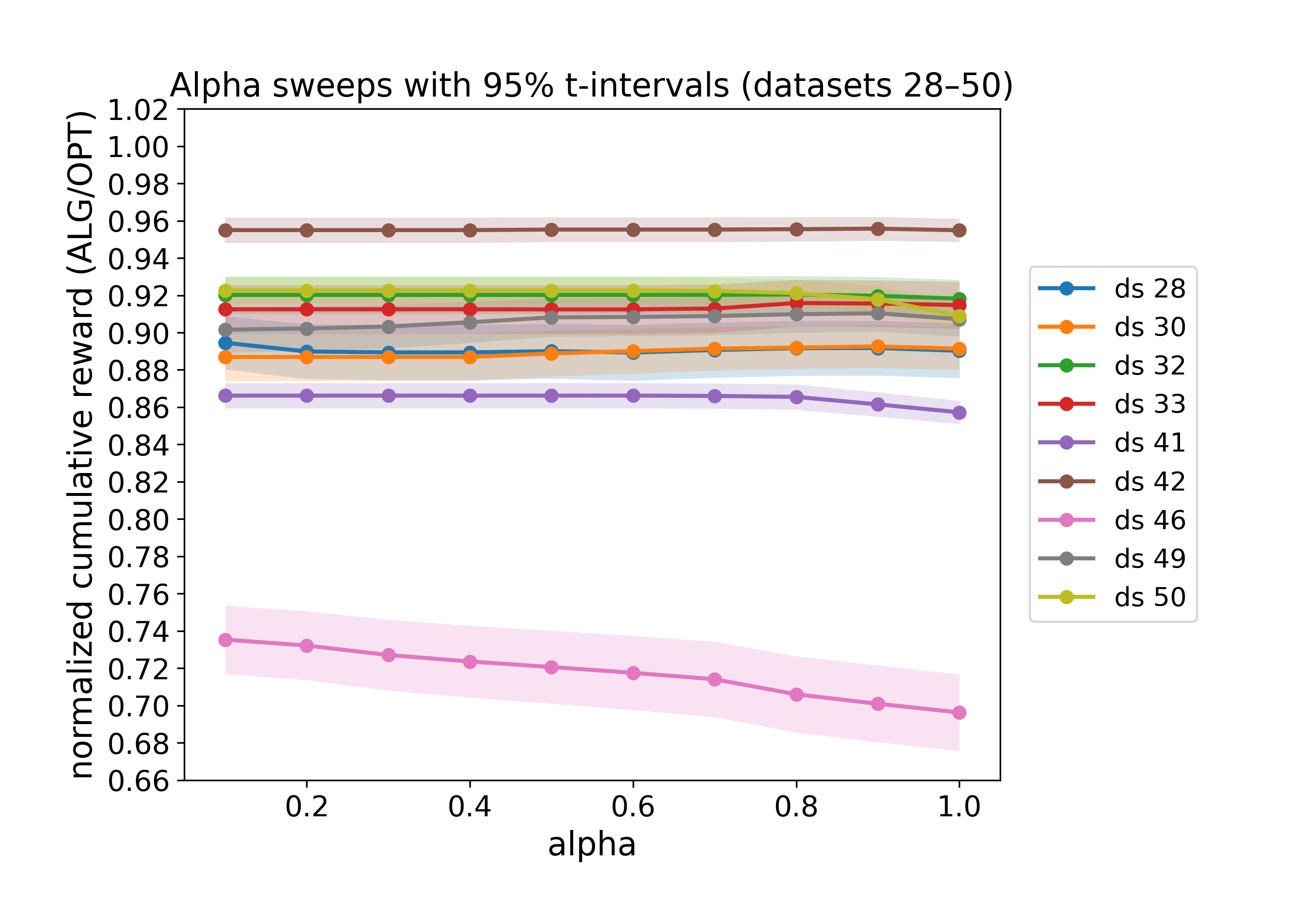}
        \label{fig:alpha_sweep_block2_ci_bigfont}
    \end{subfigure}

    \vspace{0.6em}

    \begin{subfigure}[t]{0.49\linewidth}
        \centering
        \includegraphics[width=\linewidth]{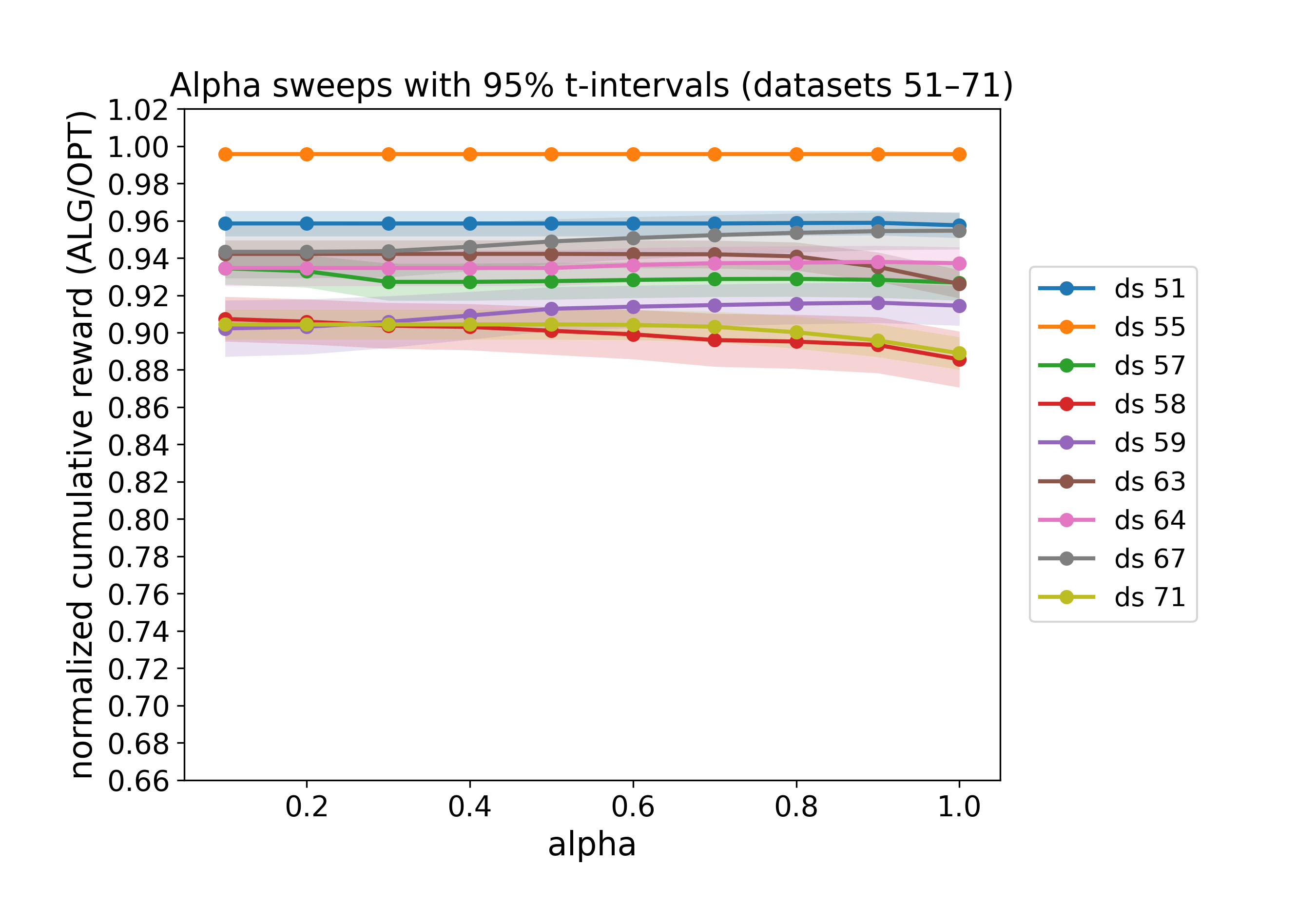}
        \label{fig:alpha_sweep_block3_ci_bigfont}
    \end{subfigure}

    \caption{\textbf{Sensitivity of $\mathrm{PTRR}_\alpha$ to $\alpha$ on all LCDB instances ($T=44$, $k=22$).} Each curve corresponds to one CC-18 dataset $d$ from LCDB~1.1 and reports the normalized cumulative reward $\mathbb{E}[\mathrm{PTRR}_\alpha(d)]/\mathrm{OPT}(d)$ as a function of $\alpha\in\{0.1,0.2,\dots,1.0\}$, where $\mathrm{OPT}(d)=\max_i\sum_{t=1}^T r_{i,d}(t)$ is the cumulative reward of the best fixed-arm policy in hindsight under the same horizon (which does not necessarily correspond to the global optimal policy due to the absence of monotonicity) 
and $r_{i,d}(t)=1-\mathrm{err}_{i,d}(t)$ is the mean (over cross-validation) reward at anchor $t$ for arm $i$. For each $(d,\alpha)$, $\mathbb{E}[\mathrm{PTRR}_\alpha(d)]$ is estimated by averaging over $200$ random arm orderings. The shaded regions are pointwise 95\% Student-$t$ confidence intervals across the 200 runs (mean $\pm\, t_{0.975,199}\cdot \mathrm{sd}/\sqrt{200}$). For most datasets, performance differences across $\alpha$ are small relative to the confidence intervals, while a minority show a significant trend across $\alpha$ on this grid.}
    \label{fig:alpha_sweep_all_blocks_ci_bigfont}
\end{figure}

We sweep $\alpha \in \{0.1,0.2,\dots,1.0\}$.
Figure \ref{fig:alpha_sweep_all_blocks_ci_bigfont} shows all of the per-dataset performance curves $\alpha \mapsto \mathbb{E}[\mathrm{ALG}]/\mathrm{OPT}$, demonstrating that the maximizer varies across certain instances. Across the 27 datasets, the best $\alpha$ is widely distributed: 11 datasets select $\alpha=0.1$, while many others select $\alpha$ in the range $[0.8,1.0]$ (it is worth noting that 4 of the 11 cases are actually flat across several small $\alpha$ values). Mechanistically, smaller $\alpha$ corresponds to more aggressive early abandonment in $\mathrm{PTRR}_\alpha$, so these datasets are those for which aggressive abandonment is (on this grid) empirically most favorable.

\begin{figure}[H]
    \centering
    \includegraphics[width=1\linewidth]{learning_curves_triptych_19_16_67_T44_bestalpha.png}
    \caption{\textbf{Mean LCDB reward curves for three datasets with distinct best $\alpha$ values on the grid.} Each panel overlays the mean reward curves $r_{i,d}(t)=1-\mathrm{err}_{i,d}(t)$ across anchors $t$ for all $k=22$ arms on a single CC-18 dataset $d$. The title of each panel reports the value of $\alpha\in\{0.1,0.2,\dots,1.0\}$ that maximizes the estimated normalized cumulative reward $\mathbb{E}[\mathrm{PTRR}_\alpha(d)]/\mathrm{OPT}(d)$ at horizon $T=44$ on that dataset. 
    We selected these datasets to illustrate the reward dynamics underlying distinct best $\alpha$ values on the grid. \looseness-1
    \label{fig:learning curves appendix}}
\end{figure}

Naturally, real learning curves in LCDB are not guaranteed to satisfy the monotonicity/concavity assumptions used in this work (and largely don't). To connect $\alpha$ to concrete learning dynamics, we plot mean learning curves with fixed axes for three datasets with distinct best $\alpha$'s.
Figure \ref{fig:learning curves appendix} displays all learners for dataset 19 (which selects $\alpha=0.1$), dataset 16 (which selects $\alpha = 0.7)$ and dataset 67 (which selects $\alpha=1.0$). Qualitatively, these plots suggest a mechanism consistent with the influence of $\alpha$ on $\mathrm{PTRR}_\alpha$, where datasets preferring smaller $\alpha$ tend to exhibit early separation between `good' and `bad' arms (making aggressive abandonment beneficial), while datasets preferring larger $\alpha$ exhibit closer early performance among many learners (making aggressive abandonment riskier). We emphasize that this interpretation is qualitative and based directly on the plotted mean curves; the purpose of these plots is merely to contextualize the observed instance dependence.

\section{Best-of-both-worlds for Maximizing Cumulative Reward}

In this section, we consider the problem of getting the best policy regret possible if it is sublinear and reverting to optimal competitive ratio if it is not. In the main body, we considered a similar formulation for best-arm identification. Here, we instead hybridize between an algorithm known to get good policy regret on good instances from \cite{metelli_stochastic_2022} and an algorithm known to get optimal competitive ratio on worst-case instances from \cite{pmlr-v272-blum25a}. We take a data-driven approach to identifying exactly how to hybridize, i.e., exactly when to switch from the policy-regret-optimizing algorithm to the competitive-ratio-optimizing algorithm. We note that as a result of this approach, we do {\em not} get per-instance worst-case guarantees. This is an interesting direction for future work.

\subsection{Motivating Examples}
\cite{pmlr-v272-blum25a} give a randomized algorithm for improving bandits (also known as deterministic rested rising multi-armed bandits) that achieves the optimal $\sqrt{k}$ competitive ratio on worst-case instances. Although it is well known that a sublinear regret is not attainable for worst-case instances of improving bandits, under some niceness assumptions, UCB-style algorithms~\cite{metelli_stochastic_2022} can be shown to obtain sublinear policy regret (regret w.r.t.\ the best single arm pulled for the entire time horizon). Here we present examples that show that both the above algorithms from the literature may be sub-optimal and there are instances where each is dominated by the other.

\begin{example}
{\it The randomized robin algorithm of \cite{pmlr-v272-blum25a} may suffer linear policy regret on instances where the UCB-based algorithm of ~\cite{metelli_stochastic_2022} achieves sublinear regret.} We set the reward function for the best arm as $f_{i^*}(t)=1$ for all $t$, and for any other arm $i\ne i^*$ as $f_{i}(t)=t/T$, where $T$ is the time horizon. Now the randomized robin algorithm selects the optimal arm first with probability $1/k$, and otherwise, it keeps playing a sub-optimal arm. The expected total reward is $T\cdot\frac{1}{k}+\frac{T}{2}\cdot\frac{k-1}{k}=\frac{T}{2}\left(1+\frac{1}{k}\right)$, which corresponds to  $\Omega(T)$ regret. On the other hand, the UCB-based algorithm has a logarithmic upper bound on its regret. Indeed, after $O(\log T)$ exploratory pulls of any sub-optimal arm, it will always prefer the optimal arm, implying an upper bound of $O(k\log T)$ on the policy regret.
\end{example}

\begin{example}
{\it The UCB-based algorithm of ~\cite{metelli_stochastic_2022} may have $\Omega(k)$ competitive ratio on some instances.} We use the example used in the lower bound construction of \cite{pmlr-v272-blum25a}. Set $f_{i^*}(t)=t/T$ for all $t$ for the optimal arm $i^*$, and for any other arm $i\ne i^*$ as $f_{i}(t)=\min\left\{t/T,\frac{1}{\sqrt{k}}\right\}$. Due to the exploration term, while the arm rewards are identical, each arm gets pulled an equal number of times. By time $T$, each arm gets pulled $T/k$ times for a total reward of $T/2k$, sub-optimal by a factor of $k$.
\end{example}

\subsection{Data-driven Hybrid Approach}

Our goal, therefore, is to take a step toward devising algorithms that can achieve sublinear regret for nice instances but fall back to optimal competitive ratio for general instances.
To this end, we describe an algorithm that interpolates between a UCB-based algorithm by \cite{metelli_stochastic_2022} that gets sublinear regret on nice instances and the algorithm of \cite{pmlr-v272-blum25a} which achieves the optimal worst-case competitive ratio.

\begin{algorithm}
\caption{Regret-optimizing Hybrid Algorithm, i.e. \texttt{Regret-Hybrid}$_B$} \label{alg:reg-hybr}
\begin{algorithmic}
    \STATE Run Algorithm 1 (R-ed-UCB) of \cite{metelli_stochastic_2022} for B time steps 
    \STATE Run PTRR for T-B time steps
\end{algorithmic}
\end{algorithm}

We define a family of algorithms parameterized by $B$:

\begin{definition} \label{defn:regret-hybrid}
    Define the family of algorithms \texttt{Regret-Hybrid} $\coloneqq \{ \texttt{Regret-Hybrid}_B : B \in [T] \} \,, $ where $\texttt{Regret-Hybrid}_B$ is Algorithm~\ref{alg:reg-hybr}.
\end{definition}

Now, as before, we analyze $Q_D$ and then instantiate the result for a loss function of interest.

\begin{lemma}
    For the family $\texttt{Regret-Hybrid}$ defined in Defn.~\ref{defn:regret-hybrid}, the improving multi-armed bandits problem, and any piecewise constant loss function, $Q_D \le k\,T^2\,.$
\end{lemma}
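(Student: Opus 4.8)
The plan is to reuse the sufficient-statistic counting scheme from Lemmas~\ref{lemma:bdqd} and~\ref{lemma:bdqd-hybrid}: identify a finite tuple that determines the loss of $\texttt{Regret-Hybrid}$ on a fixed augmented instance, and then bound the number of pieces of the dual loss function as the parameters range over their domain. Since $Q_\mathcal{D}$ is an expectation over the augmented distribution of the number of pieces of $l_T^{I,\pmb{z}}(\cdot)$, a deterministic bound that holds for every fixed instance transfers immediately to $Q_\mathcal{D}$.

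First I would fix an augmented instance $(I,\pi_k)$ and observe that Stage~1, the R-ed-UCB run of \cite{metelli_stochastic_2022}, is a deterministic schedule once the instance and any coins encoded in $\pi_k$ are fixed. Hence the switch time $B$ alone determines the entire first phase: the post-switch pull counts $(t_1,\dots,t_k)$ with $\sum_i t_i = B$ and the residual horizon $T-B$ are functions of $B$. Because $B\in[T]$ is discrete, this contributes at most $T$ distinct starting configurations for Stage~2.

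Next, for each fixed $B$, Stage~2 runs $\textit{PTRR}_\alpha$ on the shifted curves $g_i(s)=f_i(t_i+s)$ for $T-B$ rounds. Exactly as in Lemma~\ref{lemma:bdqd}, the loss of this run is captured by the tuple of stopping times $(t^{(1)}_{\mathrm{stop}},\dots,t^{(k)}_{\mathrm{stop}})$, where the $j$-th coordinate records when the keep-test first fails on the $j$-th arm played. Each such coordinate is a monotone step function of the PTRR exponent taking at most $T$ values, so it has at most $T$ breakpoints; summing the at most $k$ coordinates yields at most $kT$ pieces of the Stage~2 behavior for each fixed $B$ (and if the exponent is held fixed, this count only shrinks, so the bound holds a fortiori). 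Multiplying the $T$ choices of $B$ by the $kT$ pieces per slice gives at most $kT^2$ pieces of the joint dual, hence $Q_\mathcal{D}\le kT^2$.

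The step to verify carefully is that the tuple $(B;\,t^{(1)}_{\mathrm{stop}},\dots,t^{(k)}_{\mathrm{stop}})$ is genuinely a sufficient statistic for \emph{any} piecewise-constant loss: one value of the tuple must induce at most one loss value, which requires that the full realized sequence of pulls (and hence the rewards any loss can read off) be reconstructible from the switch time together with the Stage~2 stopping times. This is the same reconstruction argument as in the single-parameter PTRR case, now composed with the deterministic Stage~1 schedule; the only added bookkeeping is tracking the residual horizon $T-B$ and the shift $t_i$ inside $g_i$. I expect no new analytic difficulty beyond confirming this reconstruction, since the per-slice count is identical to Lemma~\ref{lemma:bdqd} and the outer factor is the trivial discrete count over $B$.
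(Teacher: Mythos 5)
Your proposal is correct and follows essentially the same route as the paper: the paper's proof simply notes that $B$ ranges over $T$ discrete values and, for each fixed $B$, invokes Lemma~\ref{lemma:bdqd} to bound the number of PTRR behaviors on the (deterministically determined) Stage-2 residual instance by $kT$, giving $kT \cdot T = kT^2$. Your additional observations—that Stage 1 is deterministic once the augmented instance is fixed, and that the stopping-time tuple together with $B$ reconstructs the full pull sequence—are exactly the (implicit) justifications behind the paper's two-line argument.
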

\begin{proof}
    $B$ takes on $T$ discrete values, and for each fixed $B\,,$ by Lemma~\ref{lemma:bdqd}, there are at most $k\,T$ behaviors for the PTRR family on an instance.
\end{proof}

In order to study cumulative reward via regret, we can use the average regret loss function defined in Defn.~\ref{defn:avgregret}. Then, extending Theorem~\ref{thm:hybrid-alg-sc}, we get the following corollary:

\begin{corollary}
        For the Hyperparameter Transfer setting for the improving multi-armed bandits problem optimizing for averaged regret over the algorithm family described in Defn.~\ref{def:fullyhybridfam}, $N = O\left( \left(\frac{m}{\epsilon}\right)^2 (\log kT + \log \frac 1 \delta )  \right)$ instances drawn from $\mathcal{D}$ suffice to get the uniform convergence guarantee in Theorem~\ref{thm:ss25main}.
\end{corollary}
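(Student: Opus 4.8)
The plan is to assemble the corollary from three pieces that are already in place: the $H$-boundedness of the average-regret loss, the dual-complexity bound for this hybrid family, and the uniform-convergence guarantee of Theorem~\ref{thm:ss25main}.

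First I would record that the average regret $R(T)$ of Definition~\ref{defn:avgregret} is $H$-bounded with $H = m$. Since $m$ upper bounds the best final pull and each reward curve is nondecreasing and bounded above by its terminal value, every per-step reward satisfies $f_i(t) \le m$; hence the best-fixed-arm benchmark $\frac{1}{T}\max_i \sum_{t=1}^{T} f_i(t)$ is at most $m$, and the regret, being nonnegative and no larger than this benchmark, lies in $[0,m]$. This holds regardless of how the hybrid algorithm splits its budget between the exploratory stage and the $\textit{PTRR}_\alpha$ stage, so $H$-boundedness is unaffected by the hybridization.

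Second I would invoke Lemma~\ref{lemma:bdqd-hybrid}, which gives $Q_\mathcal{D} \le k\,T^2$ for this family on the derandomized distribution $\mathcal{D}'$. Recall that $\mathcal{D}'$ augments each instance with the $k!$ permutations of $[k]$, so that the only randomness left to count is the discrete switching time $B$ (taking at most $T$ values) together with the at most $kT$ distinct $\textit{PTRR}$ behaviors per fixed $B$, which yields the product $kT^2$. Finally I would substitute $H = m$ and $Q_\mathcal{D} \le kT^2$ into the bound $N = O\!\left( (H/\epsilon)^2 \bigl(\log Q_\mathcal{D} + \log \tfrac{1}{\delta}\bigr) \right)$ of Theorem~\ref{thm:ss25main}. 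Because $\log(kT^2) = \log k + 2\log T \le 2\log(kT)$, the logarithmic factor collapses to $O(\log kT)$, giving $N = O\!\left( (m/\epsilon)^2 \bigl(\log kT + \log \tfrac{1}{\delta}\bigr) \right)$. The derandomization construction of Section~\ref{sec:sample-complexity} then transfers the uniform-convergence statement from $\mathcal{D}'$ back to $\mathcal{D}$, since averaging over permutations leaves the population loss unchanged.

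I do not anticipate a substantive obstacle, as the corollary is a direct instantiation of the general machinery. The only points needing care are confirming that neither stage of the hybrid produces rewards exceeding $m$ (so that $H = m$ is legitimate) and that the piece-count in $Q_\mathcal{D}$ correctly \emph{multiplies} the $T$ choices of $B$ by the per-$B$ $\textit{PTRR}$ behaviors rather than adding them; both are already settled by the Fact on average regret and by Lemma~\ref{lemma:bdqd-hybrid}.
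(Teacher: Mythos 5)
Your proposal is correct and follows exactly the route the paper intends: the corollary is a direct instantiation of Theorem~\ref{thm:ss25main} using the Fact that average regret is $H$-bounded with $H=m$ and the lemma that $Q_\mathcal{D} \le kT^2$ (the $T$ choices of $B$ multiplied by the at most $kT$ $\textit{PTRR}$ behaviors per fixed $B$), with $\log(kT^2) = O(\log kT)$ collapsing the logarithmic factor. Your additional care about regret nonnegativity and the permutation-derandomization step matches what the paper leaves implicit.
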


Note that this result provides an algorithm that is competitive with the best possible algorithm in the family on instances from distribution $\mathcal{D}.$ Thus, that algorithm may not provide the worst-case fallback guarantee on a {\em fixed} instance. As mentioned earlier, this is an interesting consideration for future work.

\section{BAI Comparison with Prior Work and Proof Details}\label{appendix:PR}

We provide below a comparison with prior work on best arm identification along with complete proof details for Section \ref{sec:bothworlds}.

\subsection{Comparison With Prior Work}

First, let us compare the BAI task in \cite{mussi2024best} to ours. In their work, they study the task of identifying the arm with the greatest single pull in the instance (this equals the pull of that arm at time $T$ due to reward monotonicity). In our work, we study the task of identifying the arm with the greatest cumulative reward in the instance. While these tasks are slightly different, we note that both are important and relevant tasks, and the best arm for one task will be a 2-approximation (that is, not be more than a factor of two suboptimal) for the other task. 
On the one hand, consider a setting in which each arm is a technology, and investing in developing the technology increases its utility. If we are interested in identifying one out of many technologies that will have the highest utility after the investment period, we are interested in identifying the arm with the highest single pull, corresponding to \cite{mussi2024best}'s setting. On the other hand, consider a setting in which each arm is an advertisement, and the reward associated with a pull is the amount of money a user spends as a result of that advertisement. A corporation would care to identify the arm that has the highest cumulative reward and then use that arm for future users. Thus, both are interesting and valid goals albeit slightly different from each other. \par

Having defined the tasks, we will now see more clearly that the difference in the ``niceness'' conditions relates to this, as well. We consider the deterministic variant of the setting which is the focus of this work (that is, set the stochasticity in the instances to zero). Now, we study the conditions and terms in Theorem 4.1 of \cite{mussi2024best}. Note that setting the stochasticity to zero implies $\epsilon, \sigma = 0\,.$ Then, $y$ is no longer a function of $a\,,$ and thus Eqn. 4 is satisfied for {\em all} $a\,,$ so we can achieve BAI with probability 1 if the conditions are met. This is the same guarantee we achieve, and so we are ready to compare the conditions more carefully. \par

In \cite{mussi2024best}, they study $y_i\,,$ the number of pulls of arm $i$ until the first difference is smaller than the average gap between the best single pull over all arms and the best single pull of this arm. This quantity corresponds to our quantity $h_i(\epsilon)\,,$ which is the number of pulls of arm $i$ after which the amount of cumulative reward we could achieve if we continued growing with the same slope is bounded by $\epsilon.$ Thus, both $y_i$ and $h_i$ study how quickly arms ``converge'' to their final value, $y_i$ directly in the value (of individual pull) sense and $h_i$ in the cumulative sense. \par

Finally, we note that \cite{mussi2024best} compares the sum of these ``convergence times'' to $(1-\iota)T\,,$ whereas we compare to a parameter $B\,.$ In both cases, the smaller the comparison value, the more value we can extract from the best arm in the remaining $\iota T$ or $T-B$ time. Thus, the overall flavor of the conditions in both works is the same, namely that if arms ``converge quickly,'' then BAI is possible. The exact notions of the words in quotes are what differ.

\subsection{Difficulty in Corralling Improving Bandit Algorithms}\label{appendix:corralling}

A natural approach towards obtaining best-of-both-worlds guarantees in the above examples would be to design a meta-algorithm that can potentially switch between \cite{pmlr-v272-blum25a} and \cite{mussi2024best}. We examine whether it is possible in the improving bandits setting to perform corralling~\cite{agarwal2017corralling,arora2021corralling,luo2022corralling}, a recently developed paradigm for meta-learning bandit algorithms in a fully online setting. Our main result here is an impossibility result which rules out the ability to use corralling to achieve best-of-both-worlds in a fully online improving bandits setting. We show that in the improving bandits setting, no matter what the meta-algorithm is, it is possible to suffer linear regret relative to the best base algorithm.

In the meta-learning setting, we have a collection of $M$ {\it base} algorithms $\mathcal{B}=\{B_1,\dots,B_M\}$ for the IMAB problem and the goal is to (nearly) recover the guarantees of the best algorithm on any given instance. Each base algorithm can give its own prediction for which arm to play next, given the history of arm pulls and rewards so far and the  meta-learner can decide which arm to actually pull based on these predictions. For best arm identification, we define the sub-optimality ratio of the meta-algorithm as

$$\overline{R}_T(\mathcal{M},\mu,\mathcal{B}):=\left[\max_{j}R_T^{B_j}(\mu)\right]/R_T^{\mathcal{M}}(\mu),$$
\noindent where $\mu$ denotes the IMAB instance, $R_T^{B_j}$ denotes the cumulative reward of the best arm selected by base algorithm $B_j$, and $R_T^{\mathcal{M}}$ denotes the cumulative reward of the best arm selected by the corralling meta-algorithm $\mathcal{M}$.

\begin{theorem}[Lower Bound for Best Arm Identification in Corralling Improving Bandits]
Consider deterministic multi-arm rested bandits with rising concave reward sequences. Let $\mathcal{B}=\{B_1,\dots,B_M\}$ be any finite class of (possibly randomized) base algorithms. A meta-algorithm $\mathcal M$ selects at each round one base algorithm whose recommended arm is executed.
Then 
\[
\inf_{\mathcal M}
\sup_{\mu,\mathcal{B}}
\overline{R}_T(\mathcal{M},\mu,\mathcal{B})
\;\ge\;
2.
\]
\label{theorem:corralling-bai}
\end{theorem}

\begin{proof}
Let $k=3$. Fix horizon $T$ and let $L=T/2$. Choose $\Delta>0$ sufficiently small. Construct three deterministic concave rising environments $E^j$ as follows.

For $s \le L$, define
\[
\mu_1(s)=\mu_2(s)=\mu_3(s)=s\Delta.
\]
That is, the first $L$ pulls of each arm yield identical rewards in all environments.

For $s>L$, define:

\medskip
\noindent
\textbf{Environment $E^j$:}
\[
\mu_j(s)=L\Delta+(s-L)\Delta,
\qquad
\mu_{i}(s)=L\Delta, \text{ for }i\ne j.
\]

\medskip
All reward sequences are nondecreasing and concave.
Observe that the  environments are identical until some arm is pulled more than $L$ times. In $E^j$, arm $j$ is uniquely optimal after the divergence. Set the base algorithm class $\mathcal{B}=\{B_1,B_2,B_3\}$, where $B_j$ always pulls arm $j$ and outputs it as the best arm.

Consider any meta-algorithm $\mathcal M$. Note that the meta-algorithm can pull at most one base algorithm more than $L$ times, say $B_1$ (WLOG). Since rewards are identical for the first $L$ pulls of each arm, $\mathcal M$ cannot distinguish between environments $E^2$ and $E^3$. So, no matter what best arm is selected by $\mathcal M$, we can always select an environment where it picks a sub-optimal arm. In any environment the optimal arm gets cumulative reward twice as much as any sub-optimal arm. 
Therefore,
\[
\sup_{\mu\in\{E^j\}}
\overline{R}_T(\mathcal{M},\mu,\mathcal{B})
\ge
2.
\]
\end{proof}

\noindent For cumulative reward maximization, we seek to bound the meta-regret with respect to the best algorithm among the base algorithms


$$\tilde{R}_T(\mathcal{M},\mu,\mathcal{B}):=\max_{j}R_T^{B_j}(\mu) - R_T^{\mathcal{M}}(\mu),$$

\noindent where $\mu$ denotes the IMAB instance, $R_T^{B_j}$ denotes the cumulative reward if algorithm $B_j$ is used exclusively to decide the arm pulls for $T$ rounds, and $R_T^{\mathcal{M}}$ denotes the cumulative reward collected by the corralling meta-algorithm $\mathcal{M}$.

\begin{theorem}[Minimax Impossibility of Corralling Improving Bandits]
Consider deterministic two-arm rested bandits with rising concave reward sequences and horizon $T$. Let $\mathcal{B}=\{B_1,\dots,B_M\}$ be any finite class of (possibly randomized) base algorithms. A meta-algorithm $\mathcal M$ selects at each round one base algorithm whose recommended arm is executed.
Then there exists a universal constant $c>0$ such that\looseness-1
\[
\inf_{\mathcal M}
\sup_{\mu,\mathcal{B}}
\tilde{R}_T(\mathcal{M},\mu,\mathcal{B})
\;\ge\;
cT.
\]
\label{theorem:corralling}
\end{theorem}

\begin{proof}[Proof Sketch]
    We construct two environments that are identical for the first $T/2$ pulls of each arm but diverge thereafter, with opposite optimal arms. Any meta-algorithm attempting to compete with the best run-alone base must effectively determine which environment it faces before the divergence point. However, since the two instances are indistinguishable during the prefix, identifying the correct environment requires linear exploration. Consequently, linear meta-regret is unavoidable on at least one of the two instances. A full proof is located in Appendix \ref{appendix:corralling}.
\end{proof}

\noindent This theorem establishes a  minimax barrier for corralling in improving bandits.  The obstruction is information-theoretic: the two environments are indistinguishable for a linear prefix yet demand opposite commitments thereafter. This motivates the need to consider alternative approaches towards achieving best-of-both-worlds guarantees.

We provide below a complete proof for Theorem \ref{theorem:corralling}.

\begin{proof}
Fix horizon $T$ and let $L=T/2$. Choose $\Delta>0$ sufficiently small. Construct two deterministic concave rising environments $E^+$ and $E^-$ as follows.

For $s \le L$, define
\[
\mu_1(s)=\mu_2(s)=s\Delta.
\]
That is, the first $L$ pulls of either arm yield identical rewards in both environments.

For $s>L$, define:

\medskip
\noindent
\textbf{Environment $E^+$:}
\[
\mu_1(s)=L\Delta,
\qquad
\mu_2(s)=L\Delta+(s-L)\Delta.
\]

\medskip
\noindent
\textbf{Environment $E^-$:}
\[
\mu_2(s)=L\Delta,
\qquad
\mu_1(s)=L\Delta+(s-L)\Delta.
\]

\medskip
Both reward sequences are nondecreasing and concave.
Observe that the two environments are identical until some arm is pulled more than $L$ times. In $E^+$, arm $2$ is uniquely optimal after the divergence; in $E^-$, arm $1$ is uniquely optimal. Set the base algorithm class $\mathcal{B}=\{B_1,B_2\}$, where $B_j$ always pulls arm $j$.

Consider any meta-algorithm $\mathcal M$. Since rewards are identical for the first $L$ pulls of each arm, the distribution over the first $L$ actions of $\mathcal M$ is identical under $E^+$ and $E^-$. Let $N_1$ be the expected number of pulls of arm $1$ by round $L$.
Without loss of generality, suppose $N_1 \le L/2$ (otherwise swap the roles of the arms).

Consider environment $E^-$, where arm $1$ becomes uniquely optimal after the divergence. Let $j^*$ be an index maximizing $R_T^{B_j}(E^-)$. Base algorithm $B_1$ must obtain $\Theta(T\Delta)$ additional reward by exploiting arm $1$ after round $L$.

Because $\mathcal M$ allocates at most $L/2$ pulls to arm $1$ during the prefix, it under-invests in the arm that later becomes uniquely optimal. Due to concavity, the cumulative post-divergence reward is linear in the number of additional pulls. Thus, there exists a constant $c>0$ such that
\[
R_T^{B_{1}}(E^-)
-
R_T^{\mathcal M}(E^-)
\ge
cT.
\]

If instead $N_1>L/2$, the same argument applied to $E^+$ yields an identical bound. Therefore,
\[
\sup_{\mu\in\{E^+,E^-\}}
\tilde{R}_T(\mathcal{M},\mu,\mathcal{B})
\ge
cT.
\]
\end{proof}

\subsection{Proofs for results in Section~\ref{sec:hybridguarantees}} \label{appendix:hybridproofs}
We prove the lemma used in  Section~\ref{sec:hybridguarantees} to justify the quantities $L_i, U_i$ used in Algorithm~\ref{alg:hybrid}.

\begin{lemma}\label{lem:envelope} 
For every arm $i$ and every $t \leq T$, we have $L_i(t) \leq f_i(T) \leq U_i(t)$.
\end{lemma}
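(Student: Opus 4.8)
The statement to prove is Lemma~\ref{lem:envelope}: for every arm $i$ and every $t \leq T$, we have $L_i(t) \leq f_i(T) \leq U_i(t)$, where $L_i(t) = f_i(t)$, $\triangle_i(t) = (T-t)\gamma_i(t-1)$ with $\gamma_i(t-1) = f_i(t) - f_i(t-1)$, and $U_i(t) = L_i(t) + \triangle_i(t)$. The plan is to handle the lower and upper bounds separately, each as a direct consequence of monotonicity and concavity of the reward functions, respectively.

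For the lower bound $L_i(t) \leq f_i(T)$, I would simply invoke monotonicity: since $f_i$ is nondecreasing (Definition~\ref{defn:imab}) and $t \leq T$, we have $f_i(t) \leq f_i(T)$, i.e.\ $L_i(t) \leq f_i(T)$. This is immediate. For the upper bound $f_i(T) \leq U_i(t) = f_i(t) + (T-t)\gamma_i(t-1)$, I would use concavity (diminishing returns). The key idea is to write the terminal value as a telescoping sum of per-step increments starting from $t$,
$$
f_i(T) = f_i(t) + \sum_{s=t+1}^{T} \bigl(f_i(s) - f_i(s-1)\bigr),
$$
and then bound each increment $f_i(s) - f_i(s-1)$ by the last observed slope $\gamma_i(t-1) = f_i(t) - f_i(t-1)$. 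Concavity gives precisely that the increments are non-increasing in $s$, so for every $s > t$ we have $f_i(s) - f_i(s-1) \leq f_i(t) - f_i(t-1) = \gamma_i(t-1)$. There are exactly $T - t$ terms in the sum, each at most $\gamma_i(t-1)$, which yields $f_i(T) \leq f_i(t) + (T-t)\gamma_i(t-1) = U_i(t)$, as desired.

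I would also note the boundary bookkeeping: for $t=0$ the claim holds trivially since $U_i(0) := \infty$ and $L_i(0) = 0 \leq f_i(T)$ (assuming $f_i(0) \geq 0$), and for $t \geq 1$ the slope $\gamma_i(t-1)$ is well-defined, so the telescoping argument applies directly. I do not anticipate a genuine obstacle here: the only subtlety is making sure the number of increments is counted correctly (there are $T-t$ steps from $t$ to $T$, matching the factor $(T-t)$ in $\triangle_i$) and that concavity is applied in the right direction (later increments are \emph{smaller}, so the last observed slope is the largest remaining one, giving a valid \emph{upper} bound on the tail growth). This matches the intended interpretation of $\triangle_i(n)$ as the most reward an adversary can ``hide in the tail'' by continuing at the last observed slope.
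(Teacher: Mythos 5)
Your proof is correct and follows essentially the same route as the paper's: monotonicity gives the lower bound $L_i(t)=f_i(t)\le f_i(T)$, and concavity applied to a telescoping sum of increments gives the upper bound. If anything, your version is slightly more careful on indexing: you bound each tail increment by the last observed slope $\gamma_i(t-1)$, exactly matching the definition of $U_i(t)$, whereas the paper's displayed bound uses $\gamma_i(t)$ (harmless since concavity gives $\gamma_i(t)\le\gamma_i(t-1)$, but a minor index slip), and you also handle the $t=0$ boundary case explicitly.
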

\begin{proof}
By concavity, we know that $\gamma_i(s)$ is non-increasing, and therefore that for any $x \ge t$, we have
$$
f_i(x) - f_i(t) = \sum_{s=t}^{x-1} \left(f_i(s+1) - f_i(s)\right) \le \sum_{s=t}^{x-1} \gamma_i(t) = (x - t)\gamma_i(t).
$$
Setting $x = T$ gives $f_i(T) \le f_i(t) + (T - t)\gamma_i(t) = U_i(t).$ By monotonicity, we likewise know that $L_i(t) = f_i(t)\leq F_i(T)$.  
\end{proof}

We include below a complete proof for Theorem \ref{thm:hybrid-single2}.

\noindent{\it Proof (of Theorem \ref{thm:hybrid-single2}.)}

\begin{proof}[Proof of 1. (certificate correctness)]
Suppose an instance $I$ satisfies \(\mathrm{GCC}(\theta)\). Since $\triangle_i(t)$ is non-increasing for all $i$ and
$\sum_{i=1}^k h_i(\Delta_I/3) \le \theta$, we know that there are at most $\theta$ total pulls (across arms) such that $\triangle_i(t_i) > \Delta_I/3$. Note that we can never have $L_{i}(t_i)  > \max_{j \ne i} U_j(t_j)$ for some $i \neq i^*$, as we know by concavity and monotonicity that $L_i(t) \le f_i(T) \le U_i(t)$, so this would imply $f_{i^*} (T) \leq U_{i^*}(t_{i^*}) < L_i (t_i) \leq f_{i}(T)$ (a contradiction). Since $B>\theta$ and Stage $1$ always pulls the arm $i$ that maximizes $(U_i - L_i)$, it follows that the algorithm will reach some point (namely $t = \theta$) where $\triangle_i(t_i) \leq \Delta_I/3$ for all $i$. At this point, for each $j \ne i^\star$, we have
$$
U_j(t_j) \le f_j(T) + \frac{\Delta_I}{3} \le f^*(T) - \Delta_I + \frac{\Delta_I}{3} = f^*(T)  - \frac{2\Delta_I}{3}.
$$
Moreover, $i^*$ satisfies
$$
L_{i^\star}(t_{i^*})  \ge f^*(T) - \frac{\Delta_I}{3}.
$$ It follows that $L_{i^\star}(t_{i^*})  > \max_{j \ne i^\star} U_j(t_j)$, and therefore that the algorithm returns $i^*$ in Stage $1$.
\end{proof}

\begin{proof}[Proof of 2. (approximation fallback)]
Write $g^\star(h):=\max_i g_i(h)=\max_i f_i(t_i+h)$.
Let $T_\text{rem} := T-B$, and let $\tau' = T_\text{rem}-k$. Suppose $m' := (\tau'/T)f^{\star}(T)$. Since 
$g^{\star}(T) = \max_i f_i(T) \ge f^{\star}(T)$, we know that
$$
m' = \frac{\tau'}{T} f^{\star}(T) \le f^{\star}(T) \left( \frac{\tau'}{T} \right)^{\alpha} \le g^{\star}(T) \left( \frac{\tau'}{T} \right)^{\alpha}.
$$
Using monotonicity and the fact that $g^{\star}(T) \le 2 f^{\star}(T)$, we also know that $g^{\star}(\tau') \le 2 f^{\star}(T) = 2(T/\tau') m'$, and therefore that \ $m' \ge (\tau'/(2T)) g^{\star}(\tau')$. Since 
$B \le T/2$ and $T \ge 4k$, it follows that
$$
\frac{1}{8} g^{\star}(\tau') \le m' \le g^{\star}(T) \left( \frac{\tau'}{T} \right)^{\alpha}.
$$
Run \textit{PTRR}\(_\alpha\) for $T_\text{rem}$ steps with parameters $m'$ and $\tau'$. By Theorem \ref{thm:ptrr-alpha}, we know that
$$
\mathbb{E}\big[\mathrm{ALG}_{T_\text{rem}} \big]\ \ge\ 
\frac{\mathrm{OPT}^{\mathrm{res}}_{T_\text{rem}}}{C_\alpha c_2(k+1)^{\alpha/(1+\alpha)}},
$$
where $\mathrm{OPT}^{\mathrm{res}}_{T_{\mathrm{rem}}}$ denotes the optimal cumulative reward for $\{g_i\}$ over $T_{\mathrm{rem}}$ rounds and $C_\alpha = 2^{\alpha+2}(\alpha+1)$.

Now note that $\mathrm{OPT}^{\mathrm{res}}_{T_{\mathrm{rem}}}\ge \tfrac12g^\star(T_{\mathrm{rem}})T_{\mathrm{rem}}$, and that the algorithm’s maximum single‑pull reward dominates its average reward (Facts B.1 and B.3 in the appendix of \cite{pmlr-v272-blum25a}). Let $\hat i$ denote the arm that achieves this maximum single-pull reward, and note that

$$
\mathbb{E}\Big[\max_{t\le T_{\mathrm{rem}}}\text{reward}_t\Big]\ \ge\ \frac{1}{T_{\mathrm{rem}}}\mathbb{E}\big[\mathrm{ALG}_{T_{\mathrm{rem}}}\big]\ \ge\ \frac{g^\star(T_{\mathrm{rem}})}{2C_\alpha c_2(k+1)^{\alpha/(1+\alpha)}}.
$$

Since $f_{\hat i}(T)\ge \max_{t\le T_{\mathrm{rem}}}\text{reward}_t$ by monotonicity, we know that taking expectations gives
$$
\mathbb{E}\big[f_{\hat i}(T)\big]\ \ge\ \frac{g^\star(T_{\mathrm{rem}})}{2C_\alpha c_2(k+1)^{\alpha/(1+\alpha)}}.
$$
Monotonicity and concavity give $g^\star(T_{\mathrm{rem}})\ge (T_{\mathrm{rem}}/T)f^\star(T)\ge \tfrac12 f^\star(T)$, as $T_{\mathrm{rem}}\ge T/2$. Combining with $c_2\le 8$ from Step~1, it follows that
$$
\mathbb{E}\big[f_{\hat i}(T)\big]\ \ge\ \frac{1}{2C_\alpha c_2(k+1)^{\alpha/(1+\alpha)}}\cdot \frac{1}{2}f^\star(T)\ \ge\ \frac{1}{2^{\alpha+7}(\alpha+1)}(k+1)^{-\alpha/(1+\alpha)}f^\star(T),
$$
as desired.
\end{proof}

\subsection{Proof of Lemma \ref{lemma:bdqd-hybrid}}

\begin{proof}
    We argue this by applying the proof of Lemma~\ref{lemma:bdqd}. For a fixed (augmented) instance (i.e., fixed instance and fixed random permutation), we are interested in computing the number of different behaviors as we vary $B, \alpha\,.$ To understand this, let us start by fixing $B$. Then, by Lemma~\ref{lemma:bdqd}, we know that there are at most $kT$ possible behaviors as we vary $\alpha\,.$ Now, for a fixed value of $B,$ for either algorithm, the sequence of pulls is determined, which exactly determines the loss in Stage 1. Thus, each value of $B$ corresponds to at most 1 new value of the loss. This implies that there are at most $kT$ possible behaviors in both stages for a fixed value of $B.$ Since there are at most $T$ values of $B,$ we have that there are at most $kT^2$ possible behaviors.
\end{proof}

\subsection{Cumulative Reward Best Arm Identification}
In the below sections, we provide comparable BAI guarantees to Section~\ref{sec:hybridguarantees} for cumulative reward instead of maximum reward. As before, we work in the standard improving‑bandits setting with $k$ arms, a known horizon $T$, and non‑decreasing concave reward functions $f_i$. Each algorithm Hybrid$_{\alpha,B}$ has two stages. Stage 1 uses a UCB-style envelope: At each step, the algorithm computes a lower bound $L_i(n)$ and terminal upper bound $U_i(n)$ on the final accumulated reward of every arm and pulls the arm with the largest optimistic estimate $U_i$. If the lower bound of one arm dominates the terminal upper bound of every other arm, Hybrid$_{\alpha,B}$ commits to this arm. 
If no commit occurs by time $B$, Stage 2 runs PTRR$_\alpha$ and finds an arm whose expected terminal reward is at least a substantial fraction of the best arm’s.\looseness-1 \par

For the terminal envelope, we define $L_i(t) := F_i(t) + (T - t) f_i(t),
\triangle_i(t) := \frac{(T - t)(T - t + 1)}{2} \, \gamma_i(t - 1),$ and $
U_i(t) := L_i(t) + \triangle_i(t), 
$
where $F_i(T) \coloneqq \sum_{t = 1}^T f_i(t)$ and $\gamma_i (t-1) := f_i(t) - f_i (t-1)$. We set $U_i(0) : = \infty$ to ensure first pulls. Using concavity and monotonicity, it is again straightforward to prove that $L_i(t) \leq F_i(T) \leq U_i(t)$ for all $i,t$.

\begin{algorithm}[h]
\caption{$\textit{Cumulative Hybrid}_{\alpha, B}$}\label{alg:cumhybrid}
\begin{algorithmic}[1]
\REQUIRE $m$
\STATE \textbf{Stage 1}: $t\gets0$ 
\FOR{each arm $i$}
\STATE $t_i\gets0$, $F_i\gets0$, $L_i\gets0$, $U_i\gets+\infty$
\ENDFOR
\WHILE{$t<B$}
  \FOR{each $i$ with $t_i\ge1$}
     \STATE $L_i \gets F_i + (T-t_i)\,f_i(t_i)$ \hfill 
     \STATE $\gamma_i \gets f_i(t_i)-f_i(t_i-1)$ \hfill 
     \STATE $U_i \gets L_i + \frac{(T-t_i)(T-t_i+1)}{2}\,\cdot \gamma_i$ 
  \ENDFOR
  \STATE $\hat i \gets \arg\max_i L_i$,\quad $U_{\mathrm{next}}\gets \max_{j\ne \hat i} U_j$
  \IF{$L_{\hat i} > U_{\mathrm{next}}$} 
     \RETURN $\hat i$.
  \ENDIF
  \STATE $i' \gets \arg\max_i (U_i - L_i)$
  \STATE \textbf{pull} $i'$;\; $t_{i'}\gets t_{i'}+1$, $t\gets t+1$, $F_{i'}\gets F_{i'}+f_{i'}(t_{i'})$
\ENDWHILE
\STATE \textbf{Stage 2}:
 $\tau' \gets (T - B) - k$, $m' \gets \left(\frac{\tau'}{T}\right) \cdot m$
\FOR{each $i$}
\STATE $g_i(s)\gets f_i(t_i + s)$
\ENDFOR
\RETURN $\hat i \gets \text{arm returned by } \textit{PTRR}_\alpha$ with parameters $(m', \tau')$ on $\{g_i\}$ for $T-B$ steps.

\end{algorithmic}
\end{algorithm}

\subsection{
Best-of-Both-Worlds Best arm identification guarantees}
In this section, we show that \textit{Cumulative Hybrid} contains algorithms that simultaneously (i) guarantee best arm identification on sufficiently benign instances, and (ii) preserve tight (up to constants) multiplicative bounds for approximating the best arm
on adversarial instances.\footnote{With additional information about stronger concavity, we achieve sharper bounds by using the corresponding version of PTRR.}

We start by defining a class of `sufficiently benign' instances and prove that our algorithm is guaranteed to return the best arm on all members of this class. If an instance is not in this class, Stage~2 pursues {\em approximate} BAI and runs $PTRR_\alpha$ with $\alpha$ dependent on the strength of concavity ($\alpha = 1$ works for {\em all} instances). Having reverted to an approximate goal, we identify an arm $\hat{i}$ whose final reward satisfies
$\mathbb{E}[f_{\hat{i}}(T)] \ge \Omega\left(k^{\frac{-\alpha}{1+\alpha}}\right) f^{\star}(T)$.  

As before, we assume for simplicity that both $T$ and $f^*(T)$ are known to the algorithm, which we use to set $\tau' = (T-B)-k$ and $m' = \frac{\tau'}{T} \cdot f^*(T)$. As in Section~\ref{sec:sharper CR}, these assumptions can be removed with only $O(\log k)$ overhead (see also Appendix \ref{appendix:unknownT}). 

We will now define a condition under which we can guarantee best arm identification.

\begin{definition}[Per-arm terminal budget, $h_i$]
For any arm $i$ and $\epsilon>0$, the terminal budget $h_i$ of arm $i$ is defined as
$
h_i(\epsilon) := \min\{\,n \in \{2,\ldots,T\} : \triangle_i(n) \le \epsilon\,\},
$
where $\triangle_i(t) := \frac{(T - t)(T - t + 1)}{2} \, \gamma_i(t - 1)$. 
\end{definition}

\begin{definition}[Best Arm Gap, $\Delta_I$]
For any instance $I$, its Best Arm Gap $\Delta_I$ is defined as
 $\Delta_I := \text{OPT}_T - \max_{j \ne i^*} F_j(T).$
\end{definition}

\begin{definition}[Gap Clearance Condition, $GCC(B)$]\label{def:GCC}
For any $B\le T$, we say that an instance satisfies the Gap Clearance Condition  $GCC(B)$ if $\Delta_I > 0$ and
$
\sum_{i=1}^K h_i(\Delta_I / 3) \;\le\; B.
$
\end{definition}

Under concavity, $\triangle_i(n)$ denotes the worst-case remaining terminal mass for arm $i$ after $n$ pulls: it is the most reward an adversary can ``hide in the tail'' by continuing with the last observed slope. 
The per-arm budget $h_i(\epsilon)$ is therefore the minimal number of pulls needed to ensure its optimistic terminal value is close to current lower envelope. 
Taking $\epsilon = \Delta_I/3$, once a suboptimal arm has been pulled this many times, its best possible continuation still loses to the best arm’s lower envelope. 
Thus $\mathrm{GCC}(B)$ states that the total work needed to certify the best arm fits within the mid-horizon budget $B$. 
If the sum exceeds $B$, concavity allows at least one suboptimal arm to remain plausibly optimal by time $B$, so no sound mid-horizon certificate can be guaranteed.

\begin{theorem}[best-of-both-worlds guarantees] \label{thm:hybrid-single}
Suppose an instance $I\in\mathcal{I}$ has Concavity Envelope Exponent $\beta_I\in(0,1]$. Algorithm \ref{alg:cumhybrid} with $\alpha \in(\beta_I, 1]$  satisfies the following properties:
\begin{enumerate}
[leftmargin=*,topsep=0pt,partopsep=1ex,parsep=1ex]\itemsep=-4pt
\item If the instance further satisfies $\Delta_I>0$, and $GCC(\theta)$ holds for $\theta\le T/2$, then the algorithm with $B\in (\theta,T/2]$  identifies and commits to the best arm $i^\star$ in Stage~1.
    \item If Stage~1 does not certify a best arm by time $B$, then Stage~2 finds an approximate best arm $\hat i$ such that 
    $\mathbb{E}\big[F_{\hat i}(T)\big]\ \ge\ \Omega{\left(k^{\frac{-\alpha}{1+\alpha}}\right)}\; F^\star(T),$ where the expectation is  over the randomness of the algorithm. 
\end{enumerate}
\end{theorem}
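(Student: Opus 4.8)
The plan is to prove the two parts separately, closely mirroring the proof of Theorem~\ref{thm:hybrid-single2}, since Algorithm~\ref{alg:cumhybrid} differs from the terminal version only in the definitions of the envelope quantities $L_i, U_i, \triangle_i$ (now tracking \emph{cumulative} rather than terminal reward) and in the benchmark $\Delta_I = \mathrm{OPT}_T - \max_{j\ne i^\star} F_j(T)$.

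For Part 1, I would first record the cumulative envelope $L_i(t)\le F_i(T)\le U_i(t)$, which the text asserts is straightforward. The lower bound follows from monotonicity, $F_i(T)=F_i(t)+\sum_{s>t}f_i(s)\ge F_i(t)+(T-t)f_i(t)=L_i(t)$, and the upper bound from concavity: each future increment satisfies $f_i(s)-f_i(t)\le (s-t)\gamma_i(t-1)$, and summing the arithmetic series $\sum_{s=t+1}^{T}(s-t)=\tfrac{(T-t)(T-t+1)}{2}$ recovers exactly $\triangle_i(t)$. Given this envelope, the certificate argument is identical to Theorem~\ref{thm:hybrid-single2}: $\mathrm{GCC}(\theta)$ forces $\sum_i h_i(\Delta_I/3)\le\theta$, and since Stage~1 always pulls the maximum-slack arm $\arg\max_i(U_i-L_i)=\arg\max_i\triangle_i$ and can never wrongly commit (the envelope rules out $L_j>\max_i U_i$ for any $j\ne i^\star$), by time $\theta\le B$ every arm has $\triangle_i(t_i)\le\Delta_I/3$. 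At that point $U_j\le F_j(T)+\Delta_I/3\le \mathrm{OPT}_T-\tfrac{2}{3}\Delta_I$ for each $j\ne i^\star$, while $L_{i^\star}\ge U_{i^\star}-\triangle_{i^\star}\ge \mathrm{OPT}_T-\tfrac{1}{3}\Delta_I$, so $L_{i^\star}>\max_{j\ne i^\star}U_j$ and the algorithm commits to $i^\star$.

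For Part 2, I would observe that Stage~2 runs $\textit{PTRR}_\alpha$ with the same parameters $(m',\tau')$ on the same residual curves $g_i(s)=f_i(t_i+s)$ as in Theorem~\ref{thm:hybrid-single2}, so that proof transfers verbatim to yield the terminal guarantee $\mathbb{E}[f_{\hat i}(T)]\ge\Omega(k^{-\alpha/(1+\alpha)})\,f^\star(T)$, where $f^\star(T)$ is the terminal value of the best cumulative arm. It then remains only to convert terminal reward into cumulative reward. I would use a two-sided concavity/monotonicity sandwich: every concave nondecreasing arm satisfies $\mathrm{LE}(1)$, so $F_i(T)=\sum_{t\le T}f_i(t)\ge\sum_{t\le T}\tfrac{t}{T}f_i(T)\ge\tfrac{T}{2}f_i(T)$, while monotonicity gives $F^\star(T)=\max_i F_i(T)\le T f^\star(T)$, i.e.\ $f^\star(T)\ge F^\star(T)/T$. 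Combining, $\mathbb{E}[F_{\hat i}(T)]\ge\tfrac{T}{2}\mathbb{E}[f_{\hat i}(T)]\ge\tfrac{T}{2}\,\Omega(k^{-\alpha/(1+\alpha)})\,\tfrac{F^\star(T)}{T}=\Omega(k^{-\alpha/(1+\alpha)})\,F^\star(T)$, as claimed.

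The genuinely new ingredients, relative to the terminal case, are the cumulative envelope and the reward conversion, and these are the two steps I would verify with care. The main point to check is that the quadratic slack $\triangle_i(t)=\tfrac{(T-t)(T-t+1)}{2}\gamma_i(t-1)$ remains monotone non-increasing in $t_i$ (so the greedy max-slack rule clears each arm within $h_i$ pulls and the $\sum_i h_i\le\theta$ accounting in Part~1 is valid); this holds because it is a product of two non-negative non-increasing factors, $(T-t)(T-t+1)$ and $\gamma_i(t-1)$ (the latter by concavity). Everything else—the constant-factor bookkeeping inside the $\textit{PTRR}_\alpha$ invocation and the $\Theta(T)$-tight agreement between cumulative and terminal reward used in the conversion—is routine and costs only constant factors, so I do not anticipate a substantive obstacle beyond faithfully re-deriving the envelope bounds for the quadratic $\triangle_i$.
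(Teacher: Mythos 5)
Your proposal is correct and follows essentially the same route as the paper: the certificate argument with the cumulative envelope $L_i(t)\le F_i(T)\le U_i(t)$ for Part 1, transferring the $\textit{PTRR}_\alpha$ terminal guarantee verbatim for Part 2, and then converting terminal to cumulative reward via $F_{\hat i}(T)\ge \tfrac{T}{2}f_{\hat i}(T)$ (concavity/LE$(1)$) together with $f^\star(T)\ge \mathrm{OPT}_T/T$ (monotonicity). Your explicit derivations of the cumulative envelope and of the monotonicity of the quadratic slack $\triangle_i$ are details the paper asserts as straightforward, and they check out.
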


\begin{proof}[Proof Sketch]
We argue (1) and (2) separately. (1) $\mathrm{GCC}(\theta)$ implies that the per–arm budgets $h_i(\epsilon)$ sum to at most $\theta$. As long as the certificate fails, Stage~1 pulls an arm with slack $>\Delta_I/3$, so these budgets are met within $B\ge\theta$ pulls. Then every suboptimal arm has $U_j\le \mathrm{OPT}_T- \frac{2\Delta_I}{3}$ and the best arm has $L_{i^\star}\ge \mathrm{OPT}_T- \frac{\Delta_I}{3}$, which implies that $L_{i^\star}\ge\max_{j\ne i^\star}U_j$. We commit to $i^\star$.
(2) If no certificate fires by $B\le T/2$, running $\textit{PTRR}_\alpha$ for $T_{\mathrm{rem}}$ steps yields an average reward within a $k^{\alpha/(1+\alpha)}$ factor of the residual optimum (up to constants). Using the fact that the best single pull is at least the average, and $\mathrm{OPT}^{\mathrm{res}}$ is at least a constant times $g^\star(T_{\mathrm{rem}})\,T_{\mathrm{rem}}$ by concavity, we get $\mathbb{E}[f_{\hat i}(T)]\ge \Omega\!\big(k^{-\alpha/(1+\alpha)}\big)\,f^\star(T)$. 
\end{proof}

\begin{proof}[Proof of 1. (certificate correctness)]
Suppose an instance $I$ satisfies \(\mathrm{GCC}(\theta)\). Since $\triangle_i(t)$ is non-increasing for all $i$ and
$\sum_{i=1}^k h_i(\Delta_I/3) \le \theta$, we know that there are at most $\theta$ total pulls (across arms) such that $\triangle_i(t_i) > \Delta_I/3$. Note that we can never have $L_{i}(t_i)  > \max_{j \ne i} U_j(t_j)$ for some $i \neq i^*$, as we know that $L_i(t) \le F_i(T) \le U_i(t)$, so this would imply $F_{i^*} (T) \leq U_{i^*}(t_{i^*}) < L_i (t_i) \leq F_{i}(T)$ (a contradiction). Since $B>\theta$ and Stage $1$ always pulls the arm $i$ that maximizes $(U_i - L_i)$, it follows that the algorithm will reach some point (namely $t = \theta$) where $\triangle_i(t_i) \leq \Delta_I/3$ for all $i$. At this point, for each $j \ne i^\star$, we have
$$
U_j(t_j) \le F_j(T) + \frac{\Delta_I}{3} \le \mathrm{OPT}_T - \Delta_I + \frac{\Delta_I}{3} = \mathrm{OPT}_T - \frac{2\Delta_I}{3}.
$$
Moreover, $i^*$ satisfies
$$
L_{i^\star}(t_{i^*})  \ge F_{i^\star}(T) - \Delta_I/3 = \mathrm{OPT}_T - \Delta_I/3.
$$ It follows that $L_{i^\star}(t_{i^*})  > \max_{j \ne i^\star} U_j(t_j)$, and therefore that the algorithm returns $i^*$ in Stage $1$.
\end{proof}

\begin{proof}[Proof of 2. (approximation fallback)]
Write $g^\star(h):=\max_i g_i(h)=\max_i f_i(t_i+h)$.
Let $T_\text{rem} := T-B$, and let $\tau' = T_\text{rem}-k$. Suppose $m' := (\tau'/T)f^{\star}(T)$. Since 
$g^{\star}(T) = \max_i f_i(T) \ge f^{\star}(T)$, we know that
$$
m' = \frac{\tau'}{T} f^{\star}(T) \le f^{\star}(T) \left( \frac{\tau'}{T} \right)^{\alpha} \le g^{\star}(T) \left( \frac{\tau'}{T} \right)^{\alpha}.
$$
Using monotonicity and the fact that $g^{\star}(T) \le 2 f^{\star}(T)$, we also know that $g^{\star}(\tau') \le 2 f^{\star}(T) = 2(T/\tau') m'$, and therefore that \ $m' \ge (\tau'/(2T)) g^{\star}(\tau')$. Since 
$B \le T/2$ and $T \ge 4k$, it follows that
$$
\frac{1}{8} g^{\star}(\tau') \le m' \le g^{\star}(T) \left( \frac{\tau'}{T} \right)^{\alpha}.
$$
Run \textit{PTRR}\(_\alpha\) for $T_\text{rem}$ steps with parameters $m'$ and $\tau'$. From the analysis of \textit{PTRR}\(_\alpha\) (Theorem \ref{thm:ptrr-alpha}), we know that
$$
\mathbb{E}\big[\mathrm{ALG}_{T_\text{rem}} \big]\ \ge\ 
\frac{\mathrm{OPT}^{\mathrm{res}}_{T_\text{rem}}}{C_\alpha c_2(k+1)^{\alpha/(1+\alpha)}},
$$
where $\mathrm{OPT}^{\mathrm{res}}_{T_{\mathrm{rem}}}$ denotes the optimal cumulative reward for $\{g_i\}$ over $T_{\mathrm{rem}}$ rounds and $C_\alpha = 2^{\alpha+2}(\alpha+1)$.

Now note that $\mathrm{OPT}^{\mathrm{res}}_{T_{\mathrm{rem}}}\ge \tfrac12g^\star(T_{\mathrm{rem}})T_{\mathrm{rem}}$, and that the algorithm’s maximum single‑pull reward dominates its average reward (Facts B.1 and B.3 in the appendix of \cite{pmlr-v272-blum25a}). Let $\hat i$ denote the arm that achieves this maximum single-pull reward, and note that

$$
\mathbb{E}\Big[\max_{t\le T_{\mathrm{rem}}}\text{reward}_t\Big]\ \ge\ \frac{1}{T_{\mathrm{rem}}}\mathbb{E}\big[\mathrm{ALG}_{T_{\mathrm{rem}}}\big]\ \ge\ \frac{g^\star(T_{\mathrm{rem}})}{2C_\alpha c_2(k+1)^{\alpha/(1+\alpha)}}.
$$

Since $f_{\hat i}(T)\ge \max_{t\le T_{\mathrm{rem}}}\text{reward}_t$ by monotonicity, we know that taking expectations gives
$$
\mathbb{E}\big[f_{\hat i}(T)\big]\ \ge\ \frac{g^\star(T_{\mathrm{rem}})}{2C_\alpha c_2(k+1)^{\alpha/(1+\alpha)}}.
$$
Monotonicity and concavity give $g^\star(T_{\mathrm{rem}})\ge (T_{\mathrm{rem}}/T)f^\star(T)\ge \tfrac12 f^\star(T)$, as $T_{\mathrm{rem}}\ge T/2$. Combining with $c_2\le 8$ from Step~1, it follows that
$$
\mathbb{E}\big[f_{\hat i}(T)\big]\ \ge\ \frac{1}{2C_\alpha c_2(k+1)^{\alpha/(1+\alpha)}}\cdot \frac{1}{2}f^\star(T)\ \ge\ \frac{1}{2^{\alpha+7}(\alpha+1)}(k+1)^{-\alpha/(1+\alpha)}f^\star(T),
$$
as desired.

Finally, in terms of cumulative reward of the chosen vs. best arms:
$$
\E{}{F_{\hat{i}}(T)} \ge \frac T2 \E{}{f_{\hat{i}}(T)} \ge \frac T2 \frac{1}{2^{\alpha+7}(\alpha+1)}(k+1)^{-\alpha/(1+\alpha)}f^\star(T) \ge \mathrm{OPT}_T \frac{1}{2^{\alpha+8}(\alpha+1)}(k+1)^{-\alpha/(1+\alpha)}
$$
\end{proof}

\subsection{Sample Complexity for tuning $\alpha,B$ }
\label{sec:hybridcomplexity}

In the previous sections, we described two hybrid algorithms which each proceed as follows: first, they spend a portion of the time attempting to solve exact best-arm identification (BAI) and then they revert to approximate best-arm identification if it is not successful. These algorithms differ only in the implementation of lines 6,7, and 8, i.e., in how the tracked variables are defined.  In general, instances could lie anywhere along the spectrum of easy-to-identify to worst-case. Is there a better way to decide how {\em much} time to allocate to attempting exact BAI and when to switch to worst-case approximate BAI? Suppose the instances we see have some stationarity. In particular, if we see historical examples of instances arising from a certain task, and we are expected to complete that task with a good algorithm in the future, we might model the instances as having been drawn from a distribution. In that case, we could hope to {\em learn} to what degree it is worth attempting exact BAI. In particular, we could hope to learn the time after which we should switch from the exact goal to the relaxed goal. We show that we can pick a switch time and a PTRR parameter with polynomially many samples from the distribution over instances that is almost as good as the optimal such time and parameter {\em on average} over the instances. Here, unlike in the previous sections, we require a distributional assumption to hold and access to samples from the distribution, and we achieve an {\em on average} rather than per instance guarantee. However, we adapt to the data distribution. \par

We study the fully hybrid algorithm family defined in Definition~\ref{def:fullyhybridfam}. Each algorithm in this family has two parameters: the first is the time $B$ at which it switches from the exact BAI goal to the approximate BAI goal, and the second specifies which $PTRR_\alpha$ algorithm it runs {\em after} switching to the approximate BAI goal. Our goal is to learn from a set of offline instances which value of $B$ and $\alpha$ gets the best performance on a new instance from the same distribution.
In order to analyze this, we can apply the same set of data-driven algorithm design tools as before. Once again, we need to understand for a fixed instance, how many possible behaviors an algorithm from the family could have in terms of loss. To this end, it remains to (1) bound $Q_\mathcal{D}$ for this problem and (2) investigate various reasonable $H$-bounded losses. We do this next.

\vspace{-4mm}

\paragraph{Bounding derandomized dual complexity in our setting.}
We bound $Q_\mathcal{D}\,$ by computing the number of possible behaviors of algorithms from $\mathcal{H}\,.$ 

\begin{lemma} \label{lemma:bdqd-hybrid}
    For the family $\mathcal{B}$ defined in Defn.~\ref{defn:alg-fam}, the improving multi-armed bandits problem, and {\em any} piecewise-constant loss function, $Q_\mathcal{D} \le k\,T^2$\,.
\end{lemma}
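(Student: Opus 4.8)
The plan is to mirror the sufficient-statistic argument of Lemma~\ref{lemma:bdqd}, now accounting for the second parameter $B$. I would first fix an augmented instance $(I,\pi_k)$ and observe that Stage~1 of $\textit{Hybrid}_{\alpha,B}$ is entirely determined by the instance: the envelopes $L_i,U_i$ and the slack-maximizing pull rule depend only on the observed rewards, not on $\alpha$, and $B$ enters only as the time at which the stage terminates. Consequently there is a fixed sequence of Stage-1 pulls and a fixed commit time $t_c \in [T]\cup\{\infty\}$ (the first time the certificate $L_{\hat i}>U_{\mathrm{next}}$ fires, if ever), both independent of $\alpha$ and $B$. I would then define the loss-determining statistic to consist of $B$ together with the profile of per-arm stop times produced in Stage~2.

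Next I would split on the value of $B$. If $B \ge t_c$, the algorithm commits inside Stage~1 to the same arm for every such $B$, so the loss takes a single value, independent of $\alpha$. If $B < t_c$, the algorithm enters Stage~2 at time $B$; since the Stage-1 pull sequence is fixed, the residual state (the per-arm counts $t_i$, the residual functions $g_i(s)=f_i(t_i+s)$, and the parameters $\tau'=(\tau-B)-k$, $m'=(\tau'/T)m$) is a deterministic function of $B$ alone. On this fixed residual instance, Stage~2 runs $\textit{PTRR}_\alpha$, and the argument of Lemma~\ref{lemma:bdqd} applies verbatim: the Stage-2 behavior is captured by the tuple of per-arm stop times, and as $\alpha$ ranges over $(0,1]$ this tuple takes at most $kT$ distinct values. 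Indeed, for each arm $i$ and pull count $s$, the sign of $g_i(s)-m'(s/\tau')^\alpha$ changes at most once in $\alpha$ (by monotonicity of $(s/\tau')^\alpha$), yielding at most $k(T-B)\le kT$ critical values of $\alpha$ between which the loss is constant.

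Finally I would combine the counts. There are at most $T$ distinct values of $B\in[T]$; the commit case contributes a single loss value, and each Stage-2 case contributes at most $kT$ loss values as $\alpha$ varies. Hence the total number of distinct loss-determining statistics, and therefore distinct loss values, is at most $T\cdot kT = kT^2$, giving $Q_\mathcal{D}\le kT^2$.

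I expect the main obstacle to be justifying cleanly that Stage~1 is independent of $\alpha$ and that the switch-time state is a well-defined deterministic function of $B$, so that the per-$B$ residual instance is genuinely fixed before invoking Lemma~\ref{lemma:bdqd}. Once this decoupling of the two parameters is established, the bound is simply the product of the $T$ choices for $B$ and the $kT$ Stage-2 behaviors inherited from the single-parameter analysis.
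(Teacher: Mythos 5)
Your proposal is correct and follows essentially the same route as the paper's proof: fix $B$, observe that Stage~1's pull sequence (and hence its contribution to the loss) is determined independently of $\alpha$, invoke the Lemma~\ref{lemma:bdqd} argument to get at most $kT$ Stage-2 behaviors as $\alpha$ varies, and multiply by the at most $T$ choices of $B$. Your version is somewhat more careful than the paper's (the explicit commit-time case split and the sign-change argument in $\alpha$), but the decomposition and the resulting $k T^2$ count are the same.
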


\vspace{-4mm}

\paragraph{Sample complexity results.}
As in Section~\ref{sec:sample-complexity}, we combine Thm.~\ref{thm:ss25main} and Lemma~\ref{lemma:bdqd-hybrid} to derive sample complexity results for generic $H$-bounded loss functions. We then instantiate it for both BAI and regret loss functions. 

\begin{theorem} \label{thm:hybrid-alg-sc}
    For the Hyperparameter Transfer setting for tuning $\alpha,B$ in Algorithm \ref{alg:hybrid} 
    with generic $H$-bounded loss, $O\left( \left(\frac{H}{\epsilon}\right)^2 (\log kT + \log \frac 1 \delta )  \right)$ instances drawn from $\mathcal{D}$ suffice to get the uniform convergence guarantee in Theorem~\ref{thm:ss25main}.\looseness-1
\end{theorem}

Finally, we instantiate the loss function for BAI. We can study ``maximum pull regret,'' and this loss is $H$-bounded for $H = m\,.$ Note that since $m$ will affect the sample complexity, we need an upper bound on it to know how many samples to draw.

\begin{definition}
    Define  ``maximum pull regret'' as $R_{mp}(T) \coloneqq \max_{i \in [k]} f_i(T) - \max_{i\in [k]} f_i(t_i)\,,$ where $t_i$ is the number of rounds for which the algorithm played arm $i\,.$\looseness-1
\end{definition}

Thus, with knowledge of an upper bound for the best pull of the best arm, we have that:

\begin{corollary}
    For the Hyperparameter Transfer setting for the improving multi-armed bandits problem optimizing for averaged regret, $N = O\left( \left(\frac{m}{\epsilon}\right)^2 (\log kT + \log \frac 1 \delta )  \right)$ instances drawn from $\mathcal{D}$ suffice to get the uniform convergence guarantee in Theorem~\ref{thm:ss25main} for the $Hybrid_{\alpha, B}$ algorithm family.\looseness-1
\end{corollary}

Thus, with polynomially many samples from a distribution over instances, we can achieve near-optimal loss on a new instance drawn from the same distribution.\looseness-1

\end{document}